\documentclass{article}


\usepackage[final,nonatbib]{neurips_2022}




\usepackage[utf8]{inputenc} 
\usepackage[T1]{fontenc}    
\usepackage{hyperref}       
\usepackage{url}            
\usepackage{booktabs}       
\usepackage{amsfonts}       
\usepackage{nicefrac}       
\usepackage{microtype}      
\usepackage{xcolor}         
\usepackage{wrapfig}

\usepackage{graphicx}
\usepackage{subfigure}
\usepackage{array}
\usepackage{enumitem}


\usepackage{amsmath}
\usepackage{amsthm}
\usepackage{amssymb}
\usepackage{mathtools}
\usepackage{color}
\usepackage[ruled,vlined]{algorithm2e}
\usepackage{dsfont}
\usepackage{footnote}
\makesavenoteenv{tabular}
\usepackage{multirow}


\newcommand{\ba}{\left[ \begin{array}}
\newcommand{\ea}{\\ \end{array} \right]}


\def\x{{\boldsymbol{x}}}
\def\y{{\boldsymbol{y}}}

\newcommand{\vg}{{\mathbf{g}}}

\newcommand{\vx}{{\mathbf{x}}}
\newcommand{\vy}{{\mathbf{y}}}

\newcommand{\vS}{{\mathbf{S}}}

\newcommand{\cA}{{\mathcal{A}}}

\newcommand{\cF}{{\mathcal{F}}}

\newcommand{\cN}{{\mathcal{N}}}
\newcommand{\cO}{{\mathcal{O}}}

\newcommand{\cR}{{\mathcal{R}}}

\newcommand{\cW}{{\mathcal{W}}}
\newcommand{\cX}{{\mathcal{X}}}

\newcommand{\bx}{{\boldsymbol{x}}}



\newcommand{\EE}{\mathbb{E}}
\newcommand{\PP}{\mathbb{P}}
\newcommand{\RR}{\mathbb{R}}

\newcommand{\dist}{{\mathrm{dist}}}

\newcommand{\one}{\mathds{1}}
\newcommand{\diag}{{\mathrm{diag}}} 

\newcommand{\vvvert}{{\vert\kern-0.25ex\vert\kern-0.25ex\vert}}


\newcommand{\prog}{\mathrm{prog}}

\newtheorem{theorem}{Theorem}
\newtheorem{definition}{Definition}

\newtheorem{remark}{Remark}
\newtheorem{lemma}{Lemma}
\newtheorem{proposition}{Proposition}

\newcommand{\tablefontsize}{\scriptsize}

\providecommand{\psgd}{PSGD\xspace}
\providecommand{\dsgd}{DSGD\xspace}
\providecommand{\dsgt}{DSGT\xspace}
\providecommand{\dsquare}{D$^2$\xspace}
\providecommand{\ours}{MG-DSGD\xspace}
\providecommand{\detag}{DeTAG\xspace}
\providecommand{\cifar}{CIFAR-10\xspace}
\providecommand{\imagenet}{ImageNet\xspace}

\title{Revisiting Optimal Convergence Rate for Smooth and Non-convex Stochastic Decentralized Optimization}

%

\author{%
Kun Yuan$^{1,3}$\thanks{Equal Contribution. Corresponding Author: Kun Yuan},~ Xinmeng Huang$^{2*}$, Yiming Chen$^{1,4*}$, Xiaohan Zhang$^2$, Yingya Zhang$^{1}$, Pan Pan$^1$ \vspace{1mm}\\ 
    $^1$DAMO Academy, Alibaba Group\quad $^2$University of Pennsylvania \\
    $^3$Peking  University\quad $^4$MetaCarbon  \vspace{1mm}\\
{\small\texttt{\{kun.yuan, yingya.zyy, panpan.pp\}@alibaba-inc.com}}\\{\small\texttt{xinmengh@sas.upenn.edu\quad yiming@metacarbon.vip \quad  zxiaohan@seas.upenn.edu}} 
}

\begin{document}

\maketitle

\begin{abstract}
Decentralized optimization is effective to save communication in large-scale machine learning. Although numerous algorithms have been proposed with theoretical guarantees and empirical successes, the performance limits in decentralized optimization, especially the influence of network topology and its associated weight matrix on the optimal convergence rate, have not been fully understood. While Lu and Sa \cite{lu2021optimal} have recently provided an optimal rate for non-convex stochastic decentralized optimization with weight matrices defined over linear graphs, the optimal rate with {\em general} weight matrices remains unclear. 

This paper revisits non-convex stochastic decentralized optimization and establishes an optimal convergence rate with general weight matrices. In addition, we also establish the optimal rate when non-convex loss functions further satisfy the Polyak-Lojasiewicz (PL) condition. Following existing lines of analysis in literature cannot achieve these results. Instead, we leverage the Ring-Lattice graph to admit general weight matrices while maintaining the optimal relation between the graph diameter and weight matrix connectivity. Lastly, we develop a new decentralized algorithm to nearly attain the above two optimal rates under additional mild conditions. 
\end{abstract}

\section{Introduction}

\textbf{1.1~Motivation.} Decentralized optimization is an emerging paradigm for large-scale machine learning. By letting each node average with its neighbors, decentralized algorithms save great communication overhead compared to traditional approaches with a central server. While numerous effective decentralized algorithms (see, e.g., \cite{nedic2009distributed,chen2012diffusion,shi2015extra,lian2017can,assran2019stochastic,lin2021quasi,yuan2021decentlam}) have been proposed in literature showing both theoretical guarantees and empirical successes, 
the performance limits in decentralized optimization have not been fully clarified. This paper provides further understandings in optimal convergence rate in decentralized optimization and develops algorithms to achieve these rates. 

\textbf{Weight matrix and connectivity measure.} Assume $n$ computing nodes are connected by some network topology (graph). We associate the topology with a weight matrix $W = [w_{ij}]_{i,j=1}^n \in \RR^{n\times n}$ in which $w_{ij} \in (0,1)$ if node $j$ is connected to node $i$ otherwise $w_{ij} = 0$. We introduce $\beta:= \|W - \frac{1}{n}\mathds{1}_n \mathds{1}_n^T\| \in (0,1)$, where $\mathds{1}_n=(1,\dots,1)^T \in\RR^n$ has all entries being $1$, as the {\em connectivity measure} to gauge how well the network topology is connected. Quantity $\beta \to 0$ (which implies $W \to \frac{1}{n}\mathds{1}_n \mathds{1}_n^T$) indicates a well-connected topology while $\beta \to 1$ (which implies $W \to I$) indicates a badly-connected topology. It is the weight matrix $W$ and its connectivity measure $\beta$ that bring major challenges to convergence analysis in decentralized optimization. 

\textbf{Decentralized optimization.} 
Decentralized approaches are built upon the  partial averaging $x_i^{+} = \sum_{j\in \cN_i}w_{ij} x_j$ (or $x^+ = W x$ in a more compact manner) where $\cN_i$ is the set of neighbors of node $i$ (including node $i$ itself). The sparsity and connectivity of $W$ significantly influence the efficiency and effectiveness of the partial averaging. Decentralized gradient descent \cite{nedic2009distributed,yuan2016convergence}, diffusion \cite{chen2012diffusion,sayed2014adaptive} and dual averaging \cite{duchi2011dual} are well-known decentralized methods. Other advanced algorithms, which can correct the bias caused by heterogeneous data distributions, include decentralized ADMM \cite{mateos2010distributed,shi2014linear}, explicit bias-correction \cite{shi2015extra,yuan2017exact1,li2017decentralized}, gradient tracking \cite{nedic2017achieving,di2016next,qu2018harnessing,xu2015augmented}, and dual acceleration \cite{scaman2017optimal,uribe2020dual}.

In the stochastic regime, decentralize SGD \cite{chen2012diffusion,sayed2014adaptive,lian2017can,koloskova2020unified} and its momentum and adaptive variants \cite{lin2021quasi,yuan2021decentlam,nazari2019dadam} have attracted a lot of attentions. After being extended to directed topologies \cite{assran2019stochastic,ying2021exponential}, time-varying topologies \cite{koloskova2020unified,nedic2014distributed,wang2019matcha}, asynchronous settings \cite{lian2018asynchronous}, and data-heterogeneous scenarios \cite{tang2018d,xin2020improved,lu2019gnsd,alghunaim2021unified,koloskova2021improved}, decentralize SGD has much more applicable scenarios with significantly improved performance. Decentralized algorithms are also closely related to federated learning methods \cite{mcmahan2017communication,stich2019local,yu2019linear,karimireddy2020scaffold,li2019convergence} when they admit weight matrices alternating between $\frac{1}{n}\mathds{1}_n\mathds{1}_n^T$ and the identity matrix. 

\textbf{Prior understandings in theoretical limits.} A series of pioneering works have shed lights on the theoretical limits in decentralized optimization. These works establish the optimal convergence rate for convex or non-stochastic decentralized optimization \cite{scaman2017optimal,scaman2018optimal,sun2019distributed,kovalev2021lower}, and propose decentralized algorithms that (nearly) match these optimal bounds \cite{scaman2017optimal,scaman2018optimal,uribe2020dual,sun2019distributed,kovalev2020optimal,kovalev2021lower}. However, there are few studies on the theoretical limits in non-convex stochastic decentralized optimization, which is quite  common  
in large-scale deep learning, even when the costs are smooth. 

\begin{table}[t]
\centering 
\caption{\small Rate comparison between different algorithms in smooth and non-convex stochastic decentralized optimization. Parameter $n$ denotes the number of all computing nodes, $\beta\in [0,1)$ denotes the connectivity measure of the weight matrix, $\sigma^2$ measures the gradient noise, $b^2$ denotes data heterogeneity, and $T$ is the number of iterations. Other constants such as the initialization $f(x^{(0)}) - f^\star$ and smoothness constant $L$ are omitted for clarity. Detailed complexities with constant $L$ are listed in Table \ref{table-non-convex-app} in Appendix \ref{app-detailed-comp}. The definition of transient iteration complexity can be found in Remark \ref{rm-transient-iteration} (the smaller the better). ``LB'' is lower bound while ``UB'' is upper bound. Notation $\tilde{O}(\cdot)$ hides all logarithm factors.}
\begin{tabular}{rllll}
\toprule
                    & \textbf{References}                                                                           & \textbf{Gossip matrix}                     & \textbf{Convergence rate}    & \hspace{-2mm} \textbf{Tran. iters.}\\ \midrule
\multirow{2}{*}{LB} & \cite{lu2021optimal}                                                & $\beta = \cos(\pi/n)$ &         $\Omega\big( \frac{\sigma}{\sqrt{nT}}+ \frac{1}{T (1-\beta)^{\frac{1}{2}}} \big)$              &            $O(\frac{n}{(1-\beta)\sigma^2})$                            \vspace{1mm}   \\
                             & {\color{blue}Theorem \ref{thm-lower-bound-nc}}                                                                           & {\color{blue}${\beta \in [0, \cos(\pi/n)]}$}       &          {\color{blue}$\Omega\big( \frac{\sigma}{\sqrt{nT}}+ \frac{1}{T (1-\beta)^{\frac{1}{2}}} \big)$}            &            {\color{blue}$O(\frac{n}{(1-\beta)\sigma^2})$}                                    \\ \midrule
\multirow{5}{*}{UB} & DSGD \cite{koloskova2020unified}                                    & $\beta \in [0, 1)$           &  \hspace{-1.2cm}    $O\big( \frac{\sigma}{\sqrt{nT}} \hspace{-1mm}+\hspace{-1mm} \frac{\sigma^{\frac{2}{3}}}{T^{\frac{2}{3}}(1-\beta)^{\frac{1}{3}}} \hspace{-1mm}+\hspace{-1mm} \frac{b^{\frac{2}{3}}}{T^{\frac{2}{3}}(1-\beta)^{\frac{2}{3}}} \big)$                &                    ${O}\big(\frac{n^3}{(1-\beta)^2\sigma^2}\big)^\dagger$          \vspace{1mm}    \\
                             & D$^2$/ED \cite{tang2018d}                   & $\beta \in [0, 1)$           &        $O\big( \frac{\sigma}{\sqrt{nT}} + \frac{1}{T (1-\beta)^3} \big)$                &       ${O}(\frac{n}{(1-\beta)^6\sigma^2})$                                          \vspace{1mm}\\
                             & DSGT \cite{koloskova2021improved} & $\beta \in [0, 1)$           & $\tilde{O}\big( \frac{\sigma}{\sqrt{nT}} + \frac{\sigma^{\frac{2}{3}}}{T^{\frac{2}{3}}(1-\beta)^{\frac{1}{3}}}  \big)$  &        $\tilde{O}(\frac{n^3}{(1-\beta)^2\sigma^2})$                                     \vspace{1mm}     \\
                             & DeTAG \cite{lu2021optimal}                                          & $\beta \in [0, 1)$           & $\tilde{O}\big( \frac{\sigma}{\sqrt{nT}}+ \frac{1}{T (1-\beta)^{\frac{1}{2}}} \big)$ &                  $\tilde{O}(\frac{n}{(1-\beta)\sigma^2})$                           \vspace{1mm}   \\
                             & {\color{blue}MG-DSGD}                                                                              & {\color{blue}$\beta \in [0, 1)$}           & {\color{blue}$\tilde{O}\big( \frac{\sigma}{\sqrt{nT}}+ \frac{1}{T (1-\beta)^{\frac{1}{2}}}\big)$} &                       {\color{blue}$\tilde{O}(\frac{n}{(1-\beta)\sigma^2})$}                          \\ \bottomrule
\multicolumn{5}{l}{$^\dagger$\,\footnotesize{The complete complexity is ${O}(\frac{n^3}{(1-\beta)^2\min\{\sigma^2,(1-\beta)^2 b^2\}})$, which reduces to ${O}(\frac{n^3}{(1-\beta)^2\sigma^2})$ for small $\sigma^2$.}}
\end{tabular}
\vspace{-6mm}
\label{table-non-convex}
\end{table}

A recent novel work \cite{lu2021optimal} establishes the optimal complexity for smooth and non-convex stochastic decentralized optimization. While this optimal bound is inspiring, it is valid for a  restrictive family of weight matrix $W$ associated with the linear graph whose connectivity measure $\beta = \cos(\pi/n)$\footnote{See 
Corollary 1 in the latest arXiv version of \cite{lu2021optimal} uploaded in Jan. 2022: arXiv:2006.08085v4.}.
However, the linear graph is just one of the commonly used topologies. A more difficult but fundamental question still remains open: 

{\em What is the optimal convergence rate of smooth and non-convex stochastic decentralized optimization with a general weight matrix (that does not necessarily satisfy $\beta = \cos(\pi/n)$)?}

It is not known whether the lower bound in \cite{lu2021optimal} still holds for other commonly-used weight matrices resulted from grids, toruses, hypercubes, etc. whose connectivity measure $\beta \neq \cos(\pi/n)$. 

\textbf{Challenges.} Following existing lines of analysis, to our knowledge, cannot provide an assuring answer. It is known that the communication complexity in decentralized optimization is typically proportional to the diameter $D$ of the network topology \cite{scaman2017optimal}. Establishing the relation between $D$ and $\beta$ is key to clarifying the influence of the weight matrix on optimal convergence rate.  Existing works \cite{scaman2017optimal,scaman2018optimal,lu2021optimal} utilize the linear graph to establish the optimal relation as $D = \Theta(1/\sqrt{1-\beta})$. However, weight matrices associated with linear graph are quite {\em limited}. Given 
a fixed network size $n$ (which is a prerequisite when discussing complexities in distributed stochastic optimization \cite{lu2021optimal,Horvath2019StochasticDL,Koloskova2020AUT,fang2018spider}), they have to satisfy $\beta = \cos(\pi/n)$. Weight matrices with other $\beta$ cannot be implied from the linear graph. To examine the lower bounds with a more general weight matrix $W$, \cite{sun2019distributed} proposes a wider class of networks named linear-star graph, which, however, leads to {\em suboptimal} relation between $D$ and $\beta$. As a result, a network topology that admits general weight matrices $W$ while maintaining the optimal relation between $D$ and $\beta$ is critical to clarify the above fundamental question.

\textbf{1.2 Main results.}  This paper successfully identifies the desired network topology which facilitates the theoretical limits over the general weight matrices. Our contributions are summarized as follows.

\begin{itemize}[leftmargin=1em]
    \item We discover that ring-lattice graph maintains the optimal relation between $D$ and $\beta$ for a much broader class of weight matrices. Given any fixed $n$ and  $\beta \in [0, \cos(\pi/n)]$, we can always construct a weight matrix $W\in \RR^{n\times n}$ associated with some ring-lattice graph, whose connectivity measure is $\beta$, that satisfies $D = \Theta(1/\sqrt{1-\beta})$. Since $\cos(\pi/n)$ approaches to $1$ when $n$ is large, the ring-lattice graph admits far broader weight matrices than linear graph used in \cite{scaman2017optimal,scaman2018optimal,lu2021optimal}. 
    
    \item With the above results, we provide the optimal convergence rate for smooth and non-convex stochastic decentralized optimization that holds for any weight matrix with $\beta \in [0, \cos(\pi/n)]$. 

    In contrast, the optimal rate in \cite{lu2021optimal} only applies to weight matrix satisfying $\beta  = \cos(\pi/n)$. We also provide the optimal convergence rate when the non-convex cost functions further satisfy the Polyak-Lojasiewicz (PL) condition \cite{karimi2016linear} for any weight matrix with $\beta \in [0, \cos(\pi/n)]$.  
    
    \item We prove that the above two optimal complexities can be {nearly-attained} (under mild assumptions and up to {the condition number $L/\mu$ as well as} other logarithmic factors) by simply integrating {multiple gossip communication \cite{liu2011accelerated,rogozin2021towards}} and {gradient accumulation \cite{scaman2017optimal,Rogozin2021AnAM,lu2021optimal}} to the vanilla decentralized SGD (D-SGD) algorithm. Compared to DeTAG \cite{lu2021optimal}, the extended D-SGD avoids the gradient tracking steps and saves half of the communication overheads. {Compared to DAGD \cite{Rogozin2021AnAM}, the extended D-SGD achieves the near-optimality using constant mini-batch sizes and has theoretical guarantees in the non-convex (as well as the PL) scenario.}
\end{itemize}

All established results in this paper as well as those of existing state-of-the-art decentralized algorithms are listed in Tables \ref{table-non-convex} and \ref{table-non-convex-PL}. The smoothness constant $L$ and PL constant $\mu$ are
omitted in these tables for clarity. The detailed complexities of these algorithms with constant $L$ and $\mu$ are listed in {Tables \ref{table-non-convex-app} and \ref{table-non-convex-PL-app} in Appendix \ref{app-detailed-comp}}.

\begin{table}[t]
\centering 
\caption{\small Rate comparison between different algorithms in smooth and non-convex stochastic decentralized optimization under the PL condition. Constants such as the initialization $f(x^{(0)}) - f^\star$ and smoothness constant $L$ are omitted for clarity. Detailed complexities with constants $L$ and $\mu$ are listed in Table \ref{table-non-convex-app} in Appendix \ref{app-detailed-comp}.}
\begin{tabular}{rllll}
\toprule
& \textbf{References}                                                                           & \textbf{Gossip matrix}                     & \textbf{Convergence rate}    & \hspace{-2mm} \textbf{Tran. iters. complexity}\\ \midrule
LB
& {\color{blue}Theorem \ref{thm-lower-bound-pl}}                                                                           & {\color{blue}$\beta \in [0, \cos(\pi/n)]$}       &          {\color{blue}$\Omega\big( \frac{\sigma^2}{nT}+ e^{-T(1-\beta)^{\frac{1}{2}}} \big)$}            &            {\color{blue}$\tilde{O}(\frac{1}{(1-\beta)^{1/2}})$}                                    \\ \midrule
\multirow{5}{*}{UB} & DSGD \cite{koloskova2020unified}$^\dagger$                                    & $\beta \in [0, 1)$           &  \hspace{-1.2cm}    $\tilde{O}\big( \frac{\sigma^2}{nT} \hspace{-0.5mm}+\hspace{-0.5mm} \frac{\sigma^2}{T^2(1-\beta)} \hspace{-0.5mm}+\hspace{-0.5mm} \frac{b^2}{T^2 (1-\beta)^{2}} \big)$                &                    $\tilde{O}(\frac{n}{(1-\beta)\min\{1,(1-\beta) b^2\}})$           \vspace{1mm}    \\
& {DAGD\cite{Rogozin2021AnAM}$^\dagger$$^\ddag$}                 & {$\beta \in [0, 1)$}       &        \hspace{-1.2cm}  {$\tilde{O}\big( \frac{\sigma^2}{nT}+  e^{-T(1-\beta)^{\frac{1}{2}}} \big)$}            &        {$\tilde{O}(\frac{1}{(1-\beta)^{1/2}})$}                                 \vspace{1mm}    \\
 & D$^2$/ED \cite{alghunaim2021unified,yuan2021removing}                   & $\beta \in [0, 1)$           &      \hspace{-1.2cm}    $\tilde{O}\big( \frac{\sigma^2}{nT} \hspace{-0.5mm}+\hspace{-0.5mm} \frac{\sigma^2}{T^2(1-\beta)} \hspace{-0.5mm}+\hspace{-0.5mm} e^{-T(1-\beta)} \big)$               &       $\tilde{O}(\frac{n}{1-\beta})$                                          \vspace{1mm}\\
 & DSGT \cite{alghunaim2021unified} & $\beta \in [0, 1)$           &  \hspace{-1.2cm}    $\tilde{O}\big( \frac{\sigma^2}{nT} \hspace{-0.5mm}+\hspace{-0.5mm} \frac{\sigma^2}{T^2(1-\beta)} \hspace{-0.5mm}+\hspace{-0.5mm} e^{-T(1-\beta)} \big)$       &        $\tilde{O}(\frac{n}{1-\beta})$                                              \vspace{1mm}     \\
 & DSGT \cite{xin2020improved}                                          & $\beta \in [0, 1)$    &       \hspace{-1.2cm}    $\tilde{O}\big( \frac{\sigma^2}{nT} \hspace{-0.5mm}+\hspace{-0.5mm} \frac{\sigma^2}{T^2(1-\beta)^3} \hspace{-0.5mm}+\hspace{-0.5mm} e^{-T(1-\beta)} \big)$  &                  $\tilde{O}(\frac{n}{(1-\beta)^3})$                           \vspace{1mm}   \\
 & {\color{blue}MG-DSGD}                                                                              & {\color{blue}$\beta \in [0, 1)$}           & {\color{blue}$\tilde{O}\big( \frac{\sigma^2}{nT}+ e^{-T(1-\beta)^{\frac{1}{2}}} \big)$}   &                       {\color{blue}$\tilde{O}(\frac{1}{(1-\beta)^{1/2}})$}                       \\ \bottomrule  
\multicolumn{5}{l}{$^\dagger$\,\footnotesize{This rate is derived under the strongly-convex assumption.}}\\
\multicolumn{5}{l}{$^\ddag$\,\footnotesize{This rate is achieved by utilizing increasing (non-constant) mini-batch sizes.}}
\end{tabular}
\label{table-non-convex-PL}
\end{table}

\textbf{1.3 Other related works.} Some other related works are discussed as follows.

\textbf{Lower bounds and optimal algorithms.} The lower bounds and optimal algorithms for stochastic and convex optimization have been intensively studied in \cite{lan2012optimal,lan2018optimal,rakhlin2012making,agarwal2009information,diakonikolas2019lower}. References \cite{carmon2020lower,carmon2021lower} established the lower bounds to find stationary solutions for  non-convex optimization, and \cite{fang2018spider,allen2018make} proposed algorithms with near-optimal convergence rate for non-convex finite-sum minimization. In deterministic decentralized optimization, the optimal convergence rate was established for strongly convex optimization \cite{scaman2017optimal}, non-smooth convex optimization \cite{scaman2018optimal}, and non-convex  optimization \cite{sun2019distributed}. Furthermore, the lower bounds and optimal algorithms over time-varying topologies are studied in \cite{kovalev2020optimal,Rogozin2021AnAM,rogozin2021towards}. 
In stochastic decentralized optimization, works \cite{yuan2021removing} and \cite{koloskova2020unified} established the lower bounds for strongly-convex decentralized SGD. 
{\cite{lu2021optimal} proves the optimal complexities for non-convex problems when weight matrices satisfy $\beta =\cos(\pi/n)$. }

\textbf{Ring-lattice Graph.} The ring-lattice graph was used in \cite{watts1999small,hayes2000computing,Chen2022OnWG} as a basis model to generate the small world graph. However, its relation between graph diameter and connectivity measure is not clarified to our knowledge. A related work \cite{wu2012robustness} examines the robustness of the ring-lattice graph, but does not discuss its connectivity measure. Furthermore, ring-lattice has never been utilized to establish the influence of the weight matrix on the optimal rate in decentralized optimization before.

\section{Problem setup}
\label{sec-setup}
\textbf{Problem setup.} Consider the following problem with a network of $n$ computing nodes:
\begin{align}\label{dist-opt}
\setlength{\abovedisplayskip}{3pt}\setlength{\belowdisplayskip}{3pt}
{\color{black} \min_{x \in \mathbb{R}^d}\  f(x)=\frac{1}{n}\sum_{i=1}^n f_i(x) \quad \mbox{where} \quad f_i(x): = \mathbb{E}_{\xi_i \sim D_i} F(x;\xi_i).}
\end{align}
Function $f_i(x)$ is local to node $i$, and random variable $\xi_i$ denotes the local data that follows distribution $D_i$. Each local data distribution $D_i$ can be different across all nodes. 

\noindent \textbf{Assumptions.} The optimal convergence rate is established under the following assumptions. 

\vspace{-2mm}
\begin{itemize}[leftmargin=1.5em]
    \item \textbf{Function class.} We assume each local loss function is $L$-smooth, i.e., 
    \begin{align}\label{L-smooth}
    \|\nabla f_i(x) - \nabla f_i(y)\| \le L \|x - y\|, \quad \forall x, y \in \RR^n,
    \end{align}
    for some constant $L>0$. We let the class $\cF_L$ denote the set of all functions satisfying \eqref{L-smooth}. We may also assume the global loss  $f(x)=\frac{1}{n}\sum_{i=1}^nf_i(x)$ satisfies the $\mu$-PL condition: 
   \begin{align}\label{PL-cond}
    2\mu(f(x) - f^\star) \le \|\nabla f(x)\|^2, \quad \forall x\in \RR^n,
    \end{align}
    where $f^\star$ is the optimal function value in problem \eqref{dist-opt} and $\mu > 0$ is some constant. We let the function class $\cF_{L,\mu}$ denote the set of all functions satisfying both \eqref{L-smooth} and \eqref{PL-cond}. 
    \item \textbf{Oracle class.} We assume each node $i$ interacts with $f_i(x)$ via the stochastic gradient oracle  $\tilde{g}_i(x,\xi)$ where $\xi$ is a random variable. In addition, we assume oracle $\tilde{g}_i(x,\xi)$ satisfies 
    \begin{align}
        \mathbb{E}_{\xi }[\tilde{g}_i(x,\xi)] = \nabla f_i(x) \quad \mbox{and} \quad 
        \mathbb{E}_{\xi}\|\tilde{g}_i(x,\xi) - \nabla f_i(x)\|^2 \le \sigma^2. 
    \end{align}
    We let $\cO_{\sigma^2}$ denote the class of all such stochastic gradient oracles. 
    
    \item \textbf{Weight matrix class.} We let $\cW_{n,\beta}$ denote the class of all gossip matrices $W\in \RR^{n\times n}$ satisfying 
    \begin{align}\label{eqn:W-assumption}
    W\mathds{1}_n = \mathds{1}_n, \quad \mathds{1}_n^T W = \mathds{1}_n^T, \quad \|W - \frac{1}{n}\mathds{1}_n\mathds{1}_n^T\| = \beta \in [0,1),
    \end{align}
    where $\mathds{1}_n=(1,\dots,1)^T \in\RR^n$ with all entries being $1$.
    If $W$ is positive semi-definite, then connectivity measure $\beta$ is its second largest eigenvalue. 
    
    \item \textbf{Algorithm class.} We consider an algorithm $A$ in which each node $i$ accesses an unknown local function $f_i $ via the stochastic gradient oracle $\tilde{g}_i(x;\xi_i) \in \cO_{\sigma^2}$.  Each node $i$ running algorithm $A$ will maintain a local model copy $x^{(k)}_{i}$ at iteration $k$. 

    We assume $A$ to follow the partial averaging policy, i.e., each node communicates via protocol 
    $$z_i^{(k)} = \sum_{j\in \mathcal{N}_i} w_{ij} y_j^{(k)}, \quad \forall i \in [n]$$ for some $W = [w_{ij}]_{i,j=1}^n \in \cW_{n, \beta}$ where $[n]:=\{1,\cdots,n\}$, and $y$ and $z$ are the input and output variables of the communication protocol. In addition, we assume $A$ to follow the zero-respecting policy \cite{carmon2020lower,carmon2021lower,huang2022lb}. Informally speaking, the zero-respecting policy requires that the number of non-zero entries of local model copy $x_i^{(k)}$ can only be increased by sampling its own stochastic gradient oracle or interacting with the neighboring nodes. The zero-respecting policy is widely followed by all methods in Tables \ref{table-non-convex} and \ref{table-non-convex-PL} and their momentum and adaptive variants. We let $\cA_{W}$ be the set of all algorithms following the partial averaging and zero-respecting policies. 
\end{itemize}

\vspace{-1mm}
\textbf{Lower bound metrics.} With the above classes, we now introduce the measures for  convergence rate's lower bound.  Given loss functions $\{f_i\}_{i=1}^n\subseteq \cF_{L}$ (or $\{f_i\}_{i=1}^n\subseteq \cF_{L,\mu}$), stochastic gradient oracles $\{\tilde{g}_i\}_{i=1}^n \subseteq  \cO_{\sigma^2}$, a weight matrix $W\in\cW_{n,\beta}$, an algorithm $A\in \cA_W$, the number of 
gradient queries (or communications rounds) $T$, we let $\hat{x}_{A, \{f_i\}_{i=1}^n,\{\tilde{g}_i\}_{i=1}^n,W,T}$ (which will be denoted as $\hat{x}$ when there is no ambiguity) be the output of algorithm $A$ with $\hat{x} \in \mathrm{Span}\big(\{\{x_j^{(t)}\}_{j=1}^n\}_{t=1}^T\big)$. The lower bound for smooth and non-convex stochastic decentralized optimization is defined as 
\begin{align}
    \inf_{A\in \cA_W}\sup_{W\in \cW_{n,\beta}} \sup_{\{\tilde{g}_i\}_{i=1}^n\in \cO_{\sigma^2}}\sup_{\{f_i\}_{i=1}^n\subseteq \cF_{L}}\mathbb{E}\|\nabla f(\hat{x}_{A, \{f_i\}_{i=1}^n,\{\tilde{g}_i\}_{i=1}^n,W,T})\|^2.
\end{align}
The lower bound under the PL condition is defined as 
\begin{align}
    \inf_{A\in \cA_W}\sup_{W\in \cW_{n,\beta}} \sup_{\{\tilde{g}_i\}_{i=1}^n\in \cO_{\sigma^2}}\sup_{\{f_i\}_{i=1}^n\subseteq \cF_{L,\mu}} \mathbb{E}[f(\hat{x}_{A, \{f_i\}_{i=1}^n,\{\tilde{g}_i\}_{i=1}^n,W,T}) - f^\star].
\end{align}

\vspace{-1mm}
\textbf{Fixed network size $n$.} This paper considers {\em stochastic} decentralized optimization where {\em the network size $n$ is a fixed constant}. A fixed $n$ is a prerequisite in distributed stochastic optimization which enables distributed algorithms to achieve the linear speedup in convergence rate $O(\sigma/\sqrt{nT})$, see the algorithms listed in Tables \ref{table-non-convex} and \ref{table-non-convex-PL}. In decentralized deterministic optimization, however, size $n$ does not appear in the convergence rate. Thus, it does not need to be specified 
and can be varied freely when establishing the lower bounds \cite{scaman2017optimal,scaman2018optimal}.

\begin{figure}[t]
\centering
\includegraphics[scale=0.18]{./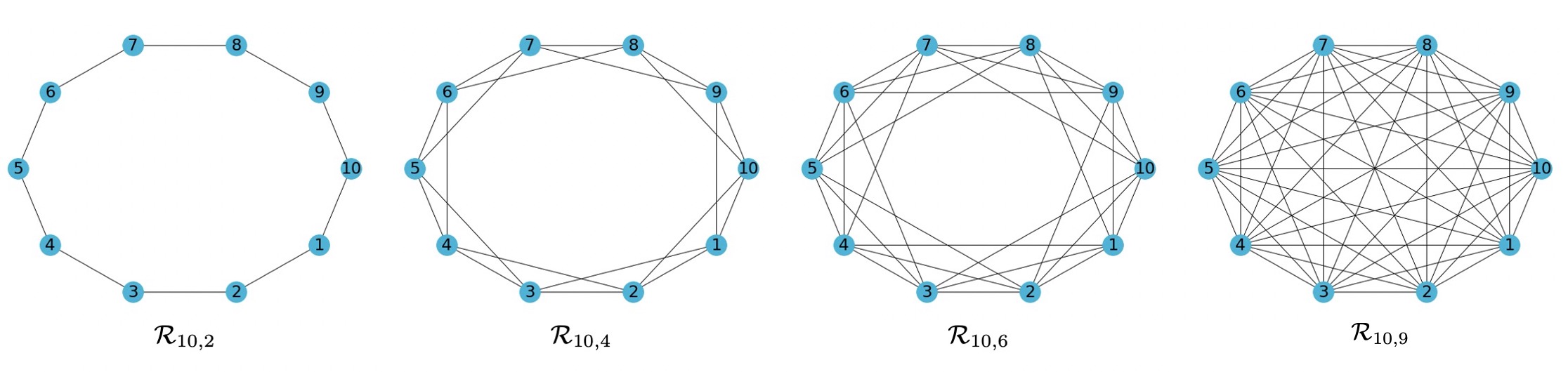}
\caption{\small An illustration of the ring-lattice graph with size $10$ and degrees $2,4,6,9$. It is observed that $\cR_{10,2}$ is a ring graph while $\cR_{10,9}$ is a complete graph.}
\label{fig:ring-lattice}
\vspace{-3mm}
\end{figure}

\vspace{-1mm}
\section{Ring-Lattice graph}
\vspace{-1mm}
The main challenge in analyzing theoretical limits in decentralized optimization lies in clarifying the influence of network topology and its associated weight matrix. It is intuitive that the graph diameter $D$ affects the convergence rate. In the worst case, one node needs to take at least $D$ communication steps to transmit its own gradient information to the other.  If the relation between $D$ and $\beta$ is established, the influence of the weight matrix on lower bounds will become clear. However, existing results either establishes suboptimal relation between $D$ and $\beta$ \cite{sun2019distributed} or admits restrictive weight matrices  \cite{scaman2017optimal,scaman2018optimal,lu2021optimal} given fixed $n$. Now we show that the ring-lattice graph maintains the optimal relation between $D$ and $\beta$ for a broad class of weight matrices. All proof details are in Appendix \ref{app-ring-lattice}. 

\begin{definition}[\sc Ring-Lattice Graph \cite{watts1998collective}]\label{def:rl}
    Given any positive integers $n,k$ such that $k$ is even and {$2\leq k < n-1$}, the $k$-regular ring-lattice graph over nodes $\{1,\cdots, n\}$, denoted by $\cR_{n,k}$, is an undirected graph in which each node $i \in [n]$ is connected with the set of nodes $\{(i+\ell) \;\mathrm{mod}\;n: 1\leq |\ell|\leq k/2\}$ where  $(i+\ell) \;\mathrm{mod}\;n$ is defined as 
    \[
    (i+\ell) \;\mathrm{mod}\;n =\begin{cases}
    i+\ell &\text{if $1\leq  i+\ell\leq n$};\\
    i+\ell+n &\text{if $i+\ell<1$};\\
    i+\ell-n &\text{if $i+\ell> n$}.\\
    \end{cases}
    \]
    Apparently, each node in $\cR_{n,k}$ has degree $k$.
\end{definition}
Ring-lattice is a $k$-regular graph. When $k=2$, ring-lattice will reduce to the ring graph. While Definition \ref{def:rl} excludes the case in which  {$k=n-1$}, the complete graph can be exceptionally viewed as the ring lattice with degree {$k=n-1$}. Therefore, for general $2\le k \le n-1$, $\cR_{n,k}$ can be regarded as an intermediate state between the ring and complete graphs, see the illustration in Fig.~ \ref{fig:ring-lattice}. The following lemma clarifies the diameter of $\cR_{n,k}$. 

\begin{lemma}
\label{lm-diameter}
The diameter of the ring-lattice graph $\cR_{n,k}$ is $D = \Theta(n/k)$. 
\end{lemma}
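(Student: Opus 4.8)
The plan is to compute the graph distance in $\cR_{n,k}$ exactly in terms of the cyclic distance on the underlying $n$-cycle, and then maximize. For $i,j\in[n]$ write $d_{\mathrm{ring}}(i,j):=\min\{|i-j|,\,n-|i-j|\}$ for their distance along the $n$-cycle, so that $0\le d_{\mathrm{ring}}(i,j)\le\lfloor n/2\rfloor$, and by Definition~\ref{def:rl} the nodes $i,j$ are adjacent in $\cR_{n,k}$ precisely when $1\le d_{\mathrm{ring}}(i,j)\le k/2$. I will show that the graph distance satisfies $d_{\cR_{n,k}}(i,j)=\lceil 2\,d_{\mathrm{ring}}(i,j)/k\rceil$, from which the claim follows by maximizing over $i,j$.

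For the lower bound, note that $d_{\mathrm{ring}}(\cdot,\cdot)$ is a metric on $\ZZ/n\ZZ$, so along any path $i=v_0,v_1,\dots,v_t=j$ in $\cR_{n,k}$ the triangle inequality gives $d_{\mathrm{ring}}(i,j)\le\sum_{s=1}^t d_{\mathrm{ring}}(v_{s-1},v_s)\le t\cdot(k/2)$, since each consecutive pair on a path is an edge and hence has ring-distance at most $k/2$. Therefore every path has length $t\ge 2\,d_{\mathrm{ring}}(i,j)/k$, i.e. $d_{\cR_{n,k}}(i,j)\ge\lceil 2\,d_{\mathrm{ring}}(i,j)/k\rceil$. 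For the matching upper bound I exhibit an explicit path: starting from $i$, repeatedly move $k/2$ positions along the shorter arc toward $j$ as long as at least $k/2$ positions remain, then traverse the (nonzero, $\le k/2$) remainder in one final edge; this uses exactly $\lceil 2\,d_{\mathrm{ring}}(i,j)/k\rceil$ edges.

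Maximizing over all pairs, and using that $d_{\mathrm{ring}}$ attains its maximum value $\lfloor n/2\rfloor$, we get $D=\lceil 2\lfloor n/2\rfloor/k\rceil$. Since $2\lfloor n/2\rfloor\in\{n-1,n\}$, this yields $(n-1)/k\le D\le n/k+1$; because $2\le k< n-1$ forces $n/k>1$, both sides are $\Theta(n/k)$, hence $D=\Theta(n/k)$ (and the complete-graph convention $k=n-1$ gives $D=1=\Theta(n/k)$ as well). The argument has no essential obstacle; the only point requiring care is the bookkeeping with the cyclic metric and the floor/ceiling rounding — in particular, invoking the per-edge bound $d_{\mathrm{ring}}(v_{s-1},v_s)\le k/2$ (rather than a displacement bound on the line) is what makes the triangle-inequality step valid on the cycle, including the cases where shortest paths wrap around.
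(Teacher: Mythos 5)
Your proof is correct and follows essentially the same route as the paper's: both establish the exact distance formula $\lceil 2\,d_{\mathrm{ring}}(i,j)/k\rceil$ via the same greedy path for the upper bound and a per-edge displacement bound for the lower bound, then maximize over pairs. If anything, your lower-bound step is slightly more airtight — the paper assumes a shortest path is monotone along one arc ``without loss of generality,'' whereas your triangle-inequality argument on the cyclic metric handles wrap-around and non-monotone paths automatically.
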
 
\vspace{-2mm}
It is derived from the above lemma that the diameters of $\cR_{n,2}$ and $\cR_{n,n}$ are $\Theta(n)$ and $\Theta(1)$, respectively, which are consistent with the ring and complete graphs. 

The following fundamental theorem establishes the relation between $D$ and connectivity measure $\beta$.

\begin{theorem}\label{thm-D-beta-relation}
Given a fixed $n$ and any $\beta \in [0, \cos(\pi/n)]$, we can always construct a ring-lattice graph $\cR_{n,k}$ so that
\begin{itemize}[leftmargin=3em]
\vspace{-1mm}
\item[(1)] it has an associated weight matrix $W \in \cW_{n,\beta}$, i.e., $W \in \RR^{n\times n}$ and $\|W - \frac{1}{n}\mathds{1}_n\mathds{1}_n^T\| = \beta$,
\vspace{-1mm}
\item[(2)] its diameter $D$ satisfies $D = \Theta(1/\sqrt{1-\beta})$.
\end{itemize}
\end{theorem}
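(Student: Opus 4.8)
The plan is to parametrize the ring-lattice family $\cR_{n,k}$ by its degree $k$, compute the connectivity measure $\beta(k)$ of a natural weight matrix supported on $\cR_{n,k}$ as an explicit function of $k$ and $n$, and then show that as $k$ ranges over the admissible even values $2 \le k \le n-1$, the pair $(\beta(k), D(k))$ traces out the whole region $\beta \in [0, \cos(\pi/n)]$ with $D = \Theta(1/\sqrt{1-\beta})$. The concrete weight matrix I would use is the circulant matrix $W = \frac{1}{k+1}\sum_{|\ell|\le k/2} S^\ell$ where $S$ is the cyclic shift (equivalently, the lazy/uniform average over the closed neighborhood of each node), which automatically satisfies $W\mathds{1}_n = \mathds{1}_n$ and $\mathds{1}_n^T W = \mathds{1}_n^T$. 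Because $W$ is circulant and symmetric, its eigenvalues are known in closed form: the eigenvalue associated with the $m$-th Fourier mode is $\frac{1}{k+1}\big(1 + 2\sum_{j=1}^{k/2}\cos(2\pi m j / n)\big)$, which is a Dirichlet-kernel value $\frac{1}{k+1}\cdot \frac{\sin((k+1)\pi m/n)}{\sin(\pi m/n)}$ for $m \ne 0$. The connectivity measure $\beta(k)$ is the largest of $|\lambda_m|$ over $m = 1,\dots,n-1$; the extreme case $k=2$ (ring) gives $\beta = \frac{1+2\cos(2\pi/n)}{3}$, and as $k$ increases toward $n-1$, $W \to \frac{1}{n}\mathds{1}_n\mathds{1}_n^T$ so $\beta(k) \to 0$. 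By Lemma~\ref{lm-diameter}, $D(k) = \Theta(n/k)$, so claim (2) reduces to showing $1 - \beta(k) = \Theta(k^2/n^2)$ for the $k$ we pick.

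The core estimate is therefore: $1 - \beta(k) = \Theta\!\big(k^2/n^2\big)$ uniformly over the relevant range of $k$ and $n$. For the lower part of the spectrum, the dominant Fourier mode is $m=1$, where $\lambda_1 = \frac{1}{k+1}\cdot\frac{\sin((k+1)\pi/n)}{\sin(\pi/n)}$, and a Taylor expansion of $\sin$ around $0$ (valid since $(k+1)\pi/n \le \pi$) gives $1 - \lambda_1 = \Theta\big((k+1)^2/n^2\big) = \Theta(k^2/n^2)$. The work is to verify that no other mode $m \ge 2$ produces a value of $|\lambda_m|$ closer to $1$, i.e.\ that $\beta(k) = \lambda_1$ (or is at least $\lambda_1 - O(k^2/n^2)$): for $m \ge 2$ the argument $(k+1)\pi m/n$ is either bounded away from $0$ (making $|\lambda_m|$ small because the Dirichlet kernel decays) or we bound $|\lambda_m| \le \lambda_1$ directly via monotonicity/oscillation properties of the normalized Dirichlet kernel. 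This is the main obstacle and the only genuinely delicate part: controlling the Dirichlet kernel $D_{k/2}(2\pi m/n)/(k+1)$ across all modes $m$ simultaneously, especially guaranteeing $|\lambda_m| < 1$ strictly (which forces the condition $k < n-1$, since $k=n-1$ makes $W$ rank-one anyway and is the degenerate complete-graph case handled separately).

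Finally, to get \emph{every} target $\beta \in [0,\cos(\pi/n)]$ rather than only the discrete values $\beta(k)$, I would note two things. First, $\beta(2) = \frac{1+2\cos(2\pi/n)}{3} \le \cos(\pi/n)$ (a one-line trigonometric check), so the ring already reaches the top of the allowed range, and $\beta(k) \downarrow 0$ as $k$ grows, so the discrete set $\{\beta(k)\}$ already covers $[0,\cos(\pi/n)]$ up to gaps. Second, to fill the gaps and hit an exact $\beta$, I would perturb the uniform circulant weights within the same support $\cR_{n,k}$: replacing the self-weight $\frac{1}{k+1}$ by $1-\alpha$ and rescaling the off-diagonal weights by $\alpha(k+1)/k$ for a continuous parameter $\alpha \in [0,1]$ gives $W_\alpha = (1-\alpha)I + \alpha W$, whose connectivity measure $\beta_\alpha = \max\{|1-\alpha + \alpha\lambda_m|\} $ moves continuously between $1$ (at $\alpha=0$) and $\beta(k)$ (at $\alpha=1$); an intermediate-value argument then produces the exact desired $\beta$ while keeping the support, hence the diameter $D = \Theta(n/k) = \Theta(1/\sqrt{1-\beta})$, unchanged (since $1-\beta_\alpha = \Theta(1-\beta(k))$ once $\alpha$ is bounded below, which we can arrange). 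Assembling these pieces — closed-form eigenvalues, the $k^2/n^2$ estimate, the diameter lemma, and the continuity/IVT step — yields both (1) and (2).
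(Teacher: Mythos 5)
Your proposal is correct in outline but takes a genuinely different route from the paper, and it is worth comparing the two. The paper does not use the uniform circulant average at all: it takes the Laplacian $L_{n,k}$ of $\cR_{n,k}$ and sets $W = I - \frac{1-\beta}{\lambda_2}L_{n,k}$, where $\lambda_2 = \min_{2\le j\le n}\mu_j$ is the spectral gap. This scaling hits the target $\beta$ \emph{exactly} in one shot, so no intermediate-value/lazy-parameter interpolation is needed; moreover, after checking that the scaling factor is at most $1/(2k)$ while the Laplacian spectrum lies in $[0,2k]$, the matrix $W$ is positive semi-definite, so $\|W-\frac1n\mathds{1}_n\mathds{1}_n^T\| = 1-\frac{1-\beta}{\lambda_2}\min_j\mu_j = \beta$ and only a \emph{one-sided} bound on the Dirichlet sum $h_k(\theta)=\sum_{|\ell|\le k/2}\cos(2\ell\theta)$ is required (the paper's Lemma~\ref{lem:jvpojwqfq}, which yields $\lambda_2=\Theta(k^3/n^2)$, the unnormalized counterpart of your $1-\lambda_1=\Theta(k^2/n^2)$). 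By contrast, the step you correctly flag as "the main obstacle" is strictly harder in your formulation: since your $W$ is an unnormalized Dirichlet-kernel circulant, it is not PSD, and you must control $\max_m|\lambda_m|$ including the negative lobes, not just $\max_m\lambda_m$. This is doable (the negative lobes of the normalized kernel are bounded by an absolute constant $\approx 0.22$), but it is extra work the paper's PSD trick avoids, and your IVT step starting from $\alpha=1$ inherits this two-sided requirement. Two smaller points: (i) your discrete family bottoms out at $\beta(n-2)=\frac{1}{n-1}>0$, so the interval $[0,\frac{1}{n-1})$ is reachable only through the degenerate complete-graph case, which the paper handles as an explicit separate case ($\beta\le\cos(\pi/9)$, $W=\frac{1-\beta}{n}\mathds{1}_n\mathds{1}_n^T+\beta I$, $D=1$) and which you mention only in passing; (ii) for the IVT step to preserve $D=\Theta(1/\sqrt{1-\beta})$ you must pick $k$ as (roughly) the smallest admissible degree with $\beta(k)\le\beta$ so that $1-\beta=\Theta(1-\beta(k))=\Theta(k^2/n^2)$ — your "once $\alpha$ is bounded below, which we can arrange" is gesturing at this, but the quantifier order matters and should be made explicit. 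With those repairs and a full proof of the Dirichlet-kernel estimate, your argument goes through and buys a somewhat more concrete weight matrix (uniform neighborhood averaging, the one practitioners actually use) at the cost of a two-sided spectral bound and an interpolation step that the paper's exact Laplacian scaling renders unnecessary.
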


\begin{remark}
The works \cite{scaman2017optimal,scaman2018optimal,lu2021optimal} utilize the linear graph to establish the same relation $D = \Theta(1/\sqrt{1-\beta})$. 
However, given fixed network size $n$, the linear graph admits limited weight matrices with $\beta = \cos(\pi/n)$. 
\end{remark}

\begin{remark}
Sun and Hong \cite{sun2019distributed} propose a slightly wider class of networks named linear-star graph, which, however, leads to a suboptimal relation $D= \Omega(1/[n(1-\beta)])$ \cite[Proof of Corollary 3.1]{sun2019distributed}.
\end{remark}

\begin{remark}
It is worth noting that for deterministic decentralized optimization where $n$ may vary, one can tune $n$ to make $\beta = \cos(\pi/n)$ lie in the full interval $(0,1)$ as shown in \cite{scaman2017optimal,scaman2018optimal}. In other words, \cite{scaman2017optimal,scaman2018optimal}  proved that $D = \Theta(1/\sqrt{1-\beta})$ for any $\beta \in (0,1)$ if $n$ is a varying parameter. However, this result is not valid in stochastic scenarios where $n$ is required to be fixed.
\end{remark}

\vspace{ -1mm}
\section{Lower bounds with general weight matrices}
\vspace{ -1mm}
\label{sec-lower-bounds}
This section will establish the lower bounds for smooth and non-convex decentralized stochastic optimization. All proof details are in Appendix \ref{app-lower-bounds}. When the cost function does not satisfy the PL condition \eqref{PL-cond}, we will derive that the convergence rate is lower bounded by $\Omega(\frac{\sigma}{\sqrt{nT}} + \frac{D}{T})$. This result together with Theorem \ref{thm-D-beta-relation} leads to:
\begin{theorem}\label{thm-lower-bound-nc}
For any $L>0$, $n\geq 2$, $\beta \in[0,\cos(\pi/n)]$,  and $\sigma>0$, there exists a set of loss functions $\{f_i\}_{i=1}^n \subseteq \mathcal{F}_{L}$, a set of stochastic gradient oracles $\{\tilde{g}_i\}_{i=1}^n \subseteq \mathcal{O}_{\sigma^{2}}$, and a weight matrix $W \in \mathcal{W}_{n,\beta}$, such that it holds for any $A \in \mathcal{A}_{W}$ starting form $x^{(0)}$ that 
\begin{align}\label{eqn:lower-bound-nc}
  \mathbb{E}\|\nabla f(\hat{x}_{A, \{f_i\}_{i=1}^n,\{\tilde{g}_i\}_{i=1}^n,W,T})\|^2 =   \Omega\left(\frac{\sigma\sqrt{L}}{\sqrt{nT}}+ \frac{L}{T\sqrt{1-\beta}}\right).
\end{align}
\end{theorem}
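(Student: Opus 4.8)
The plan is to combine the standard two-part lower bound strategy for stochastic non-convex optimization with the topological construction of Theorem~\ref{thm-D-beta-relation}. The target rate $\Omega\big(\frac{\sigma\sqrt{L}}{\sqrt{nT}} + \frac{L}{T\sqrt{1-\beta}}\big)$ is the maximum of two separate lower bounds, so it suffices to establish each term individually and then take the harder of the two (up to a constant factor). First I would set up a hard instance by invoking Theorem~\ref{thm-D-beta-relation}: given the fixed $n$ and $\beta \in [0,\cos(\pi/n)]$, pick the ring-lattice graph $\cR_{n,k}$ whose weight matrix $W$ satisfies $\|W - \tfrac1n \mathds{1}_n\mathds{1}_n^\top\| = \beta$ and whose diameter obeys $D = \Theta(1/\sqrt{1-\beta})$. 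Everything downstream uses only these two facts about $W$ and the graph, so the analysis is insulated from the specific structure of $\cR_{n,k}$.

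For the statistical term $\Omega(\sigma\sqrt{L}/\sqrt{nT})$, the key step is a reduction to the single-machine / centralized stochastic lower bound. I would take all $f_i$ equal to a common hard function $f \in \cF_L$ (so that $b^2 = 0$ and the data-heterogeneity issue disappears) and all oracles $\tilde g_i$ to be i.i.d. noisy gradients of $f$ with variance $\sigma^2$. After $T$ rounds the whole network has collectively queried $nT$ stochastic gradients, so any algorithm in $\cA_W$ — regardless of the communication pattern — is no more powerful than a centralized algorithm with an $nT$-sample budget. Invoking the classical $\Omega(\sigma\sqrt{L}/\sqrt{m})$ lower bound for finding an $\epsilon$-stationary point of an $L$-smooth function with $m$ stochastic gradient queries (as in Arjevani–Carmon et al. / the $\cF_L \times \cO_{\sigma^2}$ lower bounds already cited in the paper), with $m = nT$, yields the first term. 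This part is essentially routine once the reduction is stated.

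For the topological term $\Omega(L/(T\sqrt{1-\beta}))$, I would use the "chain / progress" argument that is standard in decentralized lower bounds (Scaman et al., Sun–Hong, Lu–Sa). The idea is to place the coordinates of a hard function — a chain-like function of the Nesterov/Carmon type whose gradient at a point with only $j$ nonzero coordinates has a prescribed nonzero $(j{+}1)$-st entry — onto the two endpoints of a diameter path of $\cR_{n,k}$: the "first half" of the coordinates is visible only to one endpoint node and the "second half" only to the other. By the zero-respecting and partial-averaging policies, information about coordinate $j{+}1$ can only become nonzero after it has propagated across roughly $j \cdot D$ communication rounds (each block of coordinates requires a full traversal of the graph to hand the progress baton to the other endpoint). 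Hence after $T$ rounds at most $\Theta(T/D)$ coordinates are "activated," the remaining coordinates contribute an irreducible gradient norm, and scaling the hard function appropriately in $L$ gives $\mathbb{E}\|\nabla f(\hat x)\|^2 = \Omega(L/(T/D)) = \Omega(LD/T) = \Omega(L/(T\sqrt{1-\beta}))$, using $D = \Theta(1/\sqrt{1-\beta})$ from Theorem~\ref{thm-D-beta-relation} in the last step. Combining the two bounds via $\max\{a,b\} \geq \tfrac12(a+b)$ finishes the proof.

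The main obstacle is making the progress/activation argument fully rigorous in the decentralized setting: one has to carefully track, across arbitrary interleavings of local gradient queries and gossip steps allowed by $\cA_W$, that the "support frontier" of the iterates cannot advance by more than one coordinate per $\Theta(D)$ rounds, and that an adversary can keep the unactivated-coordinate gradient mass bounded below even under expectation (the noise in $\cO_{\sigma^2}$ must not let the algorithm "cheat" by accidentally hitting later coordinates). This is where the two endpoints of the diameter path, the zero-respecting property, and the precise structure of the chain function all have to interact correctly; the rest of the proof is bookkeeping. I would handle this by citing and adapting the corresponding lemma from \cite{lu2021optimal} (their linear-graph construction), the only new ingredient being that the diameter bound now comes from the ring-lattice graph of Theorem~\ref{thm-D-beta-relation} rather than from the linear graph, so the same $D/T$ dependence survives for the far broader class of $\beta \in [0,\cos(\pi/n)]$.
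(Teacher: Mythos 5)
Your high-level plan coincides with the paper's: two separate hard instances (one per term of the bound), with the topological instance built on the ring-lattice graph of Theorem~\ref{thm-D-beta-relation} so that $D=\Theta(1/\sqrt{1-\beta})$ holds for every $\beta\in[0,\cos(\pi/n)]$, and the two terms combined via $\max\{a,b\}\ge\frac12(a+b)$. Your statistical term is essentially the paper's Instance~1: homogeneous zero-chain functions plus a coordinate-revealing stochastic oracle, with the total budget $nT$ entering through a progress bound of the form $\PP(\prog^{(T)}\ge d)\le e^{(e-1)npT-d}$; your ``reduction to a centralized algorithm with $nT$ samples'' is informal but the underlying argument is standard and sound for zero-respecting algorithms.

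The concrete gap is in your topological instance. You assign the even-coupling half of the chain to a single endpoint node $u$ of a diameter path and the odd-coupling half to the other endpoint $v$, with all remaining $f_i\equiv 0$. Then $f=\frac1n\sum_i f_i=\frac1n(f_u+f_v)$, so either (i) the global objective $f$ is only a $\Theta(L/n)$-scaled copy of the hard chain (if you keep $f_u,f_v$ honestly $L$-smooth), in which case, after re-optimizing the scale $\lambda$ under the budget $f(0)-\inf f\le\Delta$, the final bound degrades to $\Omega\bigl(\frac{L\Delta}{nT\sqrt{1-\beta}}\bigr)$ --- a factor of $n$ short of the claim --- or (ii) you rescale $f_u,f_v$ by $n$ to keep $f$ at scale $L$, which makes them $\Theta(nL)$-smooth and violates $f_i\in\cF_L$. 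The paper avoids this by assigning the two half-chains not to two nodes but to two \emph{arcs} $E_1=\{1,\dots,\lceil n/3\rceil\}$ and $E_2=\{\lfloor n/2\rfloor+1,\dots,\lfloor n/2\rfloor+\lceil n/3\rceil\}$ of $\Theta(n)$ nodes each, so that the per-node rescaling factor $n/\lceil n/3\rceil$ is $\Theta(1)$ and each $f_i$ stays $L$-smooth, while $\dist(E_1,E_2)$ on $\cR_{n,k}$ is still $\Theta(n/k)=\Theta(D)=\Theta(1/\sqrt{1-\beta})$ because the arcs are antipodally placed. Your proposal's escape hatch of ``adapting the corresponding lemma from Lu--Sa'' would in fact import this node-set construction, but as written your two-endpoint description is the step that fails.
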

 \begin{remark}[\bf Comparison with existing bounds]
Comparing to the lower bound established in \cite{lu2021optimal}, Theorem \ref{thm-lower-bound-nc} admits a much broader class of weight matrices with $\beta \in[0,\cos(\pi/n)]$. While the interval $[0, \cos(\pi/n)] \subset [0, 1)$, it approaches to $[0, 1)$ as $n$ goes large. 
 Such interval is broad enough to cover most weight matrices (generated through the Laplacian rule $W = I - L/d_{\max}$) resulted from common topologies such as line, ring, grid, torus, hypercube, exponential graph, complete graph, Erdos-Renyi graph, geometric random graph, etc. whose $\beta$ lies in the interval $[0, \cos(\pi/n)]$ when $n$ is sufficiently large, see details in Appendix \ref{app-lower-bounds}. 
 \end{remark}

\begin{remark}[\bf Linear speedup]
The first term $1/\sqrt{nT}$ dominates \eqref{eqn:lower-bound-nc} when $T$ is sufficiently large, which will require $T = O(1/(n\epsilon^2))$ (it is inversely proportional to $n$) iterations to reach accuracy $\epsilon$. Therefore, if the term involving $nT$ is dominating the rate for some $T$, we say the algorithm is in its linear-speedup stage. Linear speedup is a fundamental property that distributed algorithms enjoy.
\end{remark}
\begin{remark}[\bf Transient iteration complexity]\label{rm-transient-iteration}
Due to the topology-incurred overhead in convergence rate, a decentralized algorithm has to experience a transient stage to achieve linear-speedup. Transient iterations are referred to those iterations before an algorithm reaches  linear-speedup stage. To reach linear speedup in the lower bound \eqref{eqn:lower-bound-nc}, $T$ has to satisfy $\frac{1}{T\sqrt{1-\beta}} \le \frac{\sigma}{\sqrt{nT}}$ and hence $T = \cO(\frac{n}{(1-\beta)\sigma^2})$,

which is the transient iteration complexity shown in the first line in Table \ref{table-non-convex}. The smaller the transient iterations are, the faster the algorithm can achieve linear speedup. Transient iteration complexity is an important metric to evaluate
how sensitive the algorithm is to $\beta$. 
\end{remark}

When the cost function satisfies PL condition \eqref{PL-cond}, we prove the convergence rate is lower bounded by $\Omega(\frac{\sigma}{\sqrt{nT}} + \exp(-CT/D))$ where $C$ is some constant. This result together with Theorem \ref{thm-D-beta-relation} leads to:
\begin{theorem}\label{thm-lower-bound-pl}
For any $L>0$, $n\geq 2$, $\beta \in[0,\cos(\pi/n)]$,  and $\sigma>0$, there exists a set of loss functions $\{f_i\}_{i=1}^n \subseteq \mathcal{F}_{L,\mu}$, a set of stochastic gradient oracles $\{\tilde{g}_i\}_{i=1}^n \subseteq \mathcal{O}_{\sigma^{2}}$, and a weight matrix $W \in \mathcal{W}_{n,\beta}$, such that it holds for any $A \in \mathcal{A}_{W}$ starting from $x^{(0)}$ that 
\begin{small}
\begin{align}\label{eqn:lower-bound-PL}
\setlength{\abovedisplayskip}{3pt}\setlength{\belowdisplayskip}{3pt}
 \mathbb{E}[f(\hat{x}_{A, \{f_i\}_{i=1}^n,\{\tilde{g}_i\}_{i=1}^n,W,T}) - f^\star] = \Omega\left(\frac{\sigma^2}{\mu nT}+\frac{\mu}{L}\exp(-T\sqrt{\mu/L}\sqrt{1-\beta})\right). 
\end{align}
\end{small}
Rate \eqref{eqn:lower-bound-PL} results in an $\tilde{\cO}((1-\beta)^{-\frac{1}{2}})$ transient complexity.
\end{theorem}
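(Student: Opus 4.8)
The plan is to combine two separate lower-bound constructions: one that forces the $\frac{\sigma^2}{\mu n T}$ statistical term, and one that forces the $\frac{\mu}{L}\exp(-T\sqrt{\mu/L}\sqrt{1-\beta})$ consensus/topology term; then glue them so a single family of instances certifies their sum. The first term is the classical stochastic lower bound: take all $f_i$ equal to a single quadratic of the form $f(x)=\frac{\mu}{2}\|x-x^\star\|^2$ (which lies in $\cF_{L,\mu}$ since $\mu\le L$), and feed every node an oracle whose noise is an isotropic perturbation of variance $\sigma^2$. Because the nodes' averaged iterate is, in effect, a single SGD trajectory on a $\mu$-strongly-convex quadratic with $n$-fold reduced noise, the standard information-theoretic argument (e.g.\ the reduction to estimating $x^\star$ from $nT$ noisy gradient samples, as in \cite{agarwal2009information,rakhlin2012making,nemirovski1983problem}) gives $\EE[f(\hat x)-f^\star]=\Omega\!\big(\frac{\sigma^2}{\mu n T}\big)$. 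This part needs $W$ only to satisfy $W\mathds 1_n=\mathds 1_n$, so it is compatible with \emph{any} $W\in\cW_{n,\beta}$.

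For the topology term I would set $\sigma=0$ (deterministic oracles) and reuse the hard instance behind the deterministic decentralized PL lower bound, placed on the ring-lattice graph supplied by Theorem \ref{thm-D-beta-relation}. Concretely, invoke Theorem \ref{thm-D-beta-relation} to obtain, for the given $n$ and $\beta\in[0,\cos(\pi/n)]$, a ring-lattice graph $\cR_{n,k}$ with weight matrix $W\in\cW_{n,\beta}$ and diameter $D=\Theta(1/\sqrt{1-\beta})$. Then split the coordinates of $x$ between the two endpoints of a diameter-realizing path: let node $u$ hold a quadratic depending only on the ``odd'' block of coordinates and node $v$ a quadratic depending only on the ``even'' block, each chain-structured so that progress in coordinate $j$ requires a nonzero value in coordinate $j-1$, exactly in the style of the Nesterov-type chain used in \cite{scaman2017optimal,sun2019distributed,nesterov2018lectures}. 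Under the zero-respecting and partial-averaging policies, a nonzero signal must traverse $\Theta(D)$ communication hops to alternate between the two blocks, so after $T$ rounds only $\Theta(T/D)$ ``links'' of the chain have been activated. For a geometrically decaying chain quadratic with strong-convexity/PL parameter $\mu$ and smoothness $L$, the residual after activating $m$ links is $\Theta(\frac{\mu}{L})\,q^{m}$ for some $q=1-\Theta(\sqrt{\mu/L})$, whence $\EE[f(\hat x)-f^\star]=\Omega\!\big(\frac{\mu}{L}\exp(-\Theta(T\sqrt{\mu/L}/D))\big)=\Omega\!\big(\frac{\mu}{L}\exp(-\Theta(T\sqrt{\mu/L}\sqrt{1-\beta}))\big)$ after substituting $D=\Theta(1/\sqrt{1-\beta})$.

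To obtain the sum rather than the maximum of the two terms, I would run the two constructions on \emph{disjoint coordinate blocks} simultaneously: on one block put the isotropic-noise quadratic (oracle noise $\sigma^2$, contributing the statistical term), on the other put the noiseless chain instance distributed across the diameter (contributing the topology term); the total objective is still $L$-smooth and $\mu$-PL, the oracle still lies in $\cO_{\sigma^2}$, and the optimality gap decomposes additively over the blocks, so its expectation is bounded below by the sum. Finally, the transient-complexity claim $T=\tilde\cO((1-\beta)^{-1/2})$ follows by setting the two terms equal: $\frac{\sigma^2}{\mu nT}\asymp \frac{\mu}{L}e^{-T\sqrt{\mu/L}\sqrt{1-\beta}}$ forces $T=\tilde O\big(\sqrt{L/\mu}\,(1-\beta)^{-1/2}\big)$, i.e.\ $\tilde\cO((1-\beta)^{-1/2})$ up to the condition number.

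I expect the main obstacle to be the rigorous bookkeeping in the topology step: making the ``one nonzero coordinate per $\Theta(D)$ communication rounds'' statement precise under the \emph{partial-averaging} protocol (as opposed to the black-box gossip oracle used in \cite{scaman2017optimal}), since partial averaging with a dense-ish $W$ can in principle spread support faster than a single-edge activation; one must argue via the zero-respecting structure and the diameter that cross-block information genuinely needs $\Theta(D)$ hops, and quantify the geometric decay constant $q=1-\Theta(\sqrt{\mu/L})$ of the chain quadratic tightly enough to land the exponent $T\sqrt{\mu/L}\sqrt{1-\beta}$ rather than a weaker power of $1-\beta$. A secondary technical point is checking that the combined instance's $\hat x\in\mathrm{Span}(\{x_j^{(t)}\})$ cannot ``cheat'' across blocks — this is handled because the two blocks never interact through $f$ and the span is taken coordinatewise.
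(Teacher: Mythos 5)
Your proposal follows the same two-instance architecture as the paper's proof, and both halves are on target: the statistical term via an information-theoretic estimation argument on a homogeneous $\mu$-strongly-convex quadratic (the paper makes this precise with a Le Cam two-point test between $f^{\pm1}(x)=\frac12(\mu([x]_1\mp\lambda)^2+L\sum_{j\ge2}[x]_j^2)$ under Gaussian gradient noise, a KL/Pinsker computation, and $\lambda\asymp\sigma/(\mu\sqrt{nT})$), and the topology term via a noiseless Nesterov chain split by coordinate parity across the ring-lattice graph of Theorem~\ref{thm-D-beta-relation}, with the residual controlled by $q^{\,\Theta(T/D)}$ where $q=1-\Theta(\sqrt{\mu/L})$ and $D=\Theta(1/\sqrt{1-\beta})$. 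Your coordinate-block gluing to get the sum of the two terms is not needed: the paper simply proves each term separately, since $\Omega(\max\{A,B\})=\Omega(A+B)$.

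There is one concrete gap in the topology instance as you wrote it: you place the odd and even halves of the chain on two \emph{single} nodes $u$ and $v$. The function class $\cF_{L}$ (resp.\ $\cF_{L,\mu}$) requires each \emph{local} $f_i$ to be $L$-smooth, and since the global objective is $f=\frac1n\sum_i f_i$, concentrating half of the chain on one node forces an $n$-fold amplification of that node's local function, making it $\Theta(nL)$-smooth; rescaling to restore local $L$-smoothness shrinks the chain's contribution to $f$ by $1/n$ and degrades the per-link decay to $q=1-\Theta(\sqrt{n\mu/L})$, i.e.\ you lose a $\sqrt{n}$ factor in the exponent. The paper avoids this by assigning each half of the chain to a node \emph{set} of size $\lceil n/3\rceil$ ($E_1=\{1,\dots,\lceil n/3\rceil\}$ and $E_2=\{\lfloor n/2\rfloor+1,\dots,\lfloor n/2\rfloor+\lceil n/3\rceil\}$), so the amplification factor $n/\lceil n/3\rceil\le 3$ is constant while $\dist(E_1,E_2)$ is still $\Theta(D)$ on the ring lattice. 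Your worry about partial averaging spreading support too fast is resolved exactly as you anticipate: because $\prog(\nabla f_i(x))$ can exceed $\prog(x)$ only for $i\in E_1$ when $\prog(x)$ is even and only for $i\in E_2$ when it is odd, the zero-respecting property forces $\prog^{(T)}\le\lfloor T/\dist(E_1,E_2)\rfloor+1$ regardless of how dense $W$ is.
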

Expression \eqref{eqn:lower-bound-PL} is the first lower bound for smooth and non-convex stochastic decentralized optimization under the PL condition to our knowledge. Please note that the above lower bound also applies to strongly-convex stochastic decentralized optimization, see the construction details in Appendix \ref{app-lower-bounds}.

\vspace{ -2mm}
\section{DSGD with multiple gossip communications}
\vspace{ -2mm}

This section will present a decentralized algorithm that achieve the lower bounds established in Sec.~\ref{sec-lower-bounds} up to constants $L$ and $\mu$. The new algorithm is a direct extension of the vanilla decentralized SGD (DSGD) \cite{chen2012diffusion,koloskova2020unified}. Inspired by the algorithm development in \cite{Rogozin2021AnAM,lu2021optimal}, we add two additional components to DSGD: gradient accumulation and multiple-gossip communication. The main recursions are listed in Algorithm \ref{Algorithm: MG-DSGD} which utilizes the fast gossip average step \cite{liu2011accelerated} in Algorithm \ref{Algorithm: fast-gossip}. We call the new algorithm as MG-DSGD where ``MG'' indicates ``multiple gossips''. All proofs are in Appendix \ref{app-mg-dsgd}. 

\begin{algorithm}[t]

	\caption{{D}ecentralized SGD with {m}ultiple {g}ossip steps (MG-DSGD)}
			\label{Algorithm: MG-DSGD}
			
			\textbf{Require:} Initialize $x_i^{(0)}=0$, the weight matrix $W$, and the rounds of gossip steps $R$.
			
			\vspace{1mm}
			\textbf{for} $k=0,1,2,..., K$, every node $i$ \textbf{do}
			
			\vspace{0.5mm}
			\hspace{5mm} 	{Sample $\{\xi^{(k,r)}_{i}\}_{r=1}^R$ independently and let  $g_i^{(k)} \hspace{-1mm}=\hspace{-1mm} \frac{1}{R}\hspace{-1mm}\sum\limits_{r=1}^R\hspace{-1mm}\nabla F(x_i^{(k)};\xi_i^{(k,r)})$;} $ \   \triangleright  \mbox{ \footnotesize{grad.  accumulation}}$ 
			
			\hspace{5mm} Update $\phi_i^{(k+1)} = x_i^{(k)} - \gamma g_i^{(k)}$; $\hspace{2 cm}  \triangleright  \mbox{ \footnotesize{local stochastic gradient descent}}$ \vspace{1mm}
			
			\hspace{5mm} {Update	$x^{(k+1)}_i = \textbf{FastGossipAverage}(\{\phi^{(k+1)}_i\}_{i=1}^n,{W}, R)$;} \quad 
			$\triangleright \mbox{ \footnotesize{multiple gossips}}$

\end{algorithm}
		
		\begin{algorithm}[t]
			\caption{$x_i =  \textbf{FastGossipAverage}(\{\phi_i\}_{i=1}^n,{W}, R)$}
			\label{Algorithm: fast-gossip}
			
			\textbf{Require:} $\{\phi_i\}_{i=1}^n$, ${W}$, $R$, and $\beta$; let  ${z_i^{(0)} = z_i^{(-1)}=\phi_i}$ and $\eta = \frac{1-\sqrt{1-\beta^2}}{1+\sqrt{1+\beta^2}}$.
			
			\vspace{1mm}
			\textbf{for} $r=0,1,2,..., R-1$, every node $i$ \textbf{do}
			
			\vspace{0.5mm}
			\hspace{5mm} Update	$z^{(r+1)}_i = (1+\eta)\sum_{j\in\mathcal{N}_i}{w}_{ij} z^{(r)}_i-\eta z_i^{(r-1)}$;
			$\hspace{0.83cm}   \triangleright \mbox{ \footnotesize{fast gossip averaging}}$
			
			\textbf{Output:}	$x_i = {z}^{(R)}_i$;
		\end{algorithm}

The \detag algorithm proposed in \cite{lu2021optimal} applies the gradient accumulation and multiple-gossip communication techniques to gradient tracking, which achieves the optimal rate in non-convex (without PL condition) stochastic decentralized optimization. This paper finds in \ours that the multiple gossip technique can significantly reduce the influence of data heterogeneity, and the gradient tracking step can be saved to achieve the convergence optimality. {Similar observations also appeared in \cite{Rogozin2021AnAM} albeit in the strongly-convex scenario}. Removing gradient tracking from algorithm updates brings two benefits. First, it saves half of the communication overhead and memory storage per iteration. Second, it enables numerous effective acceleration techniques designed exclusively for decentralized SGD such as Quasi-global momentum \cite{lin2021quasi}, DecentLaM \cite{yuan2021decentlam}, adaptive momentum \cite{nazari2019dadam}, etc. which cannot be easily extended to gradient tracking. 

Let $T$ be the toal number of gradient queries (or decentralized communications) at each node. Since each node takes $R$ gradient queries and $R$ communications at round $k$, it holds that $T = K R$ when \ours finishes after $K$ rounds. The following theorems clarify the convergence rate of \ours where $T = KR$. We omit the initialization constants below. Our analysis techniques relate to the classic SGD analysis but with inaccurate oracles in the nonconvex case \cite{dasdsafcvzx}.

\begin{theorem}\label{thm-MG-DSGD-rate-nc}
Given $L>0$, $n\geq 2$, $\beta \in[0,1)$, $\sigma>0$, and let $A$ denote Algorithm \ref{Algorithm: MG-DSGD}. Assuming that  $\frac{1}{n}\sum_{i=1}^n \|\nabla f_i(x) - \nabla f(x)\|^2 \le b^2$ for any $x\in \RR^{n}$ and the number of gossip rounds $R$ is set as in Appendix \ref{app-mg-dsgd}, the convergence of $A$ can be bounded for any $\{f_i\}_{i=1}^n \subseteq \cF_{L}$ and any $W \in \cW_{n,\beta}$ that
\begin{small}
\begin{equation}\label{MG-DSGD-rate-1}
\setlength{\abovedisplayskip}{3pt}\setlength{\belowdisplayskip}{3pt}
\frac{1}{K}\sum_{k=1}^K \mathbb{E}\|\nabla f(\bar{x}^{(k)})\|^2 = \tilde{O}\left( \frac{\sigma\sqrt{L}}{\sqrt{nT}} + \frac{L}{T\sqrt{1-\beta}} \right) \quad \mbox{where} \quad \bar{x}^{(k)} = \frac{1}{n}\sum_{i=1}^n x_i^{(k)}.
\end{equation}
\end{small}
\end{theorem}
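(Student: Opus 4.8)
The plan is to view MG-DSGD as inexact-gradient SGD run on the network-averaged iterate and to control the perturbation through the fast-gossip contraction, in the spirit of the ``SGD with inaccurate oracles'' analysis of \cite{dasdsafcvzx}. Write $\bar x^{(k)}=\frac1n\sum_{i=1}^n x_i^{(k)}$, $\bar g^{(k)}=\frac1n\sum_{i=1}^n g_i^{(k)}$, and $\cV_k:=\frac1n\sum_{i=1}^n\|x_i^{(k)}-\bar x^{(k)}\|^2$. First I would verify that \textbf{FastGossipAverage} leaves the network average invariant: the three-term recursion $z^{(r+1)}=(1+\eta)Wz^{(r)}-\eta z^{(r-1)}$ restricted to $\mathrm{span}(\mathds{1}_n)$ is the identity (since $W\mathds{1}_n=\mathds{1}_n$), so that $\bar x^{(k+1)}=\bar x^{(k)}-\gamma\bar g^{(k)}$ \emph{exactly}. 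Conditioning on the past, $\mathbb{E}[\bar g^{(k)}]=\frac1n\sum_i\nabla f_i(x_i^{(k)})=:\nabla f(\bar x^{(k)})+e^{(k)}$ with $\|e^{(k)}\|^2\le L^2\cV_k$ by $L$-smoothness, while gradient accumulation over $R$ i.i.d.\ samples per node makes the residual variance $\mathbb{E}\|\bar g^{(k)}-\mathbb{E}\bar g^{(k)}\|^2\le\sigma^2/(nR)$.

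\textbf{Descent recursion.} I would then run the standard descent lemma for $f\in\cF_L$ along $\bar x^{(k)}$, using $\langle\nabla f(\bar x^{(k)}),e^{(k)}\rangle\ge-\tfrac12\|\nabla f(\bar x^{(k)})\|^2-\tfrac12L^2\cV_k$, to obtain, for $\gamma\le\tfrac{1}{4L}$,
\begin{align*}
\mathbb{E} f(\bar x^{(k+1)})\le \mathbb{E} f(\bar x^{(k)})-\tfrac{\gamma}{4}\,\mathbb{E}\|\nabla f(\bar x^{(k)})\|^2+\gamma L^2\,\mathbb{E}\cV_k+\tfrac{\gamma^2 L\sigma^2}{2nR}.
\end{align*}
\textbf{Consensus recursion.} Next I would bound $\cV_k$. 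After the local step, $\frac1n\sum_i\|\phi_i^{(k+1)}-\bar\phi^{(k+1)}\|^2\le 2\cV_k+2\gamma^2\frac1n\sum_i\|g_i^{(k)}-\bar g^{(k)}\|^2$, and the $R$-round Chebyshev gossip with the prescribed $\eta$ contracts this by a factor $\rho_R^2$ with $\rho_R\le 2\big(1-\Theta(\sqrt{1-\beta})\big)^R$. Bounding $\frac1n\sum_i\|g_i^{(k)}-\bar g^{(k)}\|^2=O(\sigma^2/R+b^2+L^2\cV_k)$ — where the heterogeneity assumption $\frac1n\sum_i\|\nabla f_i(x)-\nabla f(x)\|^2\le b^2$ enters — yields $\mathbb{E}\cV_{k+1}\le c_1\rho_R^2\,\mathbb{E}\cV_k+c_2\rho_R^2\gamma^2(\sigma^2/R+b^2)$. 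Choosing $R$ so that $c_1\rho_R^2\le\tfrac12$ (which needs only $R=\Theta((1-\beta)^{-1/2})$) and using $\cV_0=0$, the geometric sum gives $\sum_{k<K}\mathbb{E}\cV_k\le 4c_2K\rho_R^2\gamma^2(\sigma^2/R+b^2)$.

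\textbf{Combine and tune.} Telescoping the descent recursion and inserting the bound on $\sum_k\mathbb{E}\cV_k$,
\begin{align*}
\frac1K\sum_{k=1}^K\mathbb{E}\|\nabla f(\bar x^{(k)})\|^2\ \le\ \frac{4\big(f(\bar x^{(0)})-f^\star\big)}{\gamma K}+\frac{2\gamma L\sigma^2}{nR}+O\!\big(L^2\rho_R^2\gamma^2(\sigma^2/R+b^2)\big).
\end{align*}
The last, heterogeneity/consensus term carries the tiny factor $\rho_R^2$, so enlarging $R$ by a logarithmic amount, $R=\Theta\!\big((1-\beta)^{-1/2}\log\mathrm{poly}(T,n,L,\sigma^2,b^2)\big)=\tilde\Theta((1-\beta)^{-1/2})$, makes it dominated by the first two — crucially $b^2$ does not survive into the final bound. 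What remains is the plain SGD bound with step $\gamma$, effective variance $\sigma^2/(nR)$ and $K$ steps; I would then optimize $\gamma=\min\{\tfrac{1}{4L},\sqrt{nR(f(\bar x^{(0)})-f^\star)/(L\sigma^2K)}\}$, turning it into $\tilde O\big(L/K+\sqrt{L\sigma^2/(nRK)}\big)$. Since $KR=T$, we have $nRK=nT$ and $L/K=LR/T=\tilde O(L/(T\sqrt{1-\beta}))$, which gives exactly $\tilde O\big(\sigma\sqrt{L}/\sqrt{nT}+L/(T\sqrt{1-\beta})\big)$.

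\textbf{Main obstacle.} The hard part will be handling the coupled pair of recursions for $\mathbb{E} f(\bar x^{(k)})$ and $\mathbb{E}\cV_k$: one has to pick $\gamma$ and $R$ so that (i) the consensus recursion is a contraction, (ii) the consensus-induced bias $\gamma L^2\sum_k\mathbb{E}\cV_k$ in the telescoped descent inequality is absorbed, and (iii) in particular the data-heterogeneity term $b^2$ — present in the per-iteration consensus error but which must be \emph{absent} from the final rate, unlike in vanilla DSGD — is crushed by $\rho_R^2$. Point (iii) is exactly where the logarithmic enlargement of $R$, and hence the $\tilde O(\cdot)$, is forced. A minor bookkeeping point is that $R$ must be fixed from the target $T$ first and $K=\lfloor T/R\rfloor$ set afterwards, so that $T=KR$ is not circular.
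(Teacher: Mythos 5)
Your proposal is correct and follows essentially the same route as the paper: a descent lemma on the averaged iterate with effective noise $\sigma^2/R$, a consensus recursion contracted by the Chebyshev-gossip factor $\tilde\beta=\sqrt{2}(1-\sqrt{1-\beta})^R$, the choice $R=\tilde\Theta((1-\beta)^{-1/2})$ with a logarithmic enlargement to crush the $b^2$ term, and the accounting $T=KR$. The only (cosmetic) difference is that the paper first proves a standalone DSGD convergence lemma with generic $(\beta,\sigma^2)$ — whose $\beta$-in-the-numerator form is exactly what lets multi-gossip suppress heterogeneity — and then substitutes $\beta\to\tilde\beta$, $\sigma^2\to\sigma^2/R$ as a black box, whereas you write the coupled recursions for MG-DSGD directly.
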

\begin{remark}
The rate \eqref{MG-DSGD-rate-1} matches with the lower bound \eqref{eqn:lower-bound-nc} up to logarithm factors. Furthermore, we remark that the gradient similarity assumption $\frac{1}{n}\sum_{i=1}^n \|\nabla f_i(x) - \nabla f(x)\|^2 \le b^2$ is not required to obtain the lower bound in Theorem \ref{thm-lower-bound-nc}. Due to these two restrictions, \ours is a nearly-optimal (not fully-optimal) algorithm. However, while the analysis in \ours needs to assume bounded gradient  similarity $b^2$, it only affects the convergence rate as a logarithm term hidden in the $\tilde{\cO}(\cdot)$ notation. If one wants to completely removes the gradient similarity assumption from Theorem \ref{thm-MG-DSGD-rate-nc}, he/she can refer to the techniques in gradient tracking \cite{lu2021optimal,xin2020improved,lu2019gnsd}.
\end{remark}
The rate \eqref{MG-DSGD-rate-1} matches with the lower bound \eqref{eqn:lower-bound-nc} up to logarithm factors. The comparison between \ours with other state-of-the-art algorithms is listed in Table \ref{table-non-convex}. \ours matches DeTAG in convergence rate and saves half of the communications, see experiments in Fig.~\ref{fig:cifar10_alpha1}. While the analysis in \ours needs to assume bounded data heterogeneity $b^2$, it only affects the convergence rate as a logarithm term hidden in the $\tilde{\cO}(\cdot)$ notation.

\begin{theorem}\label{thm-MG-DSGD-rate-pl}
	Under the same assumptions as in Theorem \ref{thm-MG-DSGD-rate-nc} and setting $R$ as in Appendix \ref{app-mg-dsgd}, the convergence of $A$ can be bounded for any loss functions $\{f_i\}_{i=1}^n \subseteq \cF_{L,\mu}$, and any $W \in \cW_{n,\beta}$ by
	\begin{small}
	\begin{align}\label{MG-DSGD-rate-2}
	\setlength{\abovedisplayskip}{3pt}\setlength{\belowdisplayskip}{3pt}
 \frac{1}{K}\sum_{k=1}^K \mathbb{E}[f(\bar{x}^{(k)}) - f^\star] = {\tilde{O}\left(\frac{L\sigma^2}{{\mu^2 nT}} + \exp(-\mu T\sqrt{1-\beta}/L) \right)}.
	\end{align}
	\end{small}
\end{theorem}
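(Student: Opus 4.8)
The plan is to reduce MG-DSGD to centralized SGD on the averaged iterate, perturbed by the consensus error. Since each fast-gossip update $z^{(r+1)}=(1+\eta)Wz^{(r)}-\eta z^{(r-1)}$ is an affine combination of products of the doubly-stochastic $W$, the node average is preserved, so $\bar x^{(k)}:=\tfrac1n\sum_i x_i^{(k)}$ satisfies the \emph{exact} recursion $\bar x^{(k+1)}=\bar x^{(k)}-\gamma\,\bar g^{(k)}$ with $\bar g^{(k)}=\tfrac1n\sum_i g_i^{(k)}$. Write $\bar g^{(k)}=h^{(k)}+\xi^{(k)}$ with $h^{(k)}=\tfrac1n\sum_i\nabla f_i(x_i^{(k)})$ deterministic given the iterates and $\xi^{(k)}$ zero-mean; because $g_i^{(k)}$ averages $R$ independent samples and we then average over $n$ nodes, $\mathbb{E}\|\xi^{(k)}\|^2\le\sigma^2/(nR)$, which is exactly where gradient accumulation contributes the factor $R$ of variance reduction. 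By $L$-smoothness, $\|h^{(k)}-\nabla f(\bar x^{(k)})\|\le\tfrac Ln\sum_i\|x_i^{(k)}-\bar x^{(k)}\|$, so $h^{(k)}$ is a biased gradient whose bias is controlled by the consensus error $\Xi^{(k)}:=\tfrac1n\sum_i\|x_i^{(k)}-\bar x^{(k)}\|^2$.

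First I would derive a one-step inequality: apply the descent lemma for the $L$-smooth $f$ at $\bar x^{(k+1)}$, take conditional expectation to eliminate $\xi^{(k)}$, bound $\|\bar g^{(k)}\|^2\le 2\|h^{(k)}\|^2+2\|\xi^{(k)}\|^2$, and split the inner product with Young's inequality on the bias term, obtaining for $\gamma\le c_1/L$
\[
\mathbb{E}[f(\bar x^{(k+1)})-f^\star]\le\mathbb{E}[f(\bar x^{(k)})-f^\star]-\tfrac{\gamma}{2}\mathbb{E}\|\nabla f(\bar x^{(k)})\|^2+c_2\gamma L^2\,\mathbb{E}[\Xi^{(k)}]+\tfrac{L\gamma^2\sigma^2}{2nR}.
\]
Splitting $-\tfrac\gamma2\|\nabla f\|^2$ into $-\tfrac\gamma4\|\nabla f\|^2$ (kept as a reserve) and $-\tfrac\gamma4\|\nabla f\|^2$, and applying the PL inequality \eqref{PL-cond} to the latter, gives a contraction factor $1-\tfrac{\gamma\mu}{2}$ on $\mathbb{E}[f(\bar x^{(k)})-f^\star]$. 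It remains to control $\Xi^{(k)}$: $R$ rounds of the Chebyshev-accelerated gossip in Algorithm~\ref{Algorithm: fast-gossip} contract the dispersion by a factor $\rho_R:=(2\rho^R)^2$ with $\rho=1-\Theta(\sqrt{1-\beta})$, and the pre-gossip dispersion of $\phi^{(k+1)}=x^{(k)}-\gamma g^{(k)}$ is at most $(1+a)\Xi^{(k)}+(1+a^{-1})\gamma^2\cdot\tfrac1n\sum_i\mathbb{E}\|g_i^{(k)}-\bar g^{(k)}\|^2$, where the last average is $\lesssim\sigma^2/R+b^2+L^2\Xi^{(k)}+\mathbb{E}\|\nabla f(\bar x^{(k)})\|^2$ by the bounded-heterogeneity hypothesis $\tfrac1n\sum_i\|\nabla f_i(x)-\nabla f(x)\|^2\le b^2$. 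Hence $\mathbb{E}[\Xi^{(k+1)}]\le\rho_R\big[(1+c_3\gamma^2L^2)\mathbb{E}[\Xi^{(k)}]+c_3\gamma^2(\sigma^2/R+b^2+\mathbb{E}\|\nabla f(\bar x^{(k)})\|^2)\big]$.

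The decisive step is the choice $R=\Theta\!\big(\tfrac{1}{\sqrt{1-\beta}}\log(\cdot)\big)$, large enough that $\rho_R$ makes this recursion a $\tfrac12$-contraction and, moreover, the induced feedback of past gradient norms into the descent inequality is of size $O(\gamma^3L^2\rho_R)\|\nabla f\|^2$, strictly smaller than the reserved $-\tfrac\gamma4\|\nabla f\|^2$ and hence absorbable, while the residual $\sigma^2/R+b^2$ terms enter only at order $O(\gamma^3L^2\rho_R(\sigma^2/R+b^2))$ and are swallowed by $\tilde O(\cdot)$. Combining into a Lyapunov potential $V^{(k)}:=\mathbb{E}[f(\bar x^{(k)})-f^\star]+c_4\,\mathbb{E}[\Xi^{(k)}]$ yields $V^{(k+1)}\le(1-\tfrac{\gamma\mu}{2})V^{(k)}+O\!\big(\tfrac{L\gamma^2\sigma^2}{nR}\big)$. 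Unrolling over $K=T/R$ rounds with the standard PL step-size tuning (a constant $\gamma=\Theta(1/L)$ while $T$ is small, switching to $\gamma\asymp\log(\cdot)/(\mu K)$ once $T$ is large, together with the accompanying (tail-)averaging of the iterates) gives $\tilde O\!\big(V^{(0)}\exp(-\mu K/L)+\tfrac{L\sigma^2}{\mu^2 nRK}\big)$; since $\Xi^{(k)}\ge0$ this also bounds $\tfrac1K\sum_k\mathbb{E}[f(\bar x^{(k)})-f^\star]$, and substituting $RK=T$ and $K=\tilde\Theta(T\sqrt{1-\beta})$ produces $\tilde O\!\big(\tfrac{L\sigma^2}{\mu^2 nT}+\exp(-\mu T\sqrt{1-\beta}/L)\big)$.

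The main obstacle is the circular coupling between the descent and consensus recursions through $\|\nabla f(\bar x^{(k)})\|^2$: the consensus error entering round $k{+}1$ depends on the current gradient norm, which is precisely the quantity the descent step is meant to shrink. Breaking this loop requires $\rho_R$ to be \emph{quantitatively} small, not merely below $1$ — exactly the regime reached by Chebyshev-accelerated gossip with only $R=\tilde\Theta(1/\sqrt{1-\beta})$ rounds, versus $\tilde\Theta(1/(1-\beta))$ for plain gossip; propagating the resulting logarithmic factor in $R$ through to the final rate (where it is hidden inside $\tilde O$, and where it accounts for $K$ being only $\tilde\Theta(T\sqrt{1-\beta})$ rather than $\Theta(T\sqrt{1-\beta})$) is the delicate bookkeeping. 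A secondary point, as in the non-convex case of Theorem~\ref{thm-MG-DSGD-rate-nc}, is arranging a single step-size rule that simultaneously yields the sharp transient term $\exp(-\mu T\sqrt{1-\beta}/L)$ (needing $\gamma=\Theta(1/L)$ early) and the optimal statistical term $\tfrac{L\sigma^2}{\mu^2 nT}$ (needing the decaying-in-$K$ choice once $T$ is large), handled by the usual two-regime minimum.
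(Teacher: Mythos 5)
Your proposal is correct and follows essentially the same route as the paper: a PL descent inequality on the exactly-preserved average iterate, a consensus recursion contracted by the Chebyshev-accelerated gossip factor $\tilde\beta=\sqrt2(1-\sqrt{1-\beta})^R$, the variance reduction $\sigma^2\to\sigma^2/R$ from accumulation, the choice $R=\tilde\Theta(1/\sqrt{1-\beta})$ to neutralize the heterogeneity and topology terms, and a min-of-candidates step size with $T=KR$. The only cosmetic difference is that you couple the two recursions through a scalar Lyapunov potential whereas the paper writes them as a $2\times2$ nonnegative matrix recursion and bounds its spectral radius by $1-\mu\gamma/4$ — these are the same argument.
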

\begin{remark}
Comparing \eqref{MG-DSGD-rate-2} and lower bound \eqref{eqn:lower-bound-PL}, we find its dependence on $\sigma^2$, $n$, $T$, and $1-\beta$ is optimal while that on $L$ and $\mu$ is not. One reason is that our algorithm does not involve any Nesterov acceleration \cite{nesterov2003introductory} mechanism. 
The established upper bound in a recent work \cite{Rogozin2021AnAM} is closer to the lower bound \eqref{eqn:lower-bound-PL} by utilizing Nesterov acceleration, see Table \ref{table-non-convex-PL-app} in Appendix \ref{app-detailed-comp}.
However, it requires an increasingly large mini-batch of data per iteration, and only works for the strongly-convex scenario. It is not known whether the acceleration mechanisms in \cite{Rogozin2021AnAM} can be extended to the non-convex and PL scenario. The bound \eqref{MG-DSGD-rate-2} is state-of-the-art under the general PL condition to our knowledge. 
\end{remark}

\vspace{ -2mm}
\section{Experiments}
\label{sec-experiments}
\vspace{ -2mm}

This section will validate our theoretical results by empirically comparing different decentralized algorithms \dsgd\cite{lian2017can}, \dsquare\cite{tang2019doublesqueeze}, \dsgt\cite{zhang2019decentralized}, \detag\cite{lu2021optimal} and \ours in deep learning. 

\textbf{Implementation setup.} We implement all decentralized algorithms with PyTorch \cite{paszke2019pytorch} 1.6.0 using NCCL 2.8.3 (CUDA 10.1) as the communication backend, in which the partial averaging primitive is implemented by NCCL send/receive primitives. For parallel SGD, we used PyTorch’s native Distributed Data Parallel (DDP) module. All the models and training scripts in this section run on servers with 8 NVIDIA V100 GPUs with each GPU treated as one node. Similar to existing works \cite{lu2021optimal}, we use Ring graph throughout all experiments.

\textbf{Tasks and training details.} A series of experiments are carried out with \cifar\cite{krizhevsky2009learning} and \imagenet\cite{deng2009imagenet} to compare the aforementioned methods. For \cifar dataset, it consists of 50,000 training images and 10,000 validation images in 10 classes. For \imagenet dataset, it consists of 1,281,167 training images and 50,000 validation images in 1000 classes. We utilize ResNet variants \cite{he2016deep} on \cifar (ResNet-20 with 0.27M parameters and  ResNet-18 with 11.17M parameters) and ResNet-50 on \imagenet. For training protocol, we train total 300 epochs and the learning rate is warmed up in the first 5 epochs and is decayed by a factor of 10 at 150 and 250-th epoch. For learning rate, we tuned a strong baseline in the \psgd setting (5e-3 for single node) and used the same setting in all decentralized methods. The batch size is set to 128 on each node. All \cifar experiments are repeated three times with different seeds. For fair comparison, we maintain the total number of gradient queries the same for algorithms in experiments, i.e., $T = KR$. 

\begin{table}[t]
\tablefontsize
\caption{Accuracy comparison on \cifar with homogeneous data (8 nodes).}
\vskip -1mm
\begin{center}
\begin{small}

\begin{sc}

\setlength{\tabcolsep}{3.25mm}{

\begin{tabular}{ccccc}
\toprule
 &  \multicolumn{2}{c}{With Momentum} &  \multicolumn{2}{c}{Without Momentum} \\

Methods   & ResNet18     & ResNet20   & ResNet18     & ResNet20   \\

\midrule

\psgd & 93.99 ± 0.52 & 91.62 ± 0.13  & 93.22 ± 0.19 & 89.06 ± 0.08  \\ 
\dsgd & 94.54 ± 0.05 & 91.46 ± 0.06  & 93.67 ± 0.04 & 89.14 ± 0.08   \\ 
\dsgt & 94.34 ± 0.04 & 91.64 ± 0.01 &  93.57 ± 0.02 & 88.85 ± 0.11 \\
\dsquare & 92.51 ± 0.07 & 89.83 ± 0.08  & 92.18 ± 0.08 & 88.67 ± 0.11 \\ 
\detag & 94.40 ± 0.13 & \textbf{91.77 ± 0.25}  & 93.17 ± 0.18 & 88.97 ± 0.11 \\ 
\ours & \textbf{94.57 ± 0.05} & \textbf{91.77 ± 0.09} & \textbf{93.75 ± 0.12}  & \textbf{89.18 ± 0.08} \\ 

\bottomrule
\end{tabular}}
\vskip -4mm
\end{sc}
\end{small}
\end{center}
\label{table-cifar10-performance}
\vskip -1mm
\end{table}

\begin{table}[t!]
    \centering
    \vskip -2mm    
    \caption{Accuracy comparison on \cifar with heterogeneous data (ResNet-20).} 
    \vskip -1mm    
    \label{tab:cifar10_heter_data}
    \begin{small}
\begin{sc}
\setlength{\tabcolsep}{1.25mm}{

\begin{tabular}{cccccc}
\toprule
Methods  &  \dsgd & \dsgt & \dsquare & \detag & \ours         \\
\midrule
$\alpha=0.1$ & 58.14 ± 1.76 & 55.16 ± 0.42 & 53.85 ± 0.40
 & 58.77 ± 1.30 & \textbf{59.42 ± 3.36} \\
$\alpha=1$  & 84.74 ± 0.97 & 84.74 ± 0.32 & 84.1 ± 0.64 & 84.82 ± 0.19 &  \textbf{84.91 ± 0.74} \\
$\alpha=10$  & 90.19 ± 0.47 & 89.83 ± 0.12 & 89.80 ± 0.25 & 90.27 ± 0.08 &  \textbf{90.43 ± 0.04} \\

\bottomrule
\end{tabular}}%
\end{sc}
\end{small} 
\end{table}

\begin{figure}[!h]
    \vskip -2mm
    \centering
    \includegraphics[width=0.35\textwidth]{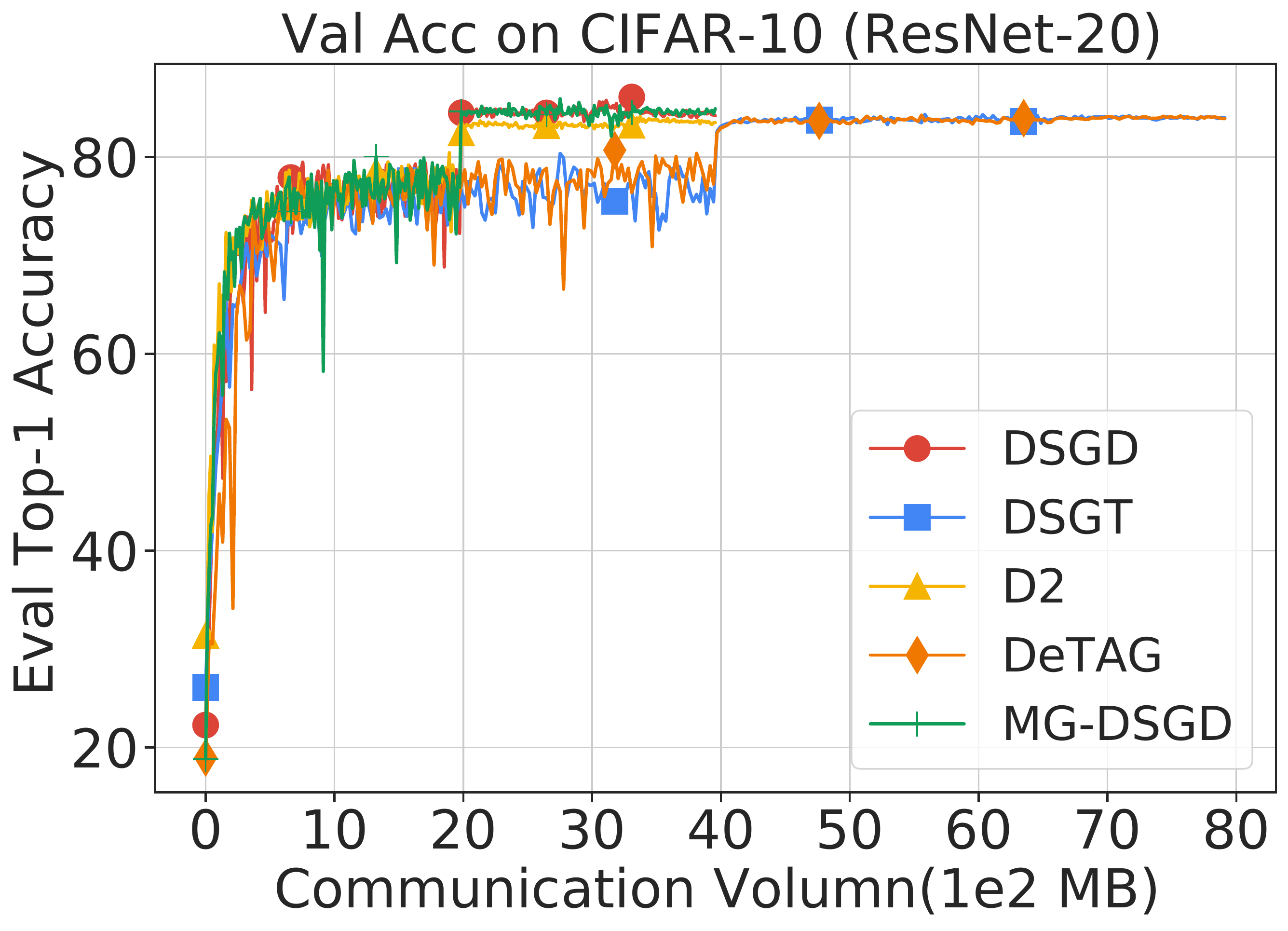}
    \includegraphics[width=0.35\textwidth]{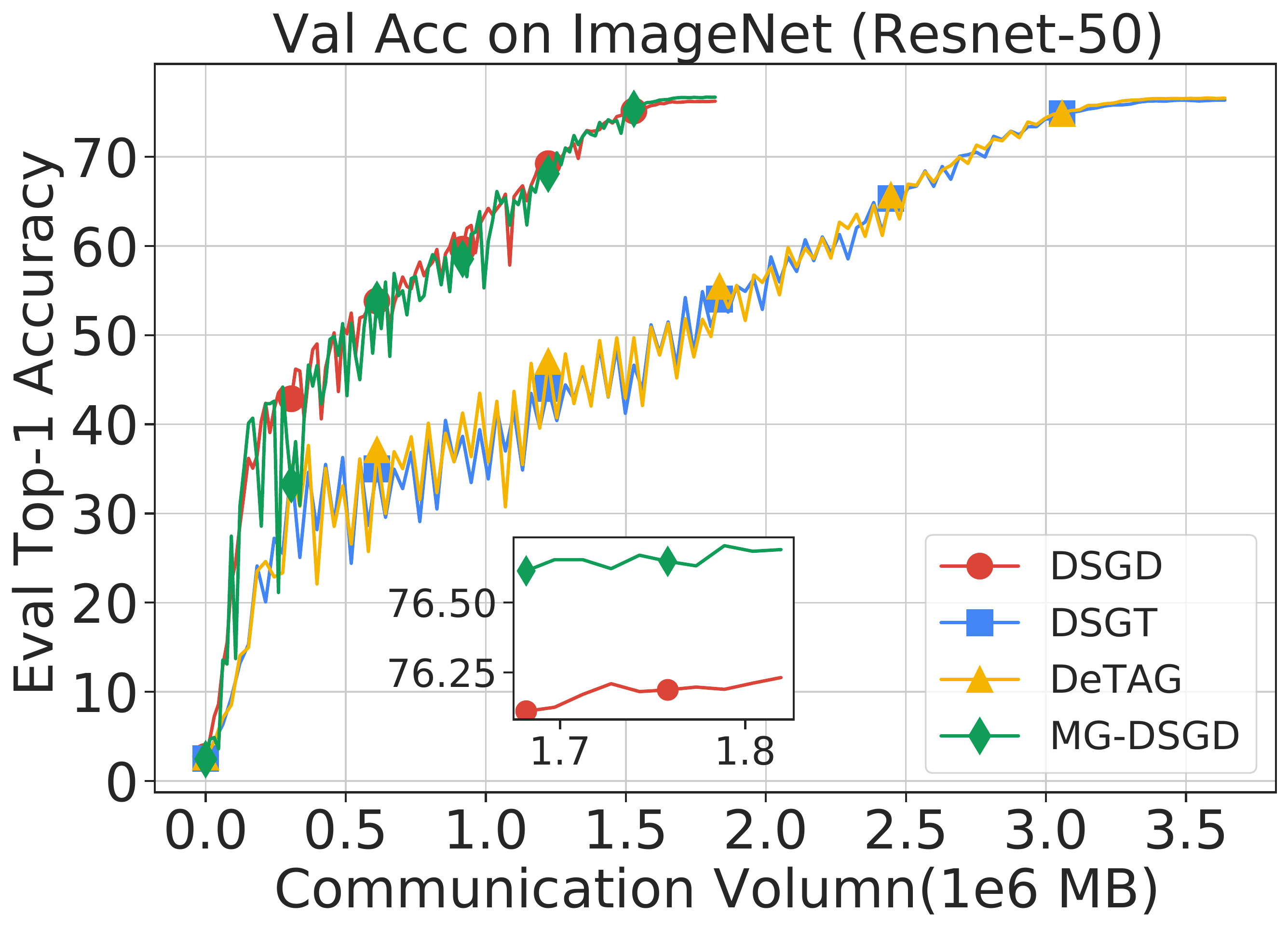}
    \vskip -3mm
    \caption{Accuracy in terms of the communication volumes. Left: CIFAR-10 with heterogeneous data ($\alpha=1$); Right: ImageNet with homogeneous data.}
    \label{fig:cifar10_alpha1}
    \vskip -5mm
\end{figure}

\textbf{Performance with homogeneous data.} Table~\ref{table-cifar10-performance} compares the proposed \ours with centralized parallel SGD and other baselines with homogeneous data distributions. For \detag and \ours, the communication round $R$ is set as 2. We also performed an  ablation study on the effect of momentum acceleration. Except for the ResNet-20 scenario with  momentum in which \ours and DeTAG achieve the same performance, \ours consistently outperforms baseline methods in other scenarios. Moreover, \ours is more communication-efficient than DSGT and DeTAG.  

\noindent\textbf{Performance with heteogeneous data.} Following previous works \cite{yurochkin2019bayesian,lin2021quasi}, we simulate data heterogeneity among nodes with a Dirichlet distribution-based partitioning. A quantity $\alpha$ controls the degree of the data heterogeneity. The training data for a particular class tends to concentrate in a single node as $\alpha\rightarrow 0$, i.e. becoming more heterogeneous, while the homogeneous data distribution is achieved as $\alpha\rightarrow \infty$. We tested $\alpha=0.1/1/10 $ for all compared methods as corresponding to a setting with high/mild/low heterogeneity. As shown in Table~\ref{tab:cifar10_heter_data}, DeTAG and \ours outperform the other baselines by visible margins, and \ours is with slightly better accuracies. The left plot in Fig.~\ref{fig:cifar10_alpha1} illustrates that \ours achieve a competitive accuracy to DeTAG using much less communication budgets on CIFAR-10 with $\alpha = 1$.

\begin{wraptable}{r}{4cm}
\vskip -5mm
\caption{Performance of different methods (ImageNet).} 

\label{tab:imageNet_acc}
\centering
\begin{small}

\begin{tabular}{cc}
\toprule
Methods & Acc. \\

\midrule
\dsgd & 76.23 ± 0.06 \\
\dsgt & 76.37 ± 0.07 \\
\detag & 76.57 ± 0.02 \\
\ours & \bf{76.64 ± 0.03}   \\

\bottomrule
\end{tabular}

\end{small} 
\vskip -5mm
\end{wraptable}

\noindent\textbf{Large-batch training in ImageNet.} To train ResNet-50 on ImageNet, utilization of the large-batch is necessary to speedup the training process. Since \ours is based on DSGD, we can extend the large-batch acceleration techniques introduced in \cite{yuan2021decentlam,lin2021quasi} to \ours to achieve better performance. These techniques are developed exclusively for DSGD and cannot be integrated to DeTAG easily. Following \cite{yuan2021decentlam}, we train total 120 epochs. The learning rate is warmed up in the first 20 epochs and is decayed in a cosine annealing scheduler. The results are listed in Table~\ref{tab:imageNet_acc}, showing that \ours has better accuracy than other approaches. Fig.~\ref{fig:cifar10_alpha1}(right) illustrates how different algorithms perform in terms of communication budgets used.

\noindent \textbf{More results.} In Appendix \ref{app-exp}, we depict the accuracy curves in terms of epochs (rather than communication volumes). The ablation on the influence of inner loop $R$ is also listed there. 
\vspace{ -3mm}
\section{Conclusion}
\vspace{ -3mm}

This paper revisits the theoretical limits in smooth and non-convex stochastic decentralized optimization. Existing lower bound proved in \cite{lu2021optimal} is only valid for special weight matrices with connectivity measure $\beta = \cos(\pi/n)$. It is not known what the lower bound is when using  weight matrices with other $\beta$. To address this issue, this paper establishes the lower bound for any weight matrix with $\beta \in [0, \cos(\pi/n)]$. We also clarify the lower bound when loss functions satisfy the PL condition. A new algorithm is proposed to match these lower bounds up to logarithm factors. 

\section*{Acknowledgements}
The authors are grateful to all anonymous reviewers in NeurIPS 2022 for the very helpful discussions during the rebuttal process as well as useful references on accelerated decentralized gradient descent.

{
\bibliographystyle{ieee_fullname}
\bibliography{references}
}

\newpage
\appendix 

\section{Preliminary}
\label{app-preliminary}
\vspace{2mm}
\noindent \textbf{Notation.} We first introduce necessary notations as follows.
\begin{itemize}
	\item $\vx^{(k)} = [(x_1^{(k)})^T; (x_2^{(k)})^T; \cdots; (x_n^{(k)})^T]\in \mathbb{R}^{n\times d}$
	\item $\nabla F(\vx^{(k)};\xi^{(k)}) = [\nabla F_1(x_1^{(k)};\xi_1^{(k)})^T; \cdots; \nabla F_n(x_n^{(k)};\xi_n^{(k)})^T]\in \mathbb{R}^{n\times d}$
	\item $\nabla F(\vx^{(k)};\xi^{(k,r)}) = [\nabla F_1(x_1^{(k)};\xi_1^{(k,r)})^T; \cdots; \nabla F_n(x_n^{(k)};\xi_n^{(k,r)})^T]\in \mathbb{R}^{n\times d}$
	\item $\nabla f(\vx^{(k)}) = [\nabla f_1(x_1^{(k)})^T; \nabla f_2(x_2^{(k)})^T; \cdots; \nabla f_n(x_n^{(k)})^T]\in \mathbb{R}^{n\times d}$ 
	\item $\bar{\vx}^{(k)} = [(\bar{x}^{(k)})^T; (\bar{x}^{(k)})^T; \cdots; (\bar{x}^{(k)})^T]\in \mathbb{R}^{n\times d}$ where $\bar{x}^{(k)} = \frac{1}{n}\sum_{i=1}^n x_i^{(k)}$
	\item $W=[w_{ij}]\in \mathbb{R}^{n\times n}$ is the weight matrix.
	\item $\mathds{1}_n = \mathrm{col}\{1,1,\cdots, 1\} \in \RR^n$.
	\item Given two matrices $\vx, \vy \in \RR^{n\times d}$, we define inner product $\langle \vx, \vy \rangle = \mathrm{tr}(\vx^T \vy)$ and the Frobenius norm $\|\vx\|_F^2 = \langle \vx, \vx \rangle$. 
	\item Given $W\in \RR^{n\times n}$, we let $\|W\|_2 = \sigma_{\max}(W)$ where   $\sigma_{\max}(\cdot)$ denote the maximum sigular value.
\end{itemize}

\vspace{2mm}
\noindent \textbf{DSGD in matrix notation.} The recursion of DSGD can be written in matrix notation: 
\begin{align}
\vx^{(k+1)}  = 
W \big(\vx^{(k)} - \gamma\nabla F(\vx^{(k)}; \xi^{(k)}) \big)
\label{eq:DSGD_global_average}
\end{align}

\vspace{2mm}
\noindent \textbf{\ours in matrix notation.} The main recursion of \ours can be written in matrix notation: 
\begin{align}
\vg^{(k)} &= \frac{1}{R}\sum_{r=1}^R  \nabla F(\vx^{(k)}; \xi^{(k,r)}) \\
\vx^{(k+1)} & = 
\bar{M} (\vx^{(k)} - \gamma \vg^{(k)} )
\label{eq:MG_DSGD_global_average}
\end{align}
where the weight matrix $\bar{M} = M^{(R)}$ and $M^{(R)}$ is achieved via the fast gossip averaging loop:
\begin{align}
	M^{(-1)} &= M^{(0)} = I \label{eqn:M1}\\
	M^{(r+1)} &= (1+\eta) W M^{(r)} - \eta M^{(r-1)}, \quad \forall\ r=0,1,\cdots, R-1.\label{eqn:M2}
\end{align}

\vspace{2mm}
\noindent \textbf{Smoothness.} Since each $f_i(x) \in \cF_{L}$ is $L$-smooth, it holds that $f(x) = \frac{1}{n}\sum_{i=1}^n f_i(x)$ is also $L$-smooth. As a result, the following inequality holds for any $\x, \y \in \mathbb{R}^d$:
\begin{align}
f_i(\x) - f_i(\y) - \frac{L}{2}\|\x - \y\|^2 &\le  \langle \nabla f_i(\y), \x- \y \rangle \label{sdu-2}
\end{align}

\noindent \textbf{Network weighting matrix.} Since the weight matrix $W\in \cW_{n,\beta}$, it holds that 
\begin{align}\label{network-inequaliy}
\|W - \frac{1}{n}\mathds{1}_n\mathds{1}_n^T\|_2 = \beta.
\end{align}
Furthermore, one can establish the following upper result for the multi-gossip matrix $\bar{M}$.
\begin{proposition}[Proposition 3 of \cite{liu2011accelerated}]\label{prop:fast-gossip}
Let $M^{(r)}$ defined by iterations \eqref{eqn:M1} and \eqref{eqn:M2}, then it holds that for any $r=0,\dots,R$,
\begin{align}
    M^{(r)}\one_n=[M^{(r)}]^T\one_n=\one_n,\quad \mbox{and}\quad \|M^{(r)}-\frac{1}{n}\one_n\one_n^T\|_2\leq \sqrt{2}\Big(1-\sqrt{1-\beta}\Big)^r.
\end{align}
\end{proposition}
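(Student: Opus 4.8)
The plan is to separate the elementary averaging identities from the analytic spectral‑decay estimate, reducing the latter to a scalar Chebyshev (heavy‑ball) polynomial bound. \textbf{Step 1 (averaging identities).} I would prove $M^{(r)}\one_n=[M^{(r)}]^T\one_n=\one_n$ by induction on $r$: the base cases are immediate since $M^{(-1)}=M^{(0)}=I$, and the inductive step follows at once from $M^{(r+1)}=(1+\eta)WM^{(r)}-\eta M^{(r-1)}$ together with $W\one_n=\one_n$, $\one_n^TW=\one_n^T$, and the identity $(1+\eta)-\eta=1$. The same induction also shows $M^{(r)}$ is a polynomial of degree at most $r$ in $W$; hence it commutes with $W$ and with $E:=\tfrac1n\one_n\one_n^T$, and, together with $WE=EW=E$, this yields $M^{(r)}E=EM^{(r)}=E$.

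\textbf{Step 2 (error recursion in polynomial form).} Set $N^{(r)}:=M^{(r)}-E$. Using $WE=E$ and $(1+\eta)E-\eta E=E$, one checks that $N^{(r)}$ satisfies the \emph{same} three‑term recursion $N^{(r+1)}=(1+\eta)WN^{(r)}-\eta N^{(r-1)}$, now with $N^{(-1)}=N^{(0)}=I-E$. Therefore $N^{(r)}=q_r(W)(I-E)$, where $q_r$ is the univariate polynomial defined by $q_{r+1}(t)=(1+\eta)t\,q_r(t)-\eta q_{r-1}(t)$ with $q_{-1}\equiv q_0\equiv1$; in particular $q_r(1)=1$ for all $r$ (again because $(1+\eta)-\eta=1$), which is exactly why the averaging direction stays invariant.

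\textbf{Step 3 (reduction to a scalar estimate) and Step 4 (decay bound).} For symmetric $W$, diagonalize $W=\sum_{i=1}^n\lambda_iv_iv_i^T$ with $\lambda_1=1$, $v_1=\one_n/\sqrt n$, and $|\lambda_i|\le\beta$ for $i\ge2$; then $N^{(r)}v_1=0$ and $N^{(r)}v_i=q_r(\lambda_i)v_i$, so
\[
\Big\|M^{(r)}-\tfrac1n\one_n\one_n^T\Big\|_2=\max_{2\le i\le n}|q_r(\lambda_i)|\le\max_{|t|\le\beta}|q_r(t)|.
\]
It remains to bound $\max_{|t|\le\beta}|q_r(t)|$. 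The recurrence for $q_r$ is precisely the (optimally tuned) heavy‑ball / Chebyshev acceleration applied to $\tfrac12 z^T(I-W)z$ with momentum parameter $\eta$, whose Hessian $I-W$ has spectrum in $[1-\beta,1+\beta]$; the prescribed $\eta$ is chosen so that the effective contraction rate of the two‑term iteration is at most $1-\sqrt{1-\beta}$ uniformly over $|t|\le\beta$, while the transient prefactor coming from the closed‑form solution of the scalar recursion (equivalently, from bounding powers of its $2\times2$ companion matrix) is at most $\sqrt2$. Combining with the display gives $\|M^{(r)}-\tfrac1n\one_n\one_n^T\|_2\le\sqrt2\,(1-\sqrt{1-\beta})^r$; this last analytic fact is exactly Proposition 3 of \cite{liu2011accelerated}, which I would cite rather than re‑derive.

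\textbf{Main obstacle.} The substantive point is Step 4 — showing $|q_r(t)|\le\sqrt2\,(1-\sqrt{1-\beta})^r$ with this precise constant and base, uniformly on $[-\beta,\beta]$, especially near the spectral edges $t=\pm\beta$ where the two characteristic roots of the recursion (nearly) coincide and crude estimates deteriorate. A secondary subtlety is a general, possibly non‑normal $W\in\cW_{n,\beta}$: the eigendecomposition used in Step 3 is then unavailable, and one instead bounds the stacked iterate $(N^{(r)},N^{(r-1)})$ directly through the block companion operator built from $(1+\eta)W$ and $-\eta I$ — which is where the careful bookkeeping in \cite{liu2011accelerated} becomes necessary.
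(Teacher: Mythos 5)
The paper gives no proof of this proposition at all — it imports the statement verbatim as Proposition 3 of \cite{liu2011accelerated} — and your proposal likewise defers the one substantive ingredient (the uniform scalar bound $|q_r(t)|\le\sqrt{2}\,(1-\sqrt{1-\beta})^{r}$ on $[-\beta,\beta]$) to that same citation, so the two treatments coincide in essence. The elementary parts you do write out — the induction giving $M^{(r)}\one_n=[M^{(r)}]^T\one_n=\one_n$ and the reduction $M^{(r)}-\tfrac1n\one_n\one_n^T=q_r(W)\bigl(I-\tfrac1n\one_n\one_n^T\bigr)$ — are correct, and your closing remark about non-symmetric $W\in\cW_{n,\beta}$ rightly flags the only place where the cited result needs care beyond the symmetric eigendecomposition.
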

Therefore, when $r$ grows,  $M^{(r)}$ exponentially converges to $\frac{1}{n}\one_n\one_n^T$.

\noindent \textbf{Submultiplicativity of the Frobenius norm.} Given matrices $W\in \RR^{n\times n}$ and $\vy\in \RR^{n\times d}$, it holds that 
\begin{align}\label{submulti}
\|W\vy\|_F \le \|W\|_2 \|\vy\|_F.
\end{align}
To verify it, by letting $y_j$ be the $j$-th column of $\vy$, we have $\|W\vy\|_F^2 = \sum_{j=1}^d \|Wy_j\|_2^2 \le \sum_{j=1}^d \|W\|_2^2 \|y_j\|_2^2=\|W\|_2^2\|\vy\|_F^2$. 

\section{More detailed comparison}
\label{app-detailed-comp}
\begin{table}[h!]
\centering 
\caption{\small Rate comparison in smooth and non-convex stochastic decentralized optimization. Parameter $n$ denotes the number of all computing nodes, $\beta\in [0,1)$ denotes the connectivity measure of the weight matrix, $\sigma^2$ measures the gradient noise, $b^2$ denotes data heterogeneity, {$L$ denotes smoothness constant}, and $T$ is the number of iterations. The definition of transient iteration complexity can be found in Remark \ref{rm-transient-iteration} (the smaller the better). ``LB'' is lower bound while ``UB'' is upper bound. Notation $\tilde{O}(\cdot)$ hides all logarithm factors.}
\begin{tabular}{rllll}
\toprule
                    & \textbf{References}                                                                           & \textbf{Gossip matrix}                     & \textbf{Convergence rate}    & \hspace{-2mm} \textbf{Tran. iters.}\\ \midrule
\multirow{2}{*}{LB} & \cite{lu2021optimal}                                                & $\beta = \cos(\pi/n)$ &         $\Omega\big( \frac{\sigma{\sqrt{L}}}{\sqrt{nT}}+ \frac{{L}}{T (1-\beta)^{\frac{1}{2}}} \big)$              &            $O(\frac{nL}{(1-\beta)\sigma^2})$                            \vspace{1mm}   \\
 & {\color{blue}Theorem \ref{thm-lower-bound-nc}}                                                                           & {\color{blue}$\boldsymbol{\beta \in [0, \cos(\pi/n)]}$}       &          {\color{blue}$\Omega\big( \frac{\sigma{\color{blue}\sqrt{L}}}{\sqrt{nT}}+ \frac{{\color{blue}L}}{T (1-\beta)^{\frac{1}{2}}} \big)$}            &            {\color{blue}$O(\frac{nL}{(1-\beta)\sigma^2})$}                                    \\ \midrule
\multirow{5}{*}{UB} & DSGD \cite{koloskova2020unified}                                    & $\beta \in [0, 1)$           &  \hspace{-1.2cm}    $O\big( \frac{\sigma \sqrt{L}}{\sqrt{nT}} \hspace{-1mm}+\hspace{-1mm} \frac{\sigma^{\frac{2}{3}}{L^{\frac{2}{3}}}}{T^{\frac{2}{3}}(1-\beta)^{\frac{1}{3}}} \hspace{-1mm}+\hspace{-1mm} \frac{b^{\frac{2}{3}}{L^{\frac{2}{3}}}}{T^{\frac{2}{3}}(1-\beta)^{\frac{2}{3}}} \big)$                &                    ${O}\big(\frac{n^3{L}}{(1-\beta)^2\sigma^2}\big)^\dagger$          \vspace{1mm}    \\
                             & D$^2$/ED \cite{tang2018d}                   & $\beta \in [0, 1)$           &        $O\big( \frac{\sigma{\sqrt{L}}}{\sqrt{nT}} + \frac{{L}}{T (1-\beta)^3} \big)$                &       ${O}(\frac{n{L}}{(1-\beta)^6\sigma^2})$                                          \vspace{1mm}\\
                             & DSGT \cite{koloskova2021improved} & $\beta \in [0, 1)$           & $\tilde{O}\big( \frac{\sigma{\sqrt{L}}}{\sqrt{nT}} + \frac{\sigma^{\frac{2}{3}}{L^{\frac{2}{3}}}}{T^{\frac{2}{3}}(1-\beta)^{\frac{1}{3}}}  \big)$  &        $\tilde{O}(\frac{n^3{L}}{(1-\beta)^2\sigma^2})$                                     \vspace{1mm}     \\
                             & DeTAG \cite{lu2021optimal}                                          & $\beta \in [0, 1)$           & $\tilde{O}\big( \frac{\sigma{\sqrt{L}}}{\sqrt{nT}}+ \frac{{L}}{T (1-\beta)^{\frac{1}{2}}} \big)$ &                  $\tilde{O}(\frac{n{L}}{(1-\beta)\sigma^2})$                           \vspace{1mm}   \\
                             & {\color{blue}MG-DSGD}                                                                              & {\color{blue}$\beta \in [0, 1)$}           & {\color{blue}$\tilde{O}\big( \frac{\sigma{\color{blue}\sqrt{L}}}{\sqrt{nT}}+ \frac{L}{T (1-\beta)^{\frac{1}{2}}}\big)$} &                       {\color{blue}$\tilde{O}(\frac{nL}{(1-\beta)\sigma^2})$}                          \\ \bottomrule
\multicolumn{5}{l}{$^\dagger$\,\footnotesize{The complete complexity is ${O}(\frac{n^3{L}}{(1-\beta)^2\min\{\sigma^2,(1-\beta)^2 b^2\}})$, which reduces to ${O}(\frac{n^3{L}}{(1-\beta)^2\sigma^2})$ for small $\sigma^2$.}}
\end{tabular}
\label{table-non-convex-app}
\end{table}

\begin{table}[h!]
\centering 
\caption{\small Rate comparison between different algorithms in smooth and non-convex stochastic decentralized optimization under the PL condition. {Quantity $\mu$ is the PL constant.}}
\begin{tabular}{rllll}
\toprule
& \textbf{References}                                                                           & \textbf{Gossip matrix}                     & \textbf{Convergence rate}    & \hspace{-2mm} \textbf{Tran. iters.}\\ \midrule
LB
 & {\color{blue}Theorem \ref{thm-lower-bound-pl}}                                                                           & {\color{blue}$\beta \in [0, \cos(\pi/n)]$}       &          {\color{blue}$\Omega\big( \frac{\sigma^2}{\mu nT}+ \frac{\mu}{L} e^{-T\sqrt{\mu(1-\beta)/L}} \big)$}            &            {\color{blue}$\tilde{O}(\frac{(L/\mu)^{1/2}}{(1-\beta)^{1/2}})$}                                    \\ \midrule
\multirow{5}{*}{UB} & DSGD \cite{koloskova2020unified}$^\dagger$                                    & $\beta \in [0, 1)$           &  \hspace{-1.2cm}    $\tilde{O}\big( \frac{\sigma^2}{{\mu}nT} \hspace{-0.5mm}+\hspace{-0.5mm} \frac{{L}\sigma^2}{{\mu^2}T^2(1-\beta)} \hspace{-0.5mm}+\hspace{-0.5mm} \frac{{L}b^2}{{\mu^2} T^2 (1-\beta)^{2}} \big)$                &                    $\hspace{-8mm}\tilde{O}(\frac{n{L/\mu}}{(1-\beta)\min\{1,(1-\beta) b^2\}})$           \vspace{1mm}    \\
& {DAGD\cite{Rogozin2021AnAM}$^\dagger$$^\ddag$}                 & {$\beta \in [0, 1)$}       &        \hspace{-1.2cm}  {$\tilde{O}\big( \frac{\sigma^2}{\mu nT}+  e^{-T\sqrt{\mu(1-\beta)/L}} \big)$}            &       \hspace{-3mm}     {$\tilde{O}(\frac{(L/\mu)^{1/2}}{(1-\beta)^{1/2}})$}                                 \vspace{1mm}    \\
 & D$^2$/ED \cite{yuan2021removing}$^\dagger$                   & $\beta \in [0, 1)$           &      \hspace{-1.2cm}    $\tilde{O}\big( \frac{\sigma^2}{{\mu}nT} \hspace{-0.5mm}+\hspace{-0.5mm} \frac{{L}\sigma^2}{{\mu^2}T^2(1-\beta)} \hspace{-0.5mm}+\hspace{-0.5mm} e^{-{\mu}T(1-\beta)/{L}} \big)$               &      \hspace{-3mm}   $\tilde{O}(\frac{n{L/\mu}}{1-\beta})$                                          \vspace{1mm}\\
 & DSGT \cite{alghunaim2021unified} & $\beta \in [0, 1)$           &  \hspace{-1.2cm}    $\tilde{O}\big( \frac{{L}\sigma^2}{{\mu^2}nT} \hspace{-0.5mm}+\hspace{-0.5mm} \frac{{L^2}\sigma^2}{{\mu^3}T^2(1-\beta)} \hspace{-0.5mm}+\hspace{-0.5mm} e^{-{\mu}T(1-\beta)/{L}} \big)$       &        \hspace{-3mm}  $\tilde{O}(\frac{n{L/\mu}}{1-\beta})$                                              \vspace{1mm}     \\
 & DSGT \cite{xin2020improved}                                          & $\beta \in [0, 1)$    &       \hspace{-1.2cm}    $\tilde{O}\big( \frac{{L}\sigma^2}{{\mu^2}nT} \hspace{-0.5mm}+\hspace{-0.5mm} \frac{{L^2}\sigma^2}{{\mu^3}T^2(1-\beta)^3} \hspace{-0.5mm}+\hspace{-0.5mm} e^{-{\mu}T(1-\beta)/{L}} \big)$  &        \hspace{-3mm}           $\tilde{O}(\frac{n{L/\mu}}{(1-\beta)^3})$                           \vspace{1mm}   \\
 & {\color{blue}MG-DSGD}                                                                              & {\color{blue}$\beta \in [0, 1)$}           & {\color{blue}$\tilde{O}\big( \frac{L\sigma^2}{\mu^2 nT}+ e^{-T\mu(1-\beta)^{\frac{1}{2}}/L} \big)$}   &               \hspace{-3mm}        {\color{blue}$\tilde{O}(\frac{L/\mu}{(1-\beta)^{1/2}})$}                       \\ \bottomrule  
\multicolumn{5}{l}{$^\dagger$\,\footnotesize{{These rates are} derived under the strongly-convex assumption, {not the general PL condition.}}}\\
\multicolumn{5}{l}{$^\ddag$\,\footnotesize{{This rate is achieved by utilizing increasing (non-constant) mini-batch sizes.} }}\\
\end{tabular}
\label{table-non-convex-PL-app}
\end{table}

\section{Ring-Lattice graph}
\label{app-ring-lattice}

For any $2\leq k<n-1$ and $k$ is even, the  ring-lattice graph $\cR_{n,k}$ has several preferable properties. The following lemma  estimates the order of the diameter of $\cR_{n,k}$. 

\textbf{Lemma \ref{lm-diameter} (Formal version)} \textit{
For any two nodes $1\leq i< j\leq n$, the distance between $i$ and $j$ is  $ \lceil \frac{2\min\{j-i, i+n-j\} }{k}\rceil$. In particularly, the diameter of $\cR_{n,k}$ is $D_{n,k}=\lceil \frac{2\lfloor n/2\rfloor}{k}\rceil=\Theta(\frac{n}{k})$.}
\begin{proof}
By symmetry of $\cR_{n,k}$, it suffices to consider the case that $j-i\leq i+n-j$. On the one side,  the path $\{i, k/2+i,k+i,\dots,   (\lceil{2(j-i)}/{k}\rceil-1) k/2 +i,j\}$ {connects} the pair $(i,j)$ with length $\lceil{2(j-i)}/{k}\rceil$, which leads to $\dist(i,j)\leq \lceil{2 (j-i)}/{k}\rfloor $. On the other side, let $\{i_0\triangleq i,i_1,\dots,i_{m-1},i_m\triangleq j \}$ be one of the shortest paths connecting $(i,j)$. Without loss of generality, we assume $i<i_1<\dots<i_{m-1}<j$. Since $|i_{r}-i_{r-1}|\leq k/2$ for any $r=1,\dots, m$ by the definition of $\cR_{n,k}$, it holds that 
\[
\mathrm{dist}(i,j)=m\geq \frac{|i_m-i_0|}{k/2}=\frac{2(j-i)}{k}.
\]
Since $\mathrm{dist}(i,j)$ is a integer, we reach $\mathrm{dist}(i,j)\geq \lceil \frac{2(j-i)}{k}\rceil$. The diameter is readily obtained by maximizing the expression of $\dist(i,j)$ with respect $i$ and $j$.
\end{proof}

The following lemma clarifies the Laplacian matrix associated with $\cR_{n,k}$ and its eigenvalues.

\begin{lemma}\label{prop-laplacian}
The Laplacian matrix of  $\cR_{n,k}$  is given by $L_{n,k}=kI-A_{n,k}$ with adjacency matrix $A_{n,k}=\sum_{\ell=1}^{k/2}(J^\ell+(J^T)^\ell)$ where
\[
J=\left[\begin{array}{ccccc}
0 & 1 &0 & \cdots&0 \\
0  & 0 & 1 &\cdots &0 \\
\vdots & \ddots & \ddots & \ddots & \vdots\\
0&0 & \ddots & 0 & 1 \\
1&0 &\cdots & 0 & 0
\end{array}\right]\in\RR^{n\times n}\quad \text{and}   \quad J^T=J^{-1}\in\RR^{n\times n} .
\]
Moreover, the eigenvalues of the Laplacian matrix $L_{n,k}$ are given by
\begin{align}\label{znbznzn1}
\mu_j\triangleq k+1- \sum_{\ell=-k/2}^{k/2}\cos(\frac{2\pi (j-1)\ell }{n})\in[0,2k], \quad \forall~j\in\{1\dots,n\}
\end{align}
where $\mu_j$ is the $j$-th eigenvalue of $L_{n,k}$. In addition, it holds that 
\begin{equation}
(1-\pi^2/12)\frac{\pi^2k(k+1)(k+2)}{6n^2}\leq\min_{2\leq j\leq n}\mu_j\leq \frac{\pi^2k(k+1)(k+2)}{6n^2}.\label{eqn:gjfoeqf}
\end{equation}
\end{lemma}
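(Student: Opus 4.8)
The plan is to prove Lemma \ref{prop-laplacian} in three stages: (1) identify the adjacency and Laplacian matrices as circulant matrices, (2) diagonalize them using the Fourier basis to get the explicit eigenvalue formula \eqref{znbznzn1}, and (3) estimate the smallest nonzero eigenvalue via a Taylor expansion of $\cos$ to obtain the two-sided bound \eqref{eqn:gjfoeqf}.

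First I would observe that since node $i$ in $\cR_{n,k}$ is connected exactly to $\{(i+\ell)\bmod n : 1\le |\ell|\le k/2\}$, the adjacency matrix is $A_{n,k}=\sum_{\ell=1}^{k/2}(J^\ell+(J^T)^\ell)$ where $J$ is the cyclic shift permutation matrix (the description of $J$ is immediate: $J$ has $1$'s on the superdiagonal plus a $1$ in the bottom-left corner, and $J^T=J^{-1}$ since $J$ is an orthogonal permutation matrix with $J^n=I$). Because the graph is $k$-regular, the degree matrix is $kI$, so $L_{n,k}=kI-A_{n,k}$. Each $J^\ell$ is circulant, hence $A_{n,k}$ and $L_{n,k}$ are circulant and simultaneously diagonalized by the discrete Fourier matrix with columns $v_j=\frac{1}{\sqrt n}(1,\omega^{j-1},\omega^{2(j-1)},\dots,\omega^{(n-1)(j-1)})^T$, $\omega=e^{2\pi\ri/n}$. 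The eigenvalue of $J$ on $v_j$ is $\omega^{j-1}$, so the eigenvalue of $A_{n,k}$ on $v_j$ is $\sum_{\ell=1}^{k/2}(\omega^{(j-1)\ell}+\omega^{-(j-1)\ell})=\sum_{\ell=1}^{k/2}2\cos(2\pi(j-1)\ell/n)$, and therefore the eigenvalue of $L_{n,k}$ is $\mu_j=k-\sum_{\ell=1}^{k/2}2\cos(2\pi(j-1)\ell/n)=k+1-\sum_{\ell=-k/2}^{k/2}\cos(2\pi(j-1)\ell/n)$, using that the $\ell=0$ term contributes $1$. Nonnegativity ($\mu_j\ge 0$) follows because $L_{n,k}$ is a graph Laplacian (PSD), and $\mu_j\le 2k$ since $\sum_{\ell=-k/2}^{k/2}\cos(\cdot)\ge -(k+1)$ trivially — or more simply $\mu_j=k-\sum 2\cos(\cdot)\le k+2\cdot(k/2)=2k$.

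For the bound \eqref{eqn:gjfoeqf}, the smallest nonzero eigenvalue is achieved at some $j$, and by symmetry ($\mu_j$ depends on $j$ through $\cos(2\pi(j-1)\ell/n)$, which is minimized-distance-from-$0$ when $j=2$ or $j=n$) I expect $\min_{2\le j\le n}\mu_j=\mu_2=k+1-\sum_{\ell=-k/2}^{k/2}\cos(2\pi\ell/n)$. The key step is to control $k+1-\sum_{\ell=-k/2}^{k/2}\cos(2\pi\ell/n)=\sum_{\ell=-k/2}^{k/2}(1-\cos(2\pi\ell/n))$. Using the elementary inequalities $\tfrac{x^2}{2}(1-\tfrac{x^2}{12})\le 1-\cos x\le \tfrac{x^2}{2}$ valid for the relevant range of $x=2\pi\ell/n$, one gets $\sum_{\ell=-k/2}^{k/2}(1-\cos(2\pi\ell/n))\le \sum_{\ell=-k/2}^{k/2}\tfrac{1}{2}(2\pi\ell/n)^2=\tfrac{2\pi^2}{n^2}\sum_{\ell=1}^{k/2}\ell^2$, and the standard identity $\sum_{\ell=1}^{m}\ell^2=\tfrac{m(m+1)(2m+1)}{6}$ with $m=k/2$ gives exactly $\tfrac{\pi^2 k(k+1)(k+2)}{6n^2}$ after simplification. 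The lower bound is the same computation with the extra factor $(1-\tfrac{x^2}{12})$, bounded below by $(1-\pi^2/12)$ since $|2\pi\ell/n|\le \pi k/n<\pi$ (using $k<n-1$, so actually $k/n<1$). I would then need to verify that the minimum over $2\le j\le n$ is indeed attained at $j=2$; this requires checking that $g(t):=\sum_{\ell=-k/2}^{k/2}(1-\cos(2\pi t\ell/n))$ for integer $t\in\{1,\dots,n-1\}$ is minimized at $t=1$ (equivalently $t=n-1$), which follows because $g$ is (up to the constant $k+1$) $k$ minus the Dirichlet-kernel-like sum, and this sum is a sum of cosines that — restricting to integer $t$ — is maximized at $t=1,n-1$; alternatively it is cleaner to just note $\mu_2$ is one specific eigenvalue so $\min_j\mu_j\le\mu_2$ for the upper bound in \eqref{eqn:gjfoeqf}, and for the lower bound argue $g(t)\ge g(1)$ for all integer $t\ne 0 \bmod n$.

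The main obstacle I anticipate is this last point: rigorously establishing that $\min_{2\le j\le n}\mu_j=\mu_2$, i.e. that among all integer frequencies the Fejér-type sum $\sum_{\ell=-k/2}^{k/2}\cos(2\pi(j-1)\ell/n)$ is largest (other than at $j=1$, where it equals $k+1$) precisely at $j-1\equiv\pm 1\pmod n$. For the upper bound in \eqref{eqn:gjfoeqf} this is not needed (any eigenvalue, e.g. $\mu_2$, upper-bounds the minimum). For the lower bound, I would show that for each integer $t$ with $1\le t\le n/2$ the sum $\sum_{\ell=-k/2}^{k/2}(1-\cos(2\pi t\ell/n))\ge \sum_{\ell=-k/2}^{k/2}(1-\cos(2\pi \ell/n))$ by a termwise-type or Abel-summation argument exploiting that the relevant arguments $2\pi t\ell/n$ stay in a range where $1-\cos$ behaves monotonically enough; if a fully termwise comparison fails I would fall back on the quadratic lower bound $1-\cos x\ge \tfrac{2}{\pi^2}x^2$ on $[-\pi,\pi]$ applied directly to $\mu_j$ with $x=2\pi(j-1)/n$ reduced mod $2\pi$, which already yields $\mu_j=\Omega(k(k+1)(k+2)/n^2)$ and suffices for the order-of-magnitude conclusion used downstream. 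Everything else is routine Fourier analysis of circulant matrices and elementary inequalities, so I would keep those computations terse.
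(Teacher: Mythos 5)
Your first two stages (circulant structure, Fourier diagonalization, the formula $\mu_j=k+1-\sum_{\ell=-k/2}^{k/2}\cos(2\pi(j-1)\ell/n)$, and the upper bound $\min_j\mu_j\le\mu_2\le\frac{\pi^2k(k+1)(k+2)}{6n^2}$ via $\cos\theta\ge1-\theta^2/2$) are correct and coincide with the paper's argument. The gap is in the lower bound of \eqref{eqn:gjfoeqf}, and it is exactly the point you flag as uncertain. Your primary plan --- reduce to $j=2$ by a termwise comparison $1-\cos(2\pi t\ell/n)\ge1-\cos(2\pi\ell/n)$ --- fails termwise: when $t\ell/n$ is close to an integer the left-hand term is close to $0$ while the right-hand term is positive. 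And the assertion that the Dirichlet-type sum $\sum_{\ell=-k/2}^{k/2}\cos(2\pi t\ell/n)$ is maximized over integer $t\in\{1,\dots,n-1\}$ at $t=1$ is precisely the nontrivial content of the lemma; ``by symmetry'' does not deliver it. The paper's missing idea is its auxiliary Lemma: write $h_k(\theta)=\sum_{\ell=-k/2}^{k/2}\cos(2\ell\theta)=\frac{\sin((k+1)\theta)}{\sin\theta}$ and bound it \emph{uniformly} on $[\pi/n,\pi-\pi/n]$ (so no claim about where the minimum over $j$ sits is ever needed). At any interior critical point one has $\tan((k+1)\theta)=(k+1)\tan\theta$, which forces $h_k(\theta)^2=\frac{(k+1)^2(1+\tan^2\theta)}{1+(k+1)^2\tan^2\theta}$, a decreasing function of $\tan^2\theta$, hence bounded by its value at $\theta=\pi/n$; combining with the endpoint value $h_k(\pi/n)$ gives the stated constant $(1-\pi^2/12)$.

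Your fallback ($1-\cos x\ge\frac{2}{\pi^2}x^2$ after reducing each $2\pi t\ell/n$ modulo $2\pi$) is salvageable but not routine: it reduces the claim to $\sum_{\ell=1}^{k/2}\|t\ell/n\|^2=\Omega(k^3/n^2)$ for every $t\not\equiv0\pmod n$, where $\|\cdot\|$ is distance to the nearest integer. That requires a residue-counting argument (set $d=\gcd(t,n)$, $m=n/d$, and track how often each residue class mod $m$ is hit, separating the cases $k/2\le m$ and $k/2>m$), which you have not carried out. Even completed, it yields only an order-of-magnitude bound with an unspecified constant, whereas the lemma asserts the explicit constant $(1-\pi^2/12)\frac{\pi^2}{6}$, and that constant is consumed downstream in the proof of Theorem \ref{thm-D-beta-relation} (the choice of $k$ in \eqref{eqn:k-choice}, the verification $k<n-1$, and the bound $\frac{1-\beta}{\min_{2\le j\le n}\mu_j}\le\frac{1}{2k}$ that makes $W$ positive semi-definite). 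So the downstream constants would all need reworking. In short: parts one and two of your proposal are fine; part three needs either the paper's critical-point analysis of $\sin((k+1)\theta)/\sin\theta$ or a fully worked-out equidistribution argument, neither of which is present.
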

\begin{proof}
	(\emph{Laplacian matrix.})  Since every node in $\cR_{n,k}$ is of degree $k$, the degree matrix of $\cR_{n,k}$ is $k I$. The adjacency matrix $A_{n,k}$ can be easily achieved by following the construction of $\cR_{n,k}$.
	
	(\emph{Eigenvalues.})  Let $\omega:=e^{2\pi i/n}$ (where $i$ is the imaginary number), then $J$ can be decomposed as $J=U \diag(\omega^{0}=1,\omega,\dots, \omega^{n-1})U^H$ with a unitary matrix $U=\frac{1}{\sqrt{n}}[\omega^{(p-1)(q-1)}]_{p,q=1}^n$. Since $J^\ell=U \diag(1,\omega^\ell,\dots, \omega^{\ell(n-1)})U^H$ for all $-k/2\leq \ell\leq k/2$, and $J^T = J^H=J^{-1}=U \diag(1,\bar{\omega}\dots, \bar{\omega}^{n-1})U^H$ where $\bar{\omega}=e^{-2\pi i/n}=\omega^{-1}$, we have that 
	\begin{align*}
	I+A_{n,k}&=\sum_{\ell=-k/2}^{k/2}J^{\ell}=U\left(\sum_{\ell=-k/2}^{k/2}\diag(1,\omega^\ell,\dots, \omega^{\ell(n-1)})\right)U^H\\
	&=U\left(\sum_{\ell=-k/2}^{k/2}\diag(1,\cos(\frac{2\pi \ell}{n}),\dots, \cos(\frac{2\pi \ell(n-1) }{n})\right)U^H\\
	&=U\diag\left(k+1, \sum_{\ell=-k/2}^{k/2} \cos(\frac{2\pi \ell}{n}), \dots,  \sum_{\ell=-k/2}^{k/2} \cos(\frac{2\pi \ell(n-1)}{n})\right)U^H.
	\end{align*}
	Therefore, the eigenvalues of $L_{n,k}$ are real numbers $\{\mu_j\triangleq k+1-\sum_{\ell=-k/2}^{k/2}\cos(\frac{2\pi (j-1)\ell }{n}):j=1,\dots,n\}$. It is easy to verify that $\mu_j\geq 0$ and 
	\begin{equation*}
	\mu_j=k-\sum_{1\leq |\ell|\leq k/2,\,k\neq 0}\cos(\frac{2\pi (j-1)\ell }{n})\leq 2k\quad \forall\, j=1,\dots,n.
	\end{equation*}
	
	Next we establish the bounds for $\min_{2\leq j\leq n}\mu_j$.
	For the upper bound, by the inequality $\cos(\theta)\geq 1-\theta^2/2$ for any $\theta\in\RR$, we have that 
	\begin{equation}
	\min_{2\leq j\leq n}\mu_j\leq  \mu_2 \leq (k+1)- \sum_{\ell=-k/2}^{k/2}(1-\frac{2\pi^2 \ell^2}{n^2})=2\pi^2 \sum_{\ell=-k/2}^{k/2}\frac{\ell^2}{n^2}=\frac{\pi^2 k(k+1)(k+2)}{6 n^2}.
	\end{equation}
	For the lower bound, by applying Lemma \ref{lem:jvpojwqfq} {(see below)} with $h_k(\theta)$ defined as $\sum_{\ell=-k/2}^{k/2}\cos( 2\ell\theta)$ and $\alpha = \frac{\pi}{n}$, we reach
	\begin{align}\label{eqn:vbjoemqf}
	\min_{2\leq j\leq n}\mu_j\geq& \inf_{\theta \in [\alpha,\pi-\alpha]}\left(k+1-h_k(\theta)\right)\nonumber\\
	\geq& k+1-\max\left\{\underbrace{(k+1)\left(1-\frac{k(k+2)\pi^2/n^2}{1+(k+1)^2\pi^2/n^2}\right)^\frac{1}{2}}_{\mathrm{I}},\underbrace{h_k(\pi/n)}_{\mathrm{II}}\right\}
	\end{align}
	For term I, using the inequality  $(1-z)^\frac{1}{2}\leq 1-\frac{1}{2}z$ for any $z\in[0,1]$, and $k+1\leq n$, we have 
	\begin{align}
	\mathrm{I}=&(k+1)\left(1-\frac{k(k+2)\pi^2/n^2}{1+(k+1)^2\pi^2/n^2}\right)^\frac{1}{2}\nonumber\\
	\leq&(k+1)\left(1-\frac{k(k+2)\pi^2/n^2}{2(1+(k+1)^2\pi^2/n^2)}\right)\leq (k+1)\left(1-\frac{\pi^2k(k+2)}{2(1+\pi^2)}\right).\label{eqn:jvoqwdqrdfA}
	\end{align}
	For term II, using the inequality  $\cos(\theta)\leq 1-\theta^2/2+\theta^4/24\leq 1-\frac{12-\pi^2}{24}\theta^2$ for all $\theta \in[0,\pi] $, we have 
	\begin{align}
	\mathrm{II}=&h_k(\pi/n)=\sum_{\ell=-k/2}^{k/2}\cos( \frac{2\pi\ell}{n})\leq \sum_{\ell=-k/2}^{k/2}\left(1-\frac{(12-\pi^2)\pi^2}{6}\frac{\ell^2}{n^2}\right)\nonumber\\
	=&k+1-\frac{(12-\pi^2)\pi^2}{6}\sum_{\ell=-k/2}^{k/2}\frac{\ell^2}{n^2}=k+1-(1-\pi^2/12)\frac{\pi^2 k(k+1)(k+2)}{6 n^2}.\label{eqn:jvoqwdqrdfB}
	\end{align}
	Plugging \eqref{eqn:jvoqwdqrdfA} and \eqref{eqn:jvoqwdqrdfB} into \eqref{eqn:vbjoemqf}, we reach \eqref{eqn:gjfoeqf}.
\end{proof}

We also establish an auxiliary lemma to facilitate the results in Lemma \ref{prop-laplacian}. 
\begin{lemma}\label{lem:jvpojwqfq}
	For any $k\geq 1$ and $k$ is even, let $h_k(\theta)=\sum_{\ell=-k/2}^{k/2}\cos(2\ell \theta)$ for any $\theta\in[\alpha,\pi-\alpha]$ with some $\alpha\in(0,\frac{\pi}{k+1}]$. It holds that 
	\begin{align}\label{lem:fasfsadf}
	\max_{\theta \in [\alpha,\pi-\alpha]}h_k(\theta)\leq\max\left\{
	(k+1)\sqrt{\frac{1+\alpha^2}{1+(k+1)^2\alpha^2}},h_k(\alpha)\right\}
	\end{align}
\end{lemma}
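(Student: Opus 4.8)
\textbf{Proof proposal for Lemma \ref{lem:jvpojwqfq}.}

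The plan is to study the trigonometric polynomial $h_k(\theta)=\sum_{\ell=-k/2}^{k/2}\cos(2\ell\theta)$ on the interval $[\alpha,\pi-\alpha]$ and bound its maximum. The first observation is that $h_k$ admits a closed form: summing the geometric-like series of cosines (a Dirichlet-kernel computation) gives, for $\sin\theta\neq 0$,
\begin{align}
h_k(\theta)=\frac{\sin\big((k+1)\theta\big)}{\sin\theta}.
\end{align}
So the lemma is really a statement about $\frac{\sin((k+1)\theta)}{\sin\theta}$ on $[\alpha,\pi-\alpha]$, which is exactly the Fej\'er/Dirichlet-type ratio one expects to appear when computing the spectral gap of a circulant graph.

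Next I would split the analysis according to whether the maximizer is an interior critical point or the left endpoint $\theta=\alpha$ (by the symmetry $h_k(\pi-\theta)=h_k(\theta)$ when $k$ is even, the right endpoint contributes nothing new). For the endpoint contribution we simply keep $h_k(\alpha)$, which is the second term in the max. For an interior critical point $\theta^\star$, differentiating $\sin((k+1)\theta)/\sin\theta$ and setting the derivative to zero yields the stationarity condition $(k+1)\cos((k+1)\theta^\star)\sin\theta^\star=\sin((k+1)\theta^\star)\cos\theta^\star$, i.e. $\tan\big((k+1)\theta^\star\big)=(k+1)\tan\theta^\star$. At such a point one can eliminate the transcendental terms: write $s=\sin((k+1)\theta^\star)$, use $\tan((k+1)\theta^\star)=(k+1)\tan\theta^\star$ to express $\cos((k+1)\theta^\star)$ and hence $s^2=\frac{(k+1)^2\tan^2\theta^\star}{1+(k+1)^2\tan^2\theta^\star}$, so that
\begin{align}
h_k(\theta^\star)^2=\frac{s^2}{\sin^2\theta^\star}=\frac{(k+1)^2\tan^2\theta^\star}{\sin^2\theta^\star\big(1+(k+1)^2\tan^2\theta^\star\big)}=\frac{(k+1)^2}{\cos^2\theta^\star+(k+1)^2\sin^2\theta^\star}.
\end{align}
The right-hand side is a decreasing function of $\sin^2\theta^\star$ (for $k\geq 1$), hence increasing as $\theta^\star$ moves toward $0$ or $\pi$; on $[\alpha,\pi-\alpha]$ it is therefore maximized at $\sin^2\theta^\star=\sin^2\alpha$, giving $h_k(\theta^\star)^2\le \frac{(k+1)^2}{\cos^2\alpha+(k+1)^2\sin^2\alpha}$. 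Finally, bounding $\cos^2\alpha\ge 1-\alpha^2$... more precisely I would use $\cos^2\alpha+(k+1)^2\sin^2\alpha \ge \frac{1+(k+1)^2\alpha^2}{1+\alpha^2}$ (which follows from $\sin^2\alpha\ge \alpha^2/(1+\alpha^2)$ and $\cos^2\alpha\ge 1/(1+\alpha^2)$ for $\alpha$ small enough, in particular $\alpha\le \pi/(k+1)$), yields $h_k(\theta^\star)\le (k+1)\sqrt{\frac{1+\alpha^2}{1+(k+1)^2\alpha^2}}$, which is the first term in the claimed max. Taking the larger of the interior bound and the endpoint value $h_k(\alpha)$ completes the proof.

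The main obstacle I anticipate is the endpoint-versus-interior bookkeeping together with verifying the elementary inequalities in the last step under the stated constraint $\alpha\in(0,\pi/(k+1)]$: one must be careful that the critical-point formula $\frac{(k+1)^2}{\cos^2\theta^\star+(k+1)^2\sin^2\theta^\star}$ is genuinely monotone on the relevant range (this requires $(k+1)^2\ge 1$, true for $k\ge 1$), and that the scalar inequalities $\sin^2\alpha\ge\alpha^2/(1+\alpha^2)$ and $\cos^2\alpha\ge 1/(1+\alpha^2)$ hold on $(0,\pi/(k+1)]$ — the latter is where the restriction on $\alpha$ is used. A cleaner alternative that avoids the closed form entirely is to write $h_k(\theta)=\mathrm{Re}\sum_{\ell=-k/2}^{k/2}e^{2i\ell\theta}$ and bound it by Cauchy–Schwarz against a suitable smoothing sequence, but I expect the Dirichlet-kernel route above to be the most direct.
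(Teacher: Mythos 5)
Your route is essentially the paper's: the Dirichlet-kernel closed form $h_k(\theta)=\sin((k+1)\theta)/\sin\theta$, the symmetry reduction, the stationarity condition $\tan((k+1)\theta^\star)=(k+1)\tan\theta^\star$, elimination of the transcendental terms at critical points, and monotonicity in $\theta$ down to the endpoint $\alpha$. Your expression $h_k(\theta^\star)^2\le (k+1)^2/\bigl(\cos^2\theta^\star+(k+1)^2\sin^2\theta^\star\bigr)$ is the same as the paper's $(k+1)^2\frac{1+\tan^2\theta^\star}{1+(k+1)^2\tan^2\theta^\star}$ after multiplying through by $\cos^2\theta^\star$, and both proofs reduce the final step to $\tan\alpha\ge\alpha$.

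There is, however, one flawed sub-step in your last paragraph. You justify $\cos^2\alpha+(k+1)^2\sin^2\alpha\ge\frac{1+(k+1)^2\alpha^2}{1+\alpha^2}$ by invoking the pair of inequalities $\sin^2\alpha\ge\alpha^2/(1+\alpha^2)$ and $\cos^2\alpha\ge 1/(1+\alpha^2)$. These two cannot both hold (each side of each sums to $1$), and indeed the second is \emph{false} for every $\alpha\in(0,\pi/2)$: it is equivalent to $\tan\alpha\le\alpha$. Only the first (equivalent to $\tan\alpha\ge\alpha$) is true, and no smallness of $\alpha$ rescues the second. The conclusion you need is nevertheless correct, and the fix is one line: writing $\delta=\sin^2\alpha-\frac{\alpha^2}{1+\alpha^2}\ge 0$, one has
\begin{align*}
\cos^2\alpha+(k+1)^2\sin^2\alpha-\frac{1+(k+1)^2\alpha^2}{1+\alpha^2}=-\delta+(k+1)^2\delta=\bigl((k+1)^2-1\bigr)\delta\ge 0,
\end{align*}
i.e.\ the deficit in the $\cos^2$ term is overcompensated by the surplus in the $\sin^2$ term because the latter carries the weight $(k+1)^2\ge 1$. (Equivalently, follow the paper and use directly that $z\mapsto\frac{1+z}{1+(k+1)^2z}$ is decreasing together with $\tan\theta\ge\tan\alpha\ge\alpha$.) One further minor point: at $\theta=\pi/2$ both cosines in the stationarity identity vanish (for $k$ even), so the tangent form degenerates there; the paper disposes of this case separately by noting $|h_k(\pi/2)|=1$ is below the first term of the max, and your final formula also extends continuously to that point, but the case should be acknowledged.
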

\begin{proof}
	Since $|h_k(\pi/2)|=1$ while $(k+1)\sqrt{\frac{1+\alpha^2}{1+(k+1)^2\alpha^2}} > 1$, we know \eqref{lem:fasfsadf} holds when $\theta = \pi/2$. Since $h_k(\pi-\theta)=h_k(\theta)$ by the definition of $h_k(\theta)$, it suffices to consider $\max_{\theta \in (\alpha,\pi/2)}|h_k(\theta)|$. 
	
	For any $\theta \in(\alpha,\pi/2)$, it holds that 
	\begin{align}
	h_k(\theta)=&\sum_{\ell=-k/2}^{k/2}\cos(2\ell \theta)=\frac{1}{2\sin(\theta)}\sum_{\ell=-k/2}^{k/2}2\cos(2\ell \theta)\sin(\theta)\nonumber\\
	=&\frac{1}{2\sin(\theta)}\sum_{\ell=-k/2}^{k/2}\left(\sin((2\ell+1)\theta)-\sin((2\ell-1)\theta)\right)\nonumber\\
	=&\frac{\sin((k+1)\theta)}{\sin(\theta)}.\nonumber
	\end{align}
	Therefore, we have that 
	\begin{align*}
	h_k^\prime(\theta) =&\frac{(k+1)\cos((k+1)\theta)\sin(\theta)-\sin((k+1)\theta)\cos(\theta)}{\sin^2(\theta)}\\
	=&\frac{\cos((k+1)\theta)\cos(\theta)}{\sin^2(\theta)}\left((k+1)\tan(\theta)-\tan((k+1)\theta)\right).\quad  (\text{if } \cos((k+1)\theta)\cos(\theta)\neq 0)
	\end{align*}
	The special case that $\cos((k+1)\theta)\cos(\theta)= 0$ and $ h_k^\prime(\theta)=0$ leads to that  $\cos(\theta)= 0$,  which  is impossible for $z\in(\alpha, \pi/2)$. Therefore, all the extrema of $h_k$ on $(\alpha,\pi/2)$ must satisfy 
	\begin{equation}\label{eqn:condi-stat}
	\tan((k+1)\theta)=(k+1)\tan(\theta).
	\end{equation} 
	
	By combining the cases of endpoints, i.e., $\theta=\alpha$ and $\theta=\pi/2$, with \eqref{eqn:condi-stat} for the interior extrema, we have that 
	\begin{align}
	&\max_{\theta \in [\alpha,\pi/2]}h_k(\theta)= \max\{\sup_{\substack{\theta \in (\alpha,\pi/2)\nonumber\\ \eqref{eqn:condi-stat}\text{ holds}}}h_k(\theta),h_k(\alpha),h_k(\pi/2)\}\\
	\leq&  \max\{\sup_{\substack{\theta \in (\alpha,\pi/2)\\\ \eqref{eqn:condi-stat}\text{ holds}}}\sqrt{h_k(\theta)^2},h_k(\alpha),1\}\nonumber\\
	= &\max\{\sup_{\substack{\theta \in (\alpha,\pi/2)\nonumber\\ \eqref{eqn:condi-stat}\text{ holds}}}\left(\frac{\frac{\tan((k+1)\theta)^2}{1+\tan((k+1)\theta)^2}}{\frac{\tan(\theta)^2}{1+\tan(\theta)^2}}\right)^\frac{1}{2},h_k(\alpha)^2,1\}\\
	\leq& \max\{\sup_{\substack{\theta \in (\alpha,\pi/2)\\\ \eqref{eqn:condi-stat}\text{ holds}}}\left(\frac{\frac{(k+1)^2\tan(\theta)^2}{1+(k+1)^2\tan(\theta)^2}}{\frac{\tan(\theta)^2}{1+\tan(\theta)^2}}\right)^\frac{1}{2},h_k(\alpha),1\} \nonumber\\
	\leq &\max\{(k+1) \sup_{\theta \in (\alpha,\pi/2)}\left(\frac{1+\tan(\theta)^2}{1+(k+1)^2\tan(\theta)^2}\right)^\frac{1}{2},h_k(\alpha)\}\nonumber\\
	\leq& \max\{ (k+1)\left(\frac{1+\alpha^2}{1+(k+1)^2\alpha^2}\right)^\frac{1}{2},h_k(\alpha)\},\nonumber
	\end{align}
	where the last inequality is because the function $\frac{1+z}{1+(k+1)^2z}$ is decreasing with respect to $z\in[0,+\infty)$ and $\tan(\theta)\geq \tan(\alpha)\geq \alpha$. 
\end{proof}

\textbf{Theorem \ref{thm-D-beta-relation}}  \textit{Given a fixed $n$ and any $\beta \in [0,\cos(\pi/n)]$, there exists a ring-lattice graph with an associated weight matrix $W$ such that
	\begin{itemize}
		\item[(i)]  $W\in\cW_{n,\beta}$ , i.e., $W \in \RR^{n\times n}$ and $\|W-\frac{1}{n}\mathds{1}_n\mathds{1}_n^T\|=\beta$;
		\item[(ii)] its diameter $D$ satisfies $D=\Theta(1/\sqrt{1-\beta})$.
	\end{itemize}
}
\begin{proof}
We prove this theorem in two cases: 

\emph{(Case 1: $0\leq \beta\leq\min\{ \cos(\pi/9),\cos(\pi/n)\}$.)} In this case, we consider the complete graph whose diameter $D=1$, and let $W=\frac{1-\beta}{n}\mathds{1}_n\mathds{1}_n^T+\beta I$, then it holds that $\|W-\frac{1}{n}\mathds{1}_n\mathds{1}_n^T\|_2=\beta$. Furthermore, since $1\geq 1-\beta\geq 1-\cos(\pi/9)=\Omega(1)$, we naturally have $1 =D=\Omega((1-\beta)^{-\frac{1}{2}})$. Note that the complete graph can also be regarded as a special ring-lattice graph $\cR_{n,n-1}$. 

\emph{(Case 2: $\cos(\pi/9)< \beta \leq \cos(\pi/n)$.)} Note that this case requires $n\geq 10$. Letting 
\begin{equation}\label{eqn:k-choice}
k=2\left\lceil n\sqrt{\frac{3(1-\beta)}{\pi^2(1-\pi^2/12)}}\right\rceil\geq \max\left\{2 n\sqrt{\frac{3(1-\beta)}{\pi^2(1-\pi^2/12)}},2\right\},
\end{equation} 
we consider the ring-lattice graph $\cR_{n,k}$ with the weight matrix 
\begin{align}\label{W-defi}
W=I-\frac{1-\beta}{\min_{2\leq j\leq n}\mu_j}L_{n,k}
\end{align}
where $k$ is defined as in \eqref{eqn:k-choice} and $\{\mu_j:2\leq j\leq n\}$ are given in Lemma \ref{prop-laplacian}. For such ring-lattice graph and its associated weight matrix defined in \eqref{W-defi}, we prove $D = \Omega((1-\beta)^{-\frac{1}{2}})$ in following steps:
\begin{itemize}[leftmargin=1em]
\item We first check whether $\cR_{n,k}$ is well-defined, i.e., $2\leq k<n-1$ and $k$ is even. It is easy to observe that $k$ is even by \eqref{eqn:k-choice}. Furthermore, since $\cos(\pi/9)<\beta$ and $n\geq 10$, we have 
\begin{align}\label{eqn:k-choice2}
k&<2\left( n\sqrt{\frac{3(1-\beta)}{\pi^2(1-\pi^2/12)}}+1\right)<2\left( n\sqrt{\frac{3(1-\cos(\pi/9))}{\pi^2(1-\pi^2/12)}}+1\right)<0.7n+2\leq n-1.
\end{align}

\item We next check whether $W_{n,k}\in\cW_{n,\beta}$, i.e., $\|W-\frac{1}{n}\mathds{1}_n\mathds{1}_n^T\|_2=\beta$. By Lemma \ref{prop-laplacian}, we have
\begin{equation*}
\frac{1-\beta}{\min_{2\leq j\leq n}\mu_j}\leq \frac{6(1-\beta)n^2}{(1-\pi^2/12)\pi^2k(k+1)(k+2)}<\frac{1}{2k}\frac{12(1-\beta)n^2}{(1-\pi^2/12)\pi^2k^2}\leq \frac{1}{2k}.
\end{equation*}
Since all eigenvalues of $L_{n,k}$ lie in $[0,2k]$ (see \eqref{znbznzn1}), it holds that $W$ is positive semi-definite (see definition \eqref{W-defi}). Moreover, 
\begin{equation*}
\|W-\frac{1}{n}\mathds{1}_n\mathds{1}_n^T\|_2 =\max_{2\leq \ell \leq n}\left\{1-\frac{1-\beta}{\min\{\mu_j:2\leq j\leq n\}}\mu_\ell\right\}=\beta\quad i.e.,\quad W\in\cW_{n,\beta}.
\end{equation*}

\item Finally, we check whether $\Theta(\frac{n}{k})=D=\Theta((1-\beta)^{-\frac{1}{2}})$.
By using the inequality $\cos(\theta)\leq 1-\theta^2/2+\theta^4/24$ with $\theta =\pi/n$, we have
\[
\beta\leq \cos(\pi/n)\leq 1-\pi^2/2n^2+\pi^4/24n^4\leq 1-\pi^2/3n^2.
\]
Therefore, we have  that $ n\sqrt{\frac{3(1-\beta)}{\pi^2(1-\pi^2/12)}}\geq n\sqrt{\frac{\pi^2}{n^2(1-\pi^2/12)}}\geq 2$
and hence
\begin{align}\label{eqn:k-choice3}
k<2\left( n\sqrt{\frac{3(1-\beta)}{\pi^2(1-\pi^2/12)}}+1\right)\leq 3n\sqrt{\frac{3(1-\beta)}{\pi^2(1-\pi^2/12)}}.
\end{align}
Combining \eqref{eqn:k-choice} and \eqref{eqn:k-choice3}, we achieve
\begin{equation*}
\frac{n}{k}=\Theta((1-\beta)^{-\frac{1}{2}}). 
\end{equation*}
This fact together with Lemma \ref{lm-diameter} leads to $D = \Theta(1/\sqrt{1-\beta})$. 
\end{itemize}
\end{proof}

\section{Lower bounds}
\label{app-lower-bounds}
\subsection{Non-convex Case: Proof of Theorem \ref{thm-lower-bound-nc}}\label{app:lower-nonconvex}
In this subsection, we establish the lower bound of the smooth and non-convex decentralized stochastic
optimization. Our analysis builds upon \cite{lu2021optimal} but utilizes the proposed ring-lattice graphs for the construction of worst-case instances, which significantly broadens the scope of weight matrices that the lower bound can apply to, i.e., from $\beta=\cos(\pi/n)$ in \cite{lu2021optimal} to any $\beta\in[0,\cos(\pi/n)]$. Specifically, we will prove for any  $\beta\in[0,\cos(\pi/n)]$ and $T=\Omega(1/\sqrt{1-\beta})$,
\begin{align}\label{eqn:fjowfqrgfsa}
    &\inf_{A\in\cA_W}\sup_{W\in\cW_{n,\beta}} \sup_{\{\tilde{g}_i\}_{i=1}^n\subseteq\cO_\sigma^2}\sup_{\{f_i\}_{i=1}^n\subseteq \cF_{L}}\mathbb{E}\|\nabla f(\hat{x}_{A, \{f_i\}_{i=1}^n,\{\tilde{g}_i\}_{i=1}^n,W,T})\|^2 \nonumber\\
    =&   \Omega\left(\frac{\sqrt{\Delta L}\sigma}{\sqrt{nT}}+ \frac{\Delta L}{T\sqrt{1-\beta}}\right).
\end{align}
where $\Delta := \mathbb{E}[f(x^{(0)})] -\min_{x}f(x)$.

We establish the two terms in \eqref{eqn:fjowfqrgfsa} separately as in  \cite{lu2021optimal} by constructing two hard-to-optimize instances.
We denote the $j$-th coordinate of a vector $x\in\RR^d$ by $[x]_j$ for $j=1,\dots,d$, and 
 let $\prog(x)$ be 
\begin{equation*}
    \prog(x):=\begin{cases}
    0 & \text{if $x=0$};\\
    \max_{1\leq j\leq d}\{j:[x]_j\neq 0\}& \text{otherwise}.
    \end{cases}
\end{equation*}
Similarly, for a set of multiple points $\cX=\{x_1,x_2,\dots\}$, we define $\prog(\cX):=\max_{x\in\cX}\prog(x)$.
As described in \cite{carmon2020lower,Arjevani2019LowerBF}, a  function $f$ is called zero-chain if it satisfies
\begin{equation*}
    \prog(\nabla f(x))\leq \prog(x)+1,\quad\forall\,x\in\RR^d,
\end{equation*}
which implies that, starting from $x=0$, a single gradient evaluation can only earn at most one more non-zero coordinate for the model parameters. We next introduce a key zero-chain function to facilitate the analysis.
\begin{lemma}[Lemma 2 of \cite{Arjevani2019LowerBF}]\label{lem:basic-fun}
Let function 
\begin{equation*}
    \ell(x):=-\Psi(1) \Phi([x]_{1})+\sum_{j=1}^{d-1}\Big(\Psi(-[x]_j) \Phi(-[x]_{j+1})-\Psi([x]_j) \Phi([x]_{j+1})\Big)
\end{equation*}
where for $\forall\, z \in \mathbb{R},$
$$
\Psi(z)=\begin{cases}
0 & z \leq 1 / 2; \\
\exp \left(1-\frac{1}{(2 z-1)^{2}}\right) & z>1 / 2,
\end{cases} \quad \Phi(z)=\sqrt{e} \int_{-\infty}^{z} e^{-\frac{1}{2} t^{2}} \mathrm{d}t.
$$
Then $\ell$ satisfy several properties as below:
\begin{enumerate}
    \item $\ell(x)-\inf_{x} \ell(x)\leq \Delta_0 d$, $\forall\,x\in\RR^d$ with $\Delta_0=12$.
    \item $\ell$ is $L_0$-smooth with $L_0=152$.
    \item $\|\nabla \ell(x)\|_\infty\leq G_0 $, $\forall\,x\in\RR^d$ with $G_0= 23$.
    \item $\|\nabla \ell(x)\|_\infty\ge 1 $ for any $x\in\RR^d$ with $[x]_d=0$. 
\end{enumerate}
\end{lemma}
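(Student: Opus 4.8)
This lemma packages four purely analytic facts about the fixed one-dimensional ``building blocks'' $\Psi$ and $\Phi$, so the plan is to first record a small toolbox of elementary estimates and then verify each item by direct computation. The toolbox I would set up is: (i) $\Psi\equiv 0$ on $(-\infty,1/2]$, $\Psi$ is $C^\infty$ and nondecreasing with $\Psi(1)=1$ and $0\le\Psi<e$, and $\Psi'(z)=4\Psi(z)(2z-1)^{-3}$, $\Psi''(z)=\Psi(z)\big(16(2z-1)^{-6}-24(2z-1)^{-4}\big)$ on $(1/2,\infty)$, from which one-variable calculus (after the substitution $v=2z-1$) yields explicit finite bounds $\|\Psi'\|_\infty=\sqrt{54/e}$ and $\|\Psi''\|_\infty\le 33$; (ii) $\Phi'(z)=\sqrt e\,e^{-z^2/2}$, so $\Phi$ is increasing with $0<\Phi<\sqrt{2\pi e}$ and $0<\Phi'\le\sqrt e$, $\Phi'$ is even and decreasing in $|z|$, and $\|\Phi''\|_\infty=\sqrt e\,e^{-1/2}=1$; and, crucially, (iii) the \emph{disjoint-support} fact that for any $z$ at most one of $\Psi(z),\Psi(-z)$ is nonzero (both vanish when $|z|\le 1/2$). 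I would also note the ``chain'' structure: $[\nabla\ell(x)]_j$ depends only on $[x]_{j-1},[x]_j,[x]_{j+1}$, so $\ell$ is a zero-chain and $\nabla^2\ell$ is tridiagonal.

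For Property 3, differentiate once: for $2\le j\le d-1$ one gets $[\nabla\ell(x)]_j=-\Psi(-[x]_{j-1})\Phi'(-[x]_j)-\Psi([x]_{j-1})\Phi'([x]_j)-\Psi'(-[x]_j)\Phi(-[x]_{j+1})-\Psi'([x]_j)\Phi([x]_{j+1})$, with the first two terms replaced by $-\Psi(1)\Phi'([x]_1)$ when $j=1$ and the last two absent when $j=d$. Bounding each factor with the toolbox and using (iii) to collapse the pair of ``$\Psi$'' and the pair of ``$\Psi'$'' contributions to a single term each gives $\|\nabla\ell\|_\infty\le\|\Psi\|_\infty\|\Phi'\|_\infty+\|\Psi'\|_\infty\|\Phi\|_\infty< e\sqrt e+\sqrt{54/e}\,\sqrt{2\pi e}<23=G_0$. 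For Property 1 I would instead bound $\ell(x)$ itself termwise: $-\Psi(1)\Phi([x]_1)\in[-\sqrt{2\pi e},0]$, and each of the $d-1$ remaining summands lies in $[-e\sqrt{2\pi e},e\sqrt{2\pi e}]$ by (iii) and the signs of $\Psi,\Phi$, giving $\ell(x)-\inf_x\ell(x)=O(d)$; a careful accounting of the constants yields $\Delta_0=12$ (for the lower-bound construction it in fact suffices to control $\ell(0)-\inf_x\ell(x)$, and $\ell(0)=-\Phi(0)$).

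Property 2 is the same calculation carried one order higher. The off-diagonal Hessian entry is $\partial^2\ell/\partial[x]_j\partial[x]_{j+1}=\Psi'(-[x]_j)\Phi'(-[x]_{j+1})-\Psi'([x]_j)\Phi'([x]_{j+1})$, of modulus $\le\|\Psi'\|_\infty\|\Phi'\|_\infty$ by (iii); the diagonal entry collects a $\Phi''$ contribution $\le\|\Psi\|_\infty\|\Phi''\|_\infty$ (plus a $\Psi(1)\|\Phi''\|_\infty$ term when $j=1$) and a $\Psi''$ contribution $\le\|\Psi''\|_\infty\|\Phi\|_\infty$. Since $\nabla^2\ell$ is symmetric tridiagonal, $\|\nabla^2\ell\|_{\mathrm{op}}\le\max_j\sum_i|[\nabla^2\ell]_{ij}|\le\|\Psi''\|_\infty\|\Phi\|_\infty+\|\Psi\|_\infty\|\Phi''\|_\infty+1+2\|\Psi'\|_\infty\|\Phi'\|_\infty$, which is numerically $\le 152=L_0$.

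Property 4 is the delicate part and I expect it to be the main obstacle, since it amounts to ruling out accidental cancellation in the chain gradient. Let $i^\ast:=\min\{i\in[d]:|[x]_i|<1\}$, which is well-defined precisely because $[x]_d=0$. I would examine only the $i^\ast$-th gradient coordinate. Reading the formula above at $j=i^\ast$, every nonzero term there carries the same negative sign (each of $\Psi,\Psi',\Phi,\Phi'$ is $\ge 0$), so $|[\nabla\ell(x)]_{i^\ast}|$ is at least the modulus of its leading term, namely $\Psi(1)\Phi'([x]_1)$ if $i^\ast=1$ and $\Psi(\varepsilon[x]_{i^\ast-1})\Phi'([x]_{i^\ast})$ if $i^\ast>1$, where $\varepsilon\in\{\pm1\}$ is chosen so that $\varepsilon[x]_{i^\ast-1}\ge 1$ (possible since $|[x]_{i^\ast-1}|\ge 1$). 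In both cases the $\Psi$-factor is $\ge\Psi(1)=1$, and $\Phi'([x]_{i^\ast})=\sqrt e\,e^{-[x]_{i^\ast}^2/2}>\sqrt e\,e^{-1/2}=1$ because $[x]_{i^\ast}^2<1$ (and for $i^\ast=d$ one has $[x]_d=0$, so $\Phi'(0)=\sqrt e>1$); evenness of $\Phi'$ absorbs the sign of $[x]_{i^\ast}$. Hence $\|\nabla\ell(x)\|_\infty\ge|[\nabla\ell(x)]_{i^\ast}|>1$. The only real care needed is the bookkeeping at $j=i^\ast$ — deciding which of the up-to-four terms are actually present when $i^\ast\in\{1,d\}$ versus $2\le i^\ast\le d-1$, and confirming that the leading term is always one of the present ones.
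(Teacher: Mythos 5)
The paper does not actually prove this lemma --- it is imported as Lemma 2 of \cite{Arjevani2019LowerBF} and used as a black box in Appendix \ref{app-lower-bounds} --- so the only meaningful comparison is with the original source, whose argument you essentially reconstruct: the disjoint-support observation for $\Psi$ and $\Psi'$, the tridiagonal Hessian controlled by its maximal absolute row sum, and the ``first coordinate with $|[x]_i|<1$'' device for the large-gradient property are exactly the standard ingredients. Your treatment of Properties 3 and 4 is correct; in particular the Property 4 argument (all terms of $[\nabla\ell(x)]_{i^\ast}$ share a sign, the surviving $\Psi$-factor is $\ge\Psi(1)=1$ because $|[x]_{i^\ast-1}|\ge 1$, and the $\Phi'$-factor exceeds $\sqrt{e}\,e^{-1/2}=1$ because $|[x]_{i^\ast}|<1$) goes through, including the boundary cases $i^\ast\in\{1,d\}$. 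For Property 2 the row-sum mechanism is right but the constant is genuinely tight: with the rounded bound $\|\Psi''\|_\infty\le 33$ your expression evaluates to roughly $33\sqrt{2\pi e}+e+1+2\sqrt{54}\approx 155>152$, and you only land under $152$ by using the sharper value $\|\Psi''\|_\infty\approx 32.4$ and noting that the extra ``$+1$'' from $-\Psi(1)\Phi''([x]_1)$ occurs only in the first row, which lacks one off-diagonal neighbor. This is constant bookkeeping, not a conceptual issue.

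The one step that cannot be completed as proposed is Property 1 in the uniform-in-$x$ form the paper states. Each summand of $\ell$ ranges over $[-e\sqrt{2\pi e},\,e\sqrt{2\pi e}]$ and both endpoints are asymptotically attained (send every coordinate to $-\infty$ for the supremum and to $+\infty$ for the infimum), so $\sup_x\ell(x)-\inf_x\ell(x)=\sqrt{2\pi e}\,(1+2e(d-1))\approx 22.5\,d$, which already exceeds $12d$ at $d=2$ (where it equals about $26.6>24$). No ``careful accounting of the constants'' can rescue $\Delta_0=12$ for that version, because the version is false. What is true --- and what \cite{Arjevani2019LowerBF} actually asserts, and what the paper actually uses in Step 1 of the proof of Theorem \ref{thm-lower-bound-nc} --- is the evaluation at the origin: all summands vanish at $x=0$, so $\ell(0)=-\Phi(0)=-\sqrt{\pi e/2}$, while $\inf_x\ell(x)\ge-\sqrt{2\pi e}\,(1+e(d-1))$, giving $\ell(0)-\inf_x\ell(x)\le 11.3\,d<12d$. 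Your parenthetical already identifies the correct target, so the fix is to prove (and state) the bound with $\ell(0)$ in place of $\ell(x)$; the discrepancy lies in the paper's transcription of the cited lemma rather than in your method.
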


We next establish the two terms in \eqref{eqn:fjowfqrgfsa} by constructing two hard-to-optimize instances.

\textbf{Instance 1.} The proof for the first term  $\Omega((\frac{\Delta L\sigma^2}{nT})^\frac{1}{2})$ essentially follows \cite{lu2021optimal}. We provide the proof for the sake of being self-contained.

(Step 1.) Let all $f_i$ be $L\lambda^2\ell(x/\lambda)/L_0$ where $\ell$ is defined in Lemma \ref{lem:basic-fun} and $\lambda$ is to be specified, and thus $f=L\lambda^2\ell(x/\lambda)/L_0$. Since $\nabla^2 f_i=L\nabla^2 \ell /L_0$ and $h$ is $L_0$-smooth (Lemma \ref{lem:basic-fun}), we know $f_i$ is $L$-smooth for any $\lambda>0$.
By Lemma \ref{lem:basic-fun}, we have
\begin{equation*}
    f(0)-\inf_x f(x)=\frac{L\lambda^2}{L_0}(\ell(0)-\inf_x \ell(x)) {\leq}\frac{L\lambda^2\Delta_0d}{L_0}. 
\end{equation*}
Therefore, to ensure $f_i\in\cF_{L}$ for all $1\leq i\leq n$ and $f(0)-\inf_{x}f(x)\leq \Delta$, it suffices to let 
\begin{equation}\label{eqn:jgowemw}
    \frac{L\lambda^2\Delta_0d}{L_0}\leq \Delta, \quad \text{i.e.,}\quad d\lambda^2\leq \frac{L_0 \Delta}{L\Delta_0}.
\end{equation}

(Step 2.) We construct the stochastic gradient oracle $\tilde{g}_i$ $\forall\,i=1,\dots,n$ as the follows:
\begin{equation*}
    [\tilde{g}_i(x)]_j=[\nabla f_i(x)]_j\left(1+\mathds{1}{\{j>\prog(x)\}}\left(\frac{Z}{p}-1\right)\right), \forall\,x\in\RR^d,\,j=1,\dots,d
\end{equation*}
with  $Z\sim \text{Bernoulli}(p)$, and $p\in(0,1)$ to be specified. The oracle $\tilde{g}_i(x)$ has probability $p$ to zero  $[\nabla f_i(x)]_{\prog(x)+1}$. 
It is easy to see $\tilde{g}_i$ is unbiased, i.e., $\EE[\tilde{g}_i(x)]=\nabla f_i(x)$ for all $x\in\RR^d$. Moreover, since $f_i$s are zero-chain, we have $\prog(\tilde{g}_i(x))\leq \prog(\nabla f_i(x))\leq \prog(x)+1$ and hence
\begin{align*}
\EE[\|\tilde{g}_i(x)-\nabla f_i(x)\|^2]&=|[\nabla f_i(x)]_{\prog(x)+1}|^2\EE\left[\left(\frac{Z}{p}-1\right)^2\right]=|[\nabla f_i(x)]_{\prog(x)+1}|^2\frac{1-p}{p}\\
&\leq \|\nabla f_i(x)\|_\infty^2\frac{1-p}{p}
\leq \frac{L^2\lambda^2(1-p)}{L_0^2p}\|\nabla \ell(x)\|_\infty^2\\
&\overset{\text{Lemma \ref{lem:basic-fun}}}{\leq } \frac{L^2\lambda^2(1-p)G_0^2}{L_0^2p}.
\end{align*}
Therefore, to ensure $\tilde{g}_i\in\cO_{\sigma^2}$ for all $1\leq i\leq n$, it suffices to let 
\begin{equation}\label{eqn:gjvownefoq}
     p=\min\{\frac{L^2\lambda^2G_0^2}{L_0^2\sigma^2},1\}.
\end{equation}
We let $W=(1-\beta)\frac{1}{n}\one_n\one_n^T+\beta I$, then obviously $W\in\cW_{n,\beta}$. 

(Step 3.) Next we show the error $\EE[\|\nabla f(x)\|^2]$ is lower bounded by $\Omega((\frac{\Delta L\sigma^2}{nT})^\frac{1}{2})$, with any algorithm $A\in\cA_W$. Let $x^{(t)}_i$, $\forall\,t=0,\dots$ and $1\leq i\leq n$, be the $t$-th query point of node $i$. Let $\prog^{(t)}=\max_{1\leq i\leq n,\,0\leq s< t}\prog(x^{(s)}_i)$. By  Lemma 2 of \cite{lu2021optimal}, we have 
\begin{equation}\label{eqn:vjowemnfq}
    \PP(\prog^{(T)}\geq d)\leq e^{(e-1)npT-d}.
\end{equation}
On the other hand, when $\prog^{(T)}< d$, by the fourth point in Lemma \ref{lem:basic-fun}, it holds that 
\begin{align}\label{eqn:vjowemnfq2}
    \min_{\hat{x}\in\mathrm{span}\{\{x_i^{(t)}\}_{1\leq i\leq n,0\leq t< T}\}}\|\nabla f(\hat{x})\|\geq& \min_{[\hat{x}]_{d}=0}\|\nabla f(\hat{x})\|=\frac{L\lambda}{L_0}\min_{[\hat{x}]_{d}=0}\|\nabla \ell(\hat{x})\|\geq \frac{L\lambda}{L_0}.
\end{align}                                                       
Therefore, by combining \eqref{eqn:vjowemnfq} and \eqref{eqn:vjowemnfq2}, we have
\begin{equation}\label{eqn:Lvjowenfq}
    \EE[\|\nabla f(\hat{x})\|^2]\geq (1-e^{(e-1)npT-d})\frac{L^2\lambda^2}{L_0^2}.
\end{equation}
Let 
\begin{equation}\label{eqn:jvbomewfqwgswdg}
    \lambda=\frac{L_0}{L}\left(\frac{\Delta L\sigma^2}{3nT  L_0\Delta_0 G_0^2}\right)^\frac{1}{4}\quad \text{and}\quad d=\left\lfloor\left(\frac{3L\Delta nT G_0^2}{\sigma^2L_0\Delta_0}\right)^\frac{1}{2}\right\rfloor.
\end{equation}
Then \eqref{eqn:jgowemw} naturally holds and $p=\min\{\frac{G_0^2}{\sigma^2}\left(\frac{\Delta L\sigma^2}{3nT L_0\Delta_0 G_0^2}\right)^\frac{1}{2},1\}$ by plugging \eqref{eqn:jvbomewfqwgswdg}  into \eqref{eqn:gjvownefoq}. Without loss of generality, we assume $T$ is sufficiently large such that $d\geq 2$. Then, using the definition of $p$, we have that
\begin{align}
    &(e-1)npT-d\leq (e-1)nT\; \frac{G_0^2}{\sigma^2}\left(\frac{\Delta L\sigma^2}{3nTL_0\Delta_0 G_0^2}\right)^\frac{1}{2}-d\nonumber\\
    =&\frac{e-1}{3}\left(\frac{3 L\Delta nT G_0^2}{\sigma^2 L_0\Delta_0 }\right)^\frac{1}{2}-d< \frac{e-1}{3}(d+1)-d\leq  1-e<0\label{eqn:Lvjobqwm}\nonumber
\end{align}
which, combined with \eqref{eqn:Lvjowenfq}, further implies
\begin{equation*}
    \EE[\|\nabla f(\hat{x})\|^2]=\Omega\left(\frac{L^2\lambda^2}{L_0^2}\right)=\Omega\left(\left(\frac{\Delta L\sigma^2}{3nT  L_0\Delta_0 G_0^2}\right)^\frac{1}{2}\right)=\Omega\left(\left(\frac{\Delta L\sigma^2}{nT}\right)^\frac{1}{2}\right).
\end{equation*}

\paragraph{Instance 2.} The proof for the second term  $\Omega(\frac{\Delta L}{T\sqrt{1-\beta}})$ utilizes weight matrices defined on the ring-lattice graphs  described in Theorem \ref{thm-D-beta-relation}. 

(Step 1.) Let functions 
\begin{equation*}
    \ell_1(x):=-\frac{n}{\lceil n/3\rceil}\Psi(1) \Phi([x]_{1})+\frac{n}{\lceil n/3\rceil}\sum_{j \text{ even, } 0< j<d}\Big(\Psi(-[x]_j) \Phi(-[x]_{j+1})-\Psi([x]_j) \Phi([x]_{j+1})\Big)
\end{equation*}
and 
\begin{equation*}
    \ell_2(x):=\frac{n}{\lceil n/3\rceil}\sum_{j \text{ odd, } 0<j<d}\Big(\Psi(-[x]_j) \Phi(-[x]_{j+1})-\Psi([x]_j) \Phi([x]_{j+1})\Big).
\end{equation*}
Compared to the $\ell$ function in instance 1,  the 
$\ell_1$ and $\ell_2$ defined here are $3L_0$-smooth.
Furthermore, let
\begin{equation*}
    f_i=\begin{cases}
    L\lambda^2 \ell_1(x/\lambda)/(3L_0)&\text{if }i\in E_1\triangleq\{j:1\leq j\leq \lceil \frac{n}{3}\rceil\},\\
    L\lambda^2 \ell_2(x/\lambda)/(3L_0)&\text{if }i\in E_2\triangleq\{j:\lfloor \frac{n}{2}\rfloor +1\leq j\leq \lfloor\frac{n}{2}\rfloor+\lceil \frac{n}{3}\rceil\},\\
    0&\text{else.}
    \end{cases}
\end{equation*}
where $\lambda>0$ is to be specified.
To ensure $f_i\in\cF_{L}$ for all $1\leq i\leq n$ and $f(0)-\inf_{x}f(x)\leq \Delta$, it suffices to let
\begin{equation}\label{eqn:jgowemw-dfshadasd}
    \frac{L\lambda^2\Delta_0d}{3L_0}\leq \Delta, \quad \text{i.e.,}\quad d\lambda^2\leq \frac{3L_0 \Delta}{L\Delta_0}.
\end{equation}
With the functions defined above, we have  $f(x)=\frac{1}{n}\sum_{i=1}^n f_i(x)=L\lambda^2 \ell(x/\lambda)/(3L_0)$ and 
\begin{align*}
    \prog(\nabla f_i(x))
    \begin{cases}
    =\prog(x) +1&\text{if } \{\prog(x) \text{ is even and } i\in E_1\}\cup\{\prog(x) \text{ is odd and }i\in E_2\}\\
    \leq \prog(x) &\text{otherwise}.
    \end{cases}
\end{align*}
Therefore, to make progress (i.e., to increase $\prog(x)$), for any gossip algorithm $A\in\cA_{W}$, one must take the gossip communication protocol to transmit information between $E_1$ to $E_2$ alternatively. Namely, it takes at least $\dist(E_1,E_2)$ rounds of gossip communications for any possible gossip algorithm $A$ to increase $\prog(\hat{x})$ by $1$. Therefore, we have
\begin{equation}\label{eqn:jofqsfdq}
    \prog^{(T)}=\max_{1\leq i\leq n,\,0\leq t< T}\prog(x^{(t)}_i)\leq \left\lfloor \frac{T}{\dist(E_1,E_2)}\right\rfloor+1,\quad \forall\,T\geq 0.
\end{equation}

(Step 2.) 
We consider a gradient oracle that return lossless full-batch gradients, i.e., $\tilde{g}_i=\nabla f_i(x)$, $\forall\,x\in\RR^d, \,1\leq i\leq n$. For the construction of weight matrix, we consider the ring-lattice graph  with diameter $D$ in Theorem \ref{thm-D-beta-relation} and its associated weight matrix $W$ such that $W\in\cW_{n,\beta}$. Then by Lemma \ref{lm-diameter} and Theorem \ref{thm-D-beta-relation}, we have $\dist(E_1,E_2)=\dist(\lceil {n}/{3}\rceil, \lfloor{n}/{2}\rfloor+1)=\Theta(D)=\Theta(1/\sqrt{1-\beta})$. Suppose $\dist(E_1,E_2)\geq 1 /(C\sqrt{1-\beta})$ with some absolute constant $C$, then by \eqref{eqn:jofqsfdq}, we have
\begin{equation}\label{eqn:jofqsfdq-0gsfa}
    \prog^{(T)}=\max_{1\leq i\leq n,\,0\leq s\leq T}\prog(x^{(s)}_i)\leq \left\lfloor {C\sqrt{1-\beta} T}\right\rfloor+1,\quad \forall\,T\geq 0.
\end{equation}

(Step 3.) We finally show the error $\EE[\|\nabla f(x)\|^2]$ is lower bounded by $\Omega\left(\frac{\Delta L}{\sqrt{1-\beta}T}\right)$, with any algorithm $A\in\cA_{W}$. 
For any $T\geq 1/(C\sqrt{1-\beta})=\Omega(1/\sqrt{1-\beta})$, consider
\begin{equation*}
    d= \left\lfloor C\sqrt{1-\beta} T\right\rfloor+2 <4C\sqrt{1-\beta} T
\end{equation*}
and
\begin{equation}\label{eqn:gjofefqfq}
     \lambda =\left(\frac{3L_0 \Delta}{4L \Delta_0 C\sqrt{1-\beta}T}\right)^\frac{1}{2} .
\end{equation}
Then \eqref{eqn:jgowemw-dfshadasd} naturally holds.
Since $\prog^{(T)}<d$ by \eqref{eqn:jofqsfdq-0gsfa}, following \eqref{eqn:vjowemnfq2} and using \eqref{eqn:gjofefqfq}, we have 
\begin{equation*}
    \EE[\|\nabla f(\hat{x})\|^2]\geq\min_{[\hat{x}]_{d}=0}\|\nabla f(\hat{x})\|^2\geq \frac{L^2\lambda^2}{9L_0^2}=\Omega\left(\frac{\Delta L}{\sqrt{1-\beta}T}\right).
\end{equation*}

\subsection{Non-convex Case with PL Condition: Proof of Theorem \ref{thm-lower-bound-pl}}
In this subsection, we provide the proof for smooth and non-convex decentralized stochastic
optimization under the PL condition: for any  $\beta\in[0,\cos(\pi/n)]$ and $T=\Omega(1/\sqrt{1-\beta})$,
\begin{align}\label{eqn:fjowfqrgfsa-fasfasfa}
&\inf_{A\in\cA_W}\sup_{W\in\cW_{n,\beta}} \sup_{\{\tilde{g}_i\}_{i=1}^n\subseteq\cO_\sigma^2}\sup_{\{f_i\}_{i=1}^n\subseteq \cF_{L,\mu}} \mathbb{E}[f(\hat{x}_{A, \{f_i\}_{i=1}^n,\{\tilde{g}_i\}_{i=1}^n,W,T}) - f^\star]\nonumber\\
= &   \Omega\left(\frac{\sigma^2}{\mu nT}+\frac{\mu \Delta}{L}\exp(-\sqrt{{\mu}/{L}}\sqrt{1-\beta}T)\right).
\end{align}
where $\Delta := \mathbb{E}[f(x^{(0)})]-\min_{x}f(x)$. We still prove the two terms in \eqref{eqn:fjowfqrgfsa-fasfasfa} separately. 

\textbf{Instance 1.}
Our proof for first term $\Omega(\frac{\sigma^2}{\mu nT} )$ is inspired by \cite{pmlr-v28-ramdas13}, who study the lower bounds in the stochastic but single-node regime. 

We consider all functions $f_i=f$ are homogeneous and  $W=\frac{1-\beta}{n}\mathds{1}_n\mathds{1}_n^T+\beta I\in \cW_{n\beta}$. We then choose two functions $f^1,\,f^{-1}\in\cF_{L,\mu}$ which are close enough to each other so that $f^1$ and $f^{-1}$ are hard to distinguish within $T$  gradient queries on each node. The indistinguishability between $f^1$ and $f^{-1}$ follows the standard Le Cam's method in hypothesis testing.
Next, we carefully show that the indistinguishability between $f^1$ and $f^{-1}$ can be properly translated  into the lower bound of the algorithmic performance.

Recall that $[x]_j$
denotes the $j$-th coordinate of vector $x\in\mathbb{R}^d$. 
Let 
\[
f^{v}= \frac{1}{2}\left(\mu([x]_1-v\lambda )^2+L \sum_{j=2}^d[x]_j^2\right),
\]
where $v\in\{\pm 1\}$, $d$ can be any integer greater than $2$, and $\lambda>0$ is a quantity to be determined. Clearly, $f^v$ is $L$-smooth and $\mu$-strongly convex, and thus satisfies the $\mu$-PL condition.
The optimum $x^{v,\star}$ of $f^v$ is $v\lambda e_1$ with function value $\min_{x\in\RR^d} f^{v}(x)=0$, where $e_1$ is the first canonical vector.

We construct the gradient oracle with $\cN(0,\sigma^2)$ noise.
Specifically, on query at point $x$ on  node $i$, the oracle returns $\tilde{g}_i(x)=\nabla f_i(x)+s_i$ with independent noise $s_i\sim \cN(0,\sigma^2)$. Obviously $\tilde{g}_i\in\cO_{\sigma^2}$ for all $1\leq i\leq n$.

Let $\vS^{(T)}:=\{(\vx^{(t)}\triangleq(x^{(t)}_1,\dots,x^{(t)}_n),\tilde{\vg}(\vx^{(t)})\triangleq (\tilde{g}_1(x^{(t)}_1),\dots,\tilde{g}_n(x^{(t)}_n))\}_{t=0}^{T-1}$ be set of variables corresponding to the sequence of $T$ queries on all nodes. Let $P^{v,T}:=P(\vS^{(T)}\mid f^v)$ be the joint distribution of  $\vS^{(T)}$ if the underlying function was $f^v$, i.e., $f_1=\cdots=f_n=f^v$.

\begin{lemma}\label{lem:vjoqwdsa}
Let $V\sim \mathrm{Unif}(\{\pm 1\})$, then for any optimization procedure $\hat{x}$ based on the observed gradients, it holds that 
\begin{align}
    \EE_{V}[f^V(\hat{x})]\geq \mu \lambda^2/2 \inf_{\hat{V}} \PP(\hat{V}\neq V)
\end{align}
where the randomness is over the random index $V$ and the observed data $\vS^{(T)}$ and the infimum is taken over all testing procedures $\hat{V}$ based on the observed queries.
\end{lemma}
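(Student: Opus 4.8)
The plan is to run the standard Le Cam reduction: turn the optimization error of an arbitrary procedure $\hat{x}$ into the error probability of an induced binary test, and then invoke (in the next step, not here) a bound on that testing error via the distance between $P^{1,T}$ and $P^{-1,T}$. First I would record the elementary pointwise lower bound that follows directly from the explicit form of $f^v$ and the fact that $\min_x f^v(x)=0$ is attained at $v\lambda e_1$:
\[
f^v(x) \;=\; \frac{1}{2}\Big(\mu\,([x]_1-v\lambda)^2 + L\sum_{j=2}^{d}[x]_j^2\Big) \;\ge\; \frac{\mu}{2}\,\big([x]_1-v\lambda\big)^2, \qquad \forall\, x\in\RR^d,\ v\in\{\pm1\}.
\]

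Next I would define the induced test $\hat{V}:=\sign([\hat{x}]_1)$ with the tie-breaking convention $\sign(0)=1$. This is a legitimate testing procedure because $\hat{x}$ is, by assumption, a (measurable) function of the observed queries $\vS^{(T)}$, and hence so is $\hat{V}$. The key deterministic implication is that whenever $\hat{V}\neq V$, the coordinate $[\hat{x}]_1$ is either $0$ or has sign opposite to $V$, whereas $V\lambda$ has sign $V$ and magnitude $\lambda$; in every such case $|[\hat{x}]_1 - V\lambda|\ge\lambda$. Plugging this into the display above yields the pointwise bound $f^V(\hat{x}) \ge \tfrac{\mu\lambda^2}{2}\,\one\{\hat{V}\neq V\}$.

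Finally I would take expectation over the random index $V$ and the observed data to obtain
\[
\EE_V\big[f^V(\hat{x})\big] \;\ge\; \frac{\mu\lambda^2}{2}\,\PP\big(\hat{V}\neq V\big) \;\ge\; \frac{\mu\lambda^2}{2}\,\inf_{\hat{V}} \PP\big(\hat{V}\neq V\big),
\]
where the infimum ranges over all tests measurable with respect to $\vS^{(T)}$, which is exactly the claimed inequality. I do not expect any real obstacle in this lemma itself: the only point requiring (minor) care is that the induced test be a function of the same observed gradients that $\hat{x}$ is allowed to use, which holds by construction. The genuinely substantive step is deferred to what comes after, namely lower-bounding $\inf_{\hat{V}}\PP(\hat{V}\neq V)$ by $\tfrac12(1-\|P^{1,T}-P^{-1,T}\|_{\mathrm{TV}})$ and then controlling that total-variation distance (via Pinsker together with the Gaussian-noise structure, summing the per-query KL contributions over the $n$ nodes and $T$ rounds) in terms of $\lambda^2 n T/\sig^2$; this is where the choice of $\lambda$ will ultimately be calibrated to produce the $\Omega(\sigma^2/(\mu n T))$ term.
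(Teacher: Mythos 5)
Your proof is correct and follows essentially the same Le Cam reduction as the paper: both induce a binary test from $\hat{x}$ whose error event forces $f^V(\hat{x})\ge \mu\lambda^2/2$, and then take expectations over $V$ and the observed data. The only cosmetic difference is that you use the explicit test $\sign([\hat{x}]_1)$ with a direct pointwise bound $f^V(\hat{x})\ge \tfrac{\mu\lambda^2}{2}\one\{\hat{V}\neq V\}$, whereas the paper defines the test by which of $f^{\pm1}(\hat{x})$ falls below the threshold $\mu\lambda^2/2$ and routes the argument through Markov's inequality; both are valid.
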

\begin{proof}
Since $P^{v,T}$ is probability of the observed queries conditioned on $\{V=v\}$, by Markov's inequality, we have 
\begin{align}\label{eqn:vgjoweni}
    \EE_{V}[f^V(\hat{x})]\geq \mu \lambda^2/2\, \EE_{V}[P^{V,T}(f^V(\hat{x})\geq \mu \lambda^2/2)].
\end{align}
Now, we define the test $\hat{V}_{\mathrm{op}}$ based on the optimization procedure $\hat{x}$ as follows:
\begin{align*}
    \hat{V}_{\mathrm{op}}=\begin{cases}
    v &\text{if $f^v(\hat{x})< \mu \lambda^2/2$}\\
    \text{randomly pick one from $\{\pm1\}$} & \text{otherwise.} 
    \end{cases},
\end{align*}
By the definition of $f^v$, at most one of $\{f^{1}(\hat{x}),f^{-1}(\hat{x})\}$ is strictly below $\mu \lambda^2/2$, so $\hat{V}_{\mathrm{op}}$ is well-defined. 
By the definition of $\hat{V}_{\mathrm{op}}$, $\{\hat{V}_{\mathrm{op}}\neq v\}$ implies $\{f^{v}(\hat{x})\geq \mu \lambda^2/2\}$. Thus we have 
\begin{align}\label{eqn:gvkopwemfe}
   P^{v,T}(f^v(\hat{x})\geq \mu \lambda^2/2)\geq P^{V,T}(\hat{V}_{\mathrm{op}}\neq v)\geq \inf_{\hat{V}} \PP(\hat{V}\neq v)\quad\text{for any }v\in\{\pm1\}
\end{align}
Plugging \eqref{eqn:gvkopwemfe} into \eqref{eqn:vgjoweni}, we reach the conclusion.
\end{proof}

Based on Lemma \ref{lem:vjoqwdsa}, we know that the worst-case optimization performance in the above scenario can be lower bounded by the indistinguishabiligty between to hypothesises $f^1$ and $f^{-1}$. Recall the following standard result of Le Cam \cite{Cam1986AsymptoticMI}
\begin{align}
    \inf_{\hat{V}} \left(P^{1,T }(\hat{V} \neq 1)+P^{-1,T }(\hat{V}\neq -1)\right)=1- \|P^{1,T}-P^{-1,T}\|_{TV}
\end{align}
where $\|P^{1,T}-P^{-1,T}\|_{TV}$ is the total variation distance between the two distributions.  We can precisely measure the indistinguishabiligty between  $f^1$ and $f^{-1}$ by $\|P^{1,T}-P^{-1,T}\|_{TV}$. Since the total variation distance is hard to compute, we turn to compute the KL-divergence, which is a relaxation of the total variation distance due to Pinsker's inequality \cite{Csiszr2011InformationT}:
\begin{align}
    \|P^{1,T}-P^{-1,T}\|_{TV}^2\leq \frac{1}{2} \mathrm{D_{KL}}(P^{1,T}||P^{-1,T}).
\end{align}
\begin{lemma}\label{lem:kldiv}
The KL-divergence between $P^{1,T}$ and $P^{-1,T}$ is upper bounded by:
\begin{align}\label{eqn:kldiv}
    \mathrm{D_{KL}}(P^{1,T}||P^{-1,T})\leq 2nT\mu^2\lambda^2/\sigma^2.
\end{align}
\end{lemma}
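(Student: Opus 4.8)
The plan is to bound $\mathrm{D_{KL}}(P^{1,T}\|P^{-1,T})$ by decomposing it round-by-round over the $T$ interactions and then computing each per-round contribution explicitly, exploiting the fact that $f^1$ and $f^{-1}$ have identical gradients except in the first coordinate.

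First I would set up the divergence-decomposition (chain rule) for KL along the natural filtration of the interaction. Let $\cH_t$ denote the history $\{\vx^{(0)},\tilde{\vg}(\vx^{(0)}),\dots,\vx^{(t-1)},\tilde{\vg}(\vx^{(t-1)})\}$. Under either hypothesis $v\in\{\pm1\}$, the next query tuple $\vx^{(t)}$ is produced by the same algorithm $A$ (with the same, $v$-independent randomization) applied to $\cH_t$ through the same gossip matrix $W=\frac{1-\beta}{n}\mathds{1}_n\mathds{1}_n^T+\beta I$; hence the conditional law of $\vx^{(t)}$ given $\cH_t$ does not depend on $v$, and all dependence on $v$ enters only through the conditional law $Q^v_t$ of $\tilde{\vg}(\vx^{(t)})$ given $(\cH_t,\vx^{(t)})$. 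The chain rule for KL-divergence then gives
\[
\mathrm{D_{KL}}(P^{1,T}\|P^{-1,T}) = \sum_{t=0}^{T-1}\EE_{P^{1,T}}\big[\mathrm{D_{KL}}(Q^1_t\|Q^{-1}_t)\big].
\]

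Second I would evaluate the per-round term. Conditioned on $\vx^{(t)}=(x_1^{(t)},\dots,x_n^{(t)})$, the observations at the $n$ nodes are independent, with node $i$ returning $\nabla f^v(x_i^{(t)})$ perturbed by $\cN(0,\sigma^2)$ noise on the first coordinate (the only coordinate where $f^1$ and $f^{-1}$ differ). Since $f^v(x)=\frac{1}{2}\big(\mu([x]_1-v\lambda)^2+L\sum_{j\ge2}[x]_j^2\big)$, we have $\nabla f^1(x)-\nabla f^{-1}(x)=-2\mu\lambda\,e_1$ for every $x$, so $Q^1_t$ and $Q^{-1}_t$ differ only in the mean of each node's first coordinate, by $2\mu\lambda$, with variance $\sigma^2$ there. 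The Gaussian KL formula $\mathrm{D_{KL}}(\cN(a,\sigma^2)\|\cN(b,\sigma^2))=(a-b)^2/(2\sigma^2)$ together with additivity of KL over the $n$ independent nodes yields $\mathrm{D_{KL}}(Q^1_t\|Q^{-1}_t)=\sum_{i=1}^n\frac{(2\mu\lambda)^2}{2\sigma^2}=\frac{2n\mu^2\lambda^2}{\sigma^2}$, deterministically in $\vx^{(t)}$. Plugging into the decomposition and summing over $t=0,\dots,T-1$ gives $\mathrm{D_{KL}}(P^{1,T}\|P^{-1,T})\le \frac{2nT\mu^2\lambda^2}{\sigma^2}$, which is \eqref{eqn:kldiv}.

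The main obstacle is making the first step fully rigorous: justifying the divergence decomposition for an adaptive, interactive querying protocol, which requires carefully identifying the filtration, verifying that the algorithm's (randomized) policy — including the partial-averaging/gossip step and the zero-respecting update — is a Markov kernel that does not depend on $v$, and invoking the chain rule for the induced path measures, as in standard information-theoretic lower-bound arguments. Once that is in place, the second step is only the routine Gaussian KL computation plus additivity over nodes and rounds.
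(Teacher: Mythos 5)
Your proposal is correct and follows essentially the same route as the paper: a chain-rule/Markov-property decomposition of the KL along the query sequence (valid because the algorithm's querying policy is the same under both hypotheses), followed by the Gaussian KL formula applied to $\nabla f^1(x)-\nabla f^{-1}(x)=-2\mu\lambda e_1$, giving $2\mu^2\lambda^2/\sigma^2$ per node per round. The only cosmetic difference is that the paper bounds the per-round term by a supremum over $x$ while you observe it is constant in $x$; the computation is identical.
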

\begin{proof}
Since the observed data obey Markov's property, i.e., 
$\tilde{g}_i(x^{(t)}_i)\perp \vS^{(t)} \mid x^{(t)}_i$ for any $1\leq i\leq n$ and $0\leq t<T$, we have
\begin{align*}
    &\mathrm{D_{KL}}(P^{1,T}||P^{-1,T})=\EE_{P^{1,T}}\left[\log\left(P^{1,T}/P^{-1,T}\right)\right]\\
    =&\EE_{P^{1,T}}\left[\log\left(\prod_{t=0}^{T-1}\PP(\tilde{\vg}(\vx^{(t)})\mid \vx^{(t)},f^1)/\prod_{t=0}^{T-1}\PP(\tilde{\vg}(\vx^{(t)})\mid \vx^{(t)},f^{-1} )\right)\right]\\
    =&\sum_{t=0}^{T-1}\EE_{\vS^{(t)}\mid f^{1}}\left[\EE_{\tilde{\vg}(\vx^{(t)})\sim\cN(\nabla f^1(\bx),\sigma^2)}\left[\log\left(\PP(\vg(\vx^{(t)})\mid \vx^{(t)},f^1 )/\PP(\vg(\vx^{(t)})\mid \vx^{(t)},f^{-1} )\right)\right]\right]\\
    =&\sum_{t=0}^{T-1}\sum\limits_{i=1}^n\EE_{\vS^{(t)}\mid f^{1}}\left[\mathrm{D_{KL}}\left(\PP(\tilde{g}(x_i^{(t)})\mid x_i^{(t)},f^1 )||\PP(g(x_i^{(t)})\mid x_i^{(t)},f^{-1} )\right)\right].
\end{align*}
Then we apply the uniform upper bound over $x$ as follows:
\begin{align}\label{eqn:vjoqwnesc}
    \mathrm{D_{KL}}(P^{1,T}||P^{-1,T})
    &\leq  nT\sup_{x,i}\mathrm{D_{KL}}\left(\PP(\tilde{g}_i(x)\mid x,f^1 )||\PP(\tilde{g}_i(x)\mid x,f^{-1} )\right).
\end{align}
Note the KL-divergence between two Gaussians can be computed explicitly from their means and variances as follows: for any $x\in\RR^d$
\begin{align}\label{eqn:gvi0qwe}
    &\mathrm{D_{KL}}\left(\PP(\tilde{g}_i(x)\mid x,f^1 )||\PP(\tilde{g}_i(x)\mid x,f^{-1} )\right)\nonumber\\
    =&\mathrm{D_{KL}}\left(\cN(\nabla f^{1}(x),\sigma^2))||\cN(\nabla f^{-1}(x),\sigma^2))\right)
    =\frac{1}{2\sigma^2}\|\nabla f^1(x)-\nabla f^{-1}(x)\|^2= \frac{2\mu^2\lambda^2}{\sigma^2}.
\end{align}
Plugging \eqref{eqn:gvi0qwe} into \eqref{eqn:vjoqwnesc}, we reach the result \eqref{eqn:kldiv}.
\end{proof}

Therefore, by Lemma \ref{lem:vjoqwdsa} and Lemma \ref{lem:kldiv}, we can obtain the 
lower bound based \eqref{eqn:fjowfqrgfsa-fasfasfa}. Specifically, for $f^1,\,f^{-1}\in\cF_{L,\mu}$ constructed above, assume that the algorithm $A$ receives stochastic gradients with $\cN(0,\sigma^2)$ noise, then we have
\begin{align}
    &\inf_{A\in\cA_W}\sup_{W\in\cW_{n,\beta}} \sup_{\{\tilde{g}_i\}_{i=1}^n\subseteq\cO_\sigma^2}\sup_{\{f_i\}_{i=1}^n\subseteq \cF_{L,\mu}} \mathbb{E}[f(\hat{x}_{A, \{f_i\}_{i=1}^n,\{\tilde{g}_i\}_{i=1}^n,W,T}) - f^\star]\nonumber\\
    \geq &\inf_{A\in\cA_{W}}\max_{v\in\{\pm 1\}}\mathbb{E}[f^v(\hat{x}_{A, \{f^v\}_{i=1}^n,\cN(\nabla f^v(x),\sigma^2),\frac{1-\beta}{n}\mathds{1}_n\mathds{1}_n^T+\beta I,T})]\nonumber\\
    \geq &\inf_{A\in\cA_{W}}\EE_{V\sim \mathrm{Unif}(\{\pm 1\})}[\mathbb{E}[f^V(\hat{x}_{A, \{f^V\}_{i=1}^n,\cN(\nabla f^V(x),\sigma^2),\frac{1-\beta}{n}\mathds{1}_n\mathds{1}_n^T+\beta I,T})]].\label{eqn:vjoqnwds}
\end{align}
Plugging Lemma \ref{lem:vjoqwdsa} and Lemma \ref{lem:kldiv} into \eqref{eqn:vjoqnwds} and using Pinsker's inequality, we immediately have
\begin{align}\label{eqn:gbhieohfgqw}
    &\inf_{A\in\cA_W}\sup_{W\in\cW_{n,\beta}} \sup_{\{\tilde{g}_i\}_{i=1}^n\subseteq\cO_\sigma^2}\sup_{\{f_i\}_{i=1}^n\subseteq \cF_{L,\mu}} \mathbb{E}[f(\hat{x}_{A, \{f_i\}_{i=1}^n,\{\tilde{g}_i\}_{i=1}^n,W,T}) - f^\star]\nonumber\\
    \geq &\frac{\mu \lambda^2}{2}\left(1-\sqrt{\mathrm{D_{KL}}(P^{1,T}||P^{-1,T})/2}\right)\nonumber\\
    =&\frac{\mu \lambda^2}{2}\left(1-\frac{\mu \lambda}{\sigma}\sqrt{nT}\right).
\end{align}
Choosing $\lambda = \frac{2\sigma }{3\mu\sqrt{n T}}$ in \eqref{eqn:gbhieohfgqw}, we reach the lower bound $\Omega(\frac{\sigma^2}{\mu nT} )$.

\textbf{Instance 2. } Our proof for the term $\Omega({\mu \Delta}/{L}\exp(-\sqrt{\mu/L}\sqrt{1-\beta}T))$ builds on the similar idea to \cite{scaman2017optimal}: splitting the function used by Nesterov to prove the lower bound
for strongly convex and smooth optimization \cite{nesterov2003introductory}. However, the number of nodes $n$ is fixed in our analysis while \cite{scaman2017optimal} needs $n$ to be varying when establishing the lower bound.  Besides, our construction allows the objective functions to be suitable to an arbitrary initialization scope $\Delta =f(x^{(0)})-\min_{x}f(x)$ which, however, is not fully addressed and discussed in \cite{scaman2017optimal}.
These differences make our analysis novel and stronger.

We construct deterministic scenarios where no gradient noise is employed in the oracle. We assume the variable $x\in\ell_2\triangleq\{([x]_1,[x]_2,\dots,):\sum_{r=1}^{\infty}[x]_r^2<\infty\}$ to be infinitely dimensional and square-summable for simplicity. It is easy to adapt the argument for finitely dimensional variables as long as the dimension is proportionally larger than $T$. Without loss of generality, we assume all the algorithms start from $x^{(0)}=0\in\ell_2$. Let $M $ be
\begin{align*}
    M=\left[\begin{array}{ccccc}
2 & -1 & & &\\
-1 & 2 & -1 & &  \\
& -1 & 2 & -1 &   \\
& & \ddots & \ddots & \ddots 
\end{array}\right]\in\RR^{\infty\times \infty},
\end{align*}
then it is easy to see $0\preceq M\preceq 4I$. Let $E_1\triangleq\{j:1\leq j\leq \lceil \frac{n}{3}\rceil\}$ and $E_2\triangleq\{j:\lfloor \frac{n}{2}\rfloor +1\leq j\leq \lfloor\frac{n}{2}\rfloor+\lceil \frac{n}{3}\rceil\}$ as in Appendix \ref{app:lower-nonconvex}, and let
\begin{align}
    f_i(x)=\begin{cases}
    \frac{\mu}{2}\|x\|^2+\frac{L-\mu}{12}\frac{n}{\lceil n/3\rceil}\left([x]_1^2+\sum_{r\geq 1}([x]_{2r}-[x]_{2r+1})^2-2\lambda [x]_{1}\right)&\text{if }i\in E_1,\\
    \frac{\mu}{2}\|x\|^2+\frac{L-\mu}{12}\frac{n}{\lceil n/3\rceil}\sum_{r\geq 1}([x]_{2r-1}-[x]_{2r})^2&\text{if }i\in E_2,\\
    \frac{\mu}{2}\|x\|^2&\text{otherwise}.
    \end{cases}
\end{align}
where $\lambda\in\RR$ is to be specified.
It is easy to see that $[x]_1^2+\sum_{r\geq 1}([x]_{2r}-[x]_{2r+1})^2-2\lambda [x]_{1}$ and $\sum_{r\geq 1}([x]_{2r-1}-[x]_{2r})^2$ are convex and $4$-smooth. We thus have all $f_i$ are $L$-smooth and $\mu$-strongly convex, which implies  $f_i\in\cF_{L,\mu}$ for all $1\leq i\leq n$. We further have $f(x)=\frac{1}{n}\sum_{i=1}^nf_i(x)=\frac{\mu}{2}\|x\|^2+\frac{L-\mu}{12}\left(x^TMx-2\lambda [x]_1\right)$.

For the functions defined above, we establish that 
\begin{lemma}\label{lem:sc-communi}
Denote $\kappa :=L/\mu>1$, then it holds that for any $x$ and $r\geq 1$ satisfying $\prog(x)\leq r$,
\begin{align*}
f(x)-\min_{x}f(x) \geq \frac{\mu}{2L}\left(1-\frac{6}{\sqrt{3+6\kappa}+3}\right)^{2r}(f(x^{(0)})-\min_{x}f(x)).
\end{align*}
\end{lemma}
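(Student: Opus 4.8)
The plan is to run the classical Nesterov lower-bound argument for strongly convex, $L$-smooth minimization on the quadratic already derived above,
\[
f(x)=\tfrac{\mu}{2}\|x\|^2+\tfrac{L-\mu}{12}\big(x^\top M x-2\lambda[x]_1\big)=\tfrac12\,x^\top A x-c\,[x]_1,
\]
where $A:=\mu I+\tfrac{L-\mu}{6}M$ and $c:=\tfrac{(L-\mu)\lambda}{6}$; since $0\preceq M\preceq 4I$ we have $\mu I\preceq A\preceq L I$, consistent with $f\in\cF_{L,\mu}$ and with $x^{(0)}=0$.

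First I would compute the minimizer $x^\star$, the unique solution in $\ell_2$ of $A x^\star=c\,e_1$. The rows $r\ge 2$ yield the homogeneous recurrence $[x^\star]_{r+1}-\rho\,[x^\star]_r+[x^\star]_{r-1}=0$ with $\rho:=\tfrac{2L+4\mu}{L-\mu}>2$, whose characteristic roots are $q$ and $1/q$ with $q:=\tfrac{\rho-\sqrt{\rho^2-4}}{2}\in(0,1)$; the square-summable solution is therefore geometric, $[x^\star]_r=s\,q^{\,r-1}$ for $r\ge 1$, with $s=[x^\star]_1$ fixed by the first row. The one genuinely computational step is to verify that this $q$ equals the constant in the statement: writing $\kappa=L/\mu$ gives $\rho=\tfrac{2\kappa+4}{\kappa-1}$ and $\rho^2-4=\tfrac{12(2\kappa+1)}{(\kappa-1)^2}$, hence $q=\tfrac{\kappa+2-\sqrt{6\kappa+3}}{\kappa-1}$, which (set $u=\sqrt{6\kappa+3}$ so $\kappa-1=\tfrac{(u-3)(u+3)}{6}$, $\kappa+2=\tfrac{u^2+9}{6}$) simplifies to $\tfrac{(u-3)^2}{(u-3)(u+3)}=\tfrac{u-3}{u+3}=1-\tfrac{6}{\sqrt{3+6\kappa}+3}$.

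Then the zero-chain/support argument closes the loop. If $\prog(x)\le r$, then $[x]_j=0$ for all $j>r$, so projecting $x^\star$ onto $\mathrm{span}\{e_1,\dots,e_r\}$ gives
\[
\|x-x^\star\|^2\ \ge\ \sum_{j>r}[x^\star]_j^2\ =\ q^{2r}\sum_{j\ge 1}[x^\star]_j^2\ =\ q^{2r}\,\|x^{(0)}-x^\star\|^2,
\]
using $x^{(0)}=0$ and the geometric form of $x^\star$ (both sums being $\tfrac{s^2}{1-q^2}$ up to the factor $q^{2r}$). Now $\mu$-strong convexity of $f$ gives $f(x)-\min_{x}f(x)\ge\tfrac{\mu}{2}\|x-x^\star\|^2$, while $L$-smoothness and $\nabla f(x^\star)=0$ give $f(x^{(0)})-\min_{x}f(x)\le\tfrac{L}{2}\|x^{(0)}-x^\star\|^2$, i.e. $\|x^{(0)}-x^\star\|^2\ge\tfrac{2}{L}\big(f(x^{(0)})-\min_{x}f(x)\big)$. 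Chaining the three bounds yields $f(x)-\min_{x}f(x)\ge\tfrac{\mu}{L}\,q^{2r}\big(f(x^{(0)})-\min_{x}f(x)\big)\ge\tfrac{\mu}{2L}\,q^{2r}\big(f(x^{(0)})-\min_{x}f(x)\big)$, which is the claim.

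The main obstacle is the nested-radical identification $q=1-\tfrac{6}{\sqrt{3+6\kappa}+3}$: the factor $\tfrac{L-\mu}{12}$ must be tracked correctly into $A$, and the characteristic quadratic solved carefully, since any slip there changes the exponential rate. The remaining ingredients — square-summability of the geometric solution (so $x^\star\in\ell_2$ and the tail-sum identity is valid), the orthogonal-projection inequality $\|x-x^\star\|^2\ge\sum_{j>r}[x^\star]_j^2$, and the strong-convexity/smoothness sandwich — are routine.
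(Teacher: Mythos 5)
Your proposal is correct and follows essentially the same route as the paper's proof: compute the unique $\ell_2$ minimizer via the three-term recurrence, identify the geometric decay rate $q=1-\tfrac{6}{\sqrt{3+6\kappa}+3}$, bound $\|x-x^\star\|^2$ by the tail sum $q^{2r}\|x^{(0)}-x^\star\|^2$ when $\prog(x)\le r$, and close with the strong-convexity/smoothness sandwich. Your explicit chaining of the last step (which even yields the slightly sharper constant $\mu/L$ before relaxing to $\mu/(2L)$) simply fills in what the paper states tersely.
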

\begin{proof}
The minimum $x^\star$ of function $f$ satisfies $\left(\frac{L-\mu}{6}M+\mu\right)x-\lambda\frac{L-\mu}{6}e_1=0$, which is equivalent to 
\begin{align}
    \left(2+\frac{6}{\kappa-1}\right)[x]_1-[x]_2&=\lambda ,
    \nonumber \\
    -[x]_{j-1}+\left(2+\frac{6}{\kappa-1}\right)[x]_j-[x]_{j+1}&=0,\quad \forall\,j\geq 2.\label{eqn:vpwnmfwq}
\end{align}
Let $q$ be the smallest root of the equation $q^2-\left(2+\frac{6}{\kappa-1}\right)q+1=0$, then the variable 
\begin{align*}
    x^\star =\left([x^\star]_j=\lambda q^j\right)_{j\geq 1}
\end{align*}
satisfies \eqref{eqn:vpwnmfwq}. By the strong convexity of $f$, $x^\star$ is the unique solution. Therefore, when $\prog(x)\leq r$, it holds that 
\begin{align*}
    \|x-x^\star\|^2\geq \sum\limits_{j=r+1}^\infty \lambda^2q^{2j}=\lambda^2\frac{q^{2(r+1)}}{1-q^2}=q^{2r}\|x^{(0)}-x^\star\|^2.
\end{align*}
Finally, noting that 
\begin{align*}
    q=1+\frac{3-\sqrt{3+6\kappa}}{\kappa-1}=1-\frac{6}{\sqrt{3+6\kappa}+3},
\end{align*}
and  using the strong convexity of $f$, we reach the conclusion.
\end{proof}
Moreover, it is easy to see that the optimal value of $f(x)$ is
\begin{equation*}
    \min_{x}f(x)=f(x^\star)=-\frac{L-\mu}{12}\lambda [x^\star]_1=-\frac{L-\mu}{12}\lambda^2q.
\end{equation*}
Therefore, for any given $\Delta>0$, we can chose $\lambda=\sqrt{\frac{12 \Delta}{(L-\mu)q}}$ such that $f(x^{(0)})-\min_{x}f(x)=\Delta$.

Similar to the argument in Appendix \ref{app:lower-nonconvex}, we have
\begin{align*}
    \prog(\nabla f_i(x))
    \begin{cases}
    =\prog(x) +1&\text{if } \{\prog(x) \text{ is even and } i\in E_1\}\cup\{\prog(x) \text{ is odd and }i\in E_2\}\\
    \leq \prog(x) &\text{otherwise}.
    \end{cases}
\end{align*}
We further have 
\begin{equation}\label{eqn:jofqsfdq-fhisfaisf}
    \prog^{(T)}=\max_{1\leq i\leq n,\,0\leq s\leq T}\prog(x^{(t)}_i)\leq \left\lfloor \frac{T}{\dist(E_1,E_2)}\right\rfloor+1,\quad \forall\,T\geq 0.
\end{equation}
Therefore, with $T$ rounds of gossip communications budget in total, starting from $x_i^{(0)}=0$ for any $i=1,\dots,n$, any gossip algorithm $A$ can only achieves at most $\prog(\hat{x})=1+\lfloor \frac{T}{\dist(E_1,E_2)}\rfloor$, which, combined with Lemma \ref{lem:sc-communi}, leads to lower bound that 
\begin{align}\label{eqn:gvoqnfqfqfqt}
    &f(\hat{x})-\min_{x}f(x)\geq \frac{\mu \Delta}{2L}\left(1-\frac{6}{\sqrt{3+6\kappa}+3}\right)^{2+2\lfloor \frac{T}{\dist(E_1,E_2)}\rfloor}\nonumber\\
    =&\Omega\left(\frac{\mu}{L}\exp\left(-\frac{\sqrt{6}\lfloor \frac{T}{\dist(E_1,E_2)}\rfloor}{\sqrt{\kappa}}\right)\Delta\right).
\end{align}
where $\Delta$ measures the initialization $\mathbb{E}[f(x^{(0)})]-\min_{x}f(x)$.
The rest follows using the ring-lattice associated weight matrix $W\in\cW_{n,\beta}$  such that 
\begin{equation*}\label{eqn:w-constrc}
  \dist(E_1,E_2)=\Omega((1-\beta)^{-\frac{1}{2}}).
\end{equation*}

\subsection{Connectivity measures in common weight matrices}

Table \ref{table-order-beta} lists the order of the connectivity measure $\beta$ for weight matrices, if generated through the Laplacian rule $W = I - L/d_{\max}$ in which $L$ is the Laplacian matrix and $d_{\max}$ is the maximum degree, associated with commonly-used topologies. Since $\cos(\frac{\pi}{n}) \ge 1 - \frac{\pi^2}{2n^2}$, we have $[0, 1 - \frac{\pi^2}{2n^2}] \subseteq [0, \cos(\pi/n)]$. Since $\beta$ listed in Table \ref{table-order-beta} lies in $[0, 1 - \frac{\pi^2}{2n^2}]$ for sufficiently large $n$, we conclude that our lower bound established in Theorems \ref{thm-lower-bound-nc} and \ref{thm-lower-bound-pl} applies to weight matrices associated with topologies listed in Table \ref{table-order-beta}. 

\begin{table}[h!]
\centering 
\label{table-order-beta}
\caption{\small Order of connectivity measure $\beta$. ``E.-R. Rand'': Erdos-Renyi random graph $G(n,p)$ with probability $p=(1+a)\ln(n)/n$ for some $a>0$; ``Geo. Rand'': geometric random graph $G(n,r)$ with radius $r^2 = (1+a)\ln(n)/n$ for some $a > 0$.}
\begin{tabular}{ll}
\toprule
Topology & Order of $\beta$ \\ \midrule
Grid \cite{nedic2018network}     & 1-$\Theta([n\ln(n)]^{-1})$        \\
Torus \cite{nedic2018network}    & 1-$\Theta(n^{-1})$  \\
Hypercube \cite{nedic2018network}    & 1-$\Theta([\ln(n)]^{-1})$  \\
Exponential \cite{ying2021exponential}    & 1-$\Theta([\ln(n)]^{-1})$ \\
Complete     & 0  \\
E.-R. Rand.  \cite{benjamini2014mixing}   & 1-$\Theta([\ln(n)]^{-1})$  \\ 
Geo. Rand. \cite{boyd2005mixing}    & 1-$\Theta(\ln(n)n^{-1})$  \\\bottomrule
\end{tabular}
\vspace{-3mm}
\end{table}

\section{Convergence in MG-DSGD}
\label{app-mg-dsgd}

\subsection{Smooth and non-convex setting}
\label{app-mg-dsgd-smooth}

This section examines the convergence rate of \ours under the smooth and non-convex setting. Since \ours is a direct variant based on the vanilla DSGD, we first establish the convergence of DSGD to facilitate the convergence of \ours. 

\subsubsection{Convergence rate of DSGD}

The convergence rate of DSGD has been established in literatures such as \cite{koloskova2020unified,chen2021accelerating}. We adjust the analysis therein to achieve a slightly different result, which lays the foundation for the later convergence analysis in \ours.  

The following lemma established in \cite[Lemma~8]{koloskova2020unified} (also in \cite[Lemma~6]{chen2021accelerating}) shows how $\mathbb{E}[f(\bar{x}^{(k)})]$ evolves with iterations.
\begin{lemma}[\sc Descent Lemma \cite{koloskova2020unified}] \label{lm-descent}
	If $\{f_i(x)\}_{i=1}^n \subseteq \cF_{L}$, $\{\tilde{g}_i\} \subseteq O_{\sigma^2}$, and learning rate $\gamma < \frac{1}{4L}$, it holds for $k=0,1,2,\cdots$ that 
	\begin{align}\label{23bsd999}
	\mathbb{E}[f(\bar{x}^{(k+1)})] \le \mathbb{E}[f(\bar{x}^{(k)})] - \frac{\gamma}{4}\mathbb{E}\|\nabla f(\bar{x}^{(k)})\|^2 + \frac{3\gamma L^2}{4n}\mathbb{E}\|\vx^{(k)}- \bar{\vx}^{(k)}\|_F^2 + \frac{\gamma^2 \sigma^2 L}{2 n},
	\end{align}
	where $\bar{\vx}^{(k)} = [(\bar{x}^{(k)})^T; \cdots; (\bar{x}^{(k)})^T]\in \mathbb{R}^{n\times d}$, and $\bar{x}^{(k)} = \frac{1}{n}\sum_{i=1}^n x_i^{(k)}$. 
\end{lemma}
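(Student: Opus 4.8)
The plan is to reduce the claim to the one-step behaviour of the network average $\bar x^{(k)}=\frac1n\sum_{i=1}^n x_i^{(k)}$. First I would left-multiply the matrix recursion \eqref{eq:DSGD_global_average} by $\frac1n\mathds{1}_n^T$; since $\mathds{1}_n^T W=\mathds{1}_n^T$, the gossip matrix disappears from the average and one obtains the perturbed stochastic-gradient recursion
\[
\bar x^{(k+1)}=\bar x^{(k)}-\frac{\gamma}{n}\sum_{i=1}^n \tilde g_i(x_i^{(k)};\xi_i^{(k)}).
\]
Then I would apply the $L$-smoothness upper bound \eqref{sdu-2} to $f=\frac1n\sum_i f_i$ (which is $L$-smooth because each $f_i$ is), substitute the above recursion, and take the conditional expectation $\mathbb{E}[\,\cdot\mid\mathcal{F}_k\,]$ over the fresh samples $\{\xi_i^{(k)}\}_{i=1}^n$ given the past $\mathcal{F}_k$.

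In that conditional expectation the linear term reduces, by unbiasedness of the oracles, to $-\gamma\langle\nabla f(\bar x^{(k)}),\,\frac1n\sum_i\nabla f_i(x_i^{(k)})\rangle$. For the quadratic term I would write $\tilde g_i=\nabla f_i(x_i^{(k)})+\nu_i^{(k)}$ with $\mathbb{E}[\nu_i^{(k)}\mid\mathcal{F}_k]=0$ and, crucially, the $\nu_i^{(k)}$ mutually independent given $\mathcal{F}_k$; this yields $\mathbb{E}[\,\|\frac1n\sum_i\tilde g_i\|^2\mid\mathcal{F}_k\,]\le\|\frac1n\sum_i\nabla f_i(x_i^{(k)})\|^2+\sigma^2/n$, where the $\sigma^2/n$ comes from $\frac1{n^2}\sum_i\mathbb{E}\|\nu_i^{(k)}\|^2\le\sigma^2/n$. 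The structural step is the consensus decomposition $\frac1n\sum_i\nabla f_i(x_i^{(k)})=\nabla f(\bar x^{(k)})+e^{(k)}$, where $e^{(k)}=\frac1n\sum_i\big(\nabla f_i(x_i^{(k)})-\nabla f_i(\bar x^{(k)})\big)$ satisfies, by Jensen's inequality and per-node $L$-smoothness, $\|e^{(k)}\|^2\le\frac{L^2}{n}\sum_i\|x_i^{(k)}-\bar x^{(k)}\|^2=\frac{L^2}{n}\|\vx^{(k)}-\bar{\vx}^{(k)}\|_F^2$.

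Substituting this decomposition, the cross term becomes $-\gamma\|\nabla f(\bar x^{(k)})\|^2-\gamma\langle\nabla f(\bar x^{(k)}),e^{(k)}\rangle$, while $\|\frac1n\sum_i\nabla f_i(x_i^{(k)})\|^2\le 2\|\nabla f(\bar x^{(k)})\|^2+2\|e^{(k)}\|^2$. Applying Young's inequality $|\langle\nabla f(\bar x^{(k)}),e^{(k)}\rangle|\le\frac12\|\nabla f(\bar x^{(k)})\|^2+\frac12\|e^{(k)}\|^2$ to the cross term, using the quadratic bound above with the coefficient $\frac L2\gamma^2$, and collecting, the coefficient of $\|\nabla f(\bar x^{(k)})\|^2$ is $-\gamma(\tfrac12-L\gamma)$ and that of $\|e^{(k)}\|^2$ is $\gamma(\tfrac12+L\gamma)$; the hypothesis $\gamma<\tfrac1{4L}$ forces $L\gamma<\tfrac14$, so these are at most $-\tfrac\gamma4$ and $\tfrac{3\gamma}4$ respectively, while the noise contributes $\tfrac L2\cdot\tfrac{\gamma^2\sigma^2}{n}$. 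Replacing $\|e^{(k)}\|^2$ by its bound in terms of $\|\vx^{(k)}-\bar{\vx}^{(k)}\|_F^2$ and taking total expectation yields exactly \eqref{23bsd999}.

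The computation is essentially routine; the only delicate points are (i) checking that the per-node gradient noises are genuinely independent conditionally on $\mathcal{F}_k$, which is what makes the variance of their average scale as $\sigma^2/n$ rather than $\sigma^2$, and (ii) bookkeeping the Young's-inequality split carefully enough that the step-size condition $\gamma<1/(4L)$ lands precisely on the advertised constants $\gamma/4$ and $3\gamma L^2/(4n)$ — a coarser split gives the same orders but different numerals. There is no deeper obstacle: the argument is self-contained given the $L$-smoothness of each $f_i$ (hence of $f$) and the oracle variance bound in $\cO_{\sigma^2}$.
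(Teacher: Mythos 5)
Your proof is correct and follows the standard argument for this descent lemma; the paper itself does not reprove it but cites it from Lemma~8 of \cite{koloskova2020unified} (and Lemma~6 of \cite{chen2021accelerating}), whose proof proceeds exactly as you describe: average the recursion using $\mathds{1}_n^T W = \mathds{1}_n^T$, apply $L$-smoothness of $f$, split off the consensus error $e^{(k)}$ with $\|e^{(k)}\|^2 \le \frac{L^2}{n}\|\vx^{(k)}-\bar{\vx}^{(k)}\|_F^2$, and use conditional independence of the per-node noises to get the $\sigma^2/n$ variance reduction. Your bookkeeping of the constants under $\gamma < \frac{1}{4L}$ (coefficients $-\gamma(\tfrac12 - L\gamma) \le -\tfrac{\gamma}{4}$ and $\gamma(\tfrac12 + L\gamma) \le \tfrac{3\gamma}{4}$) lands precisely on the stated bound.
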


The following lemma established in \cite[Lemma~8]{chen2021accelerating} shows how the ergodic consensus term evolves with iterations. 

\begin{lemma}[\sc Consensus Lemma \cite{chen2021accelerating}]\label{lm-consensus-I} 
If $\{f_i(x)\}_{i=1}^n \subseteq \cF_{L}$, $\{\tilde{g}_i\} \subseteq O_{\sigma^2}$, $W \in \cW_{n,\beta}$, $\frac{1}{n}\sum_{i=1}^n\|\nabla f_i(x) - \nabla f(x)\|^2 \le b^2$ for any $x\in \RR^d$, and learning rate $\gamma \le \frac{1-\beta}{4\beta L}$, it holds that 
\begin{align}\label{23nbd}
\frac{1}{K+1}\sum_{k=0}^K \mathbb{E}\|\vx^{(k)} - \bar{\vx}^{(k)}\|_F^2 \le \frac{12 n \beta^2 \gamma^2 }{(1-\beta)^2(K+1)} \sum_{k =0}^K \mathbb{E}\|\nabla f(\bar{x}^{(k)})\|^2 + \frac{2 n \gamma^2 \beta^2}{1-\beta}(\frac{3b^2}{1-\beta} + \sigma^2). 
\end{align}
\end{lemma}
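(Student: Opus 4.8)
The plan is to obtain a one-step recursion for the consensus error $e_k := \mathbb{E}\|\vx^{(k)} - \bar{\vx}^{(k)}\|_F^2$ and then unroll it. First I would use the DSGD matrix recursion \eqref{eq:DSGD_global_average} together with the doubly-stochasticity of $W$ from \eqref{eqn:W-assumption}: one checks that $\bar{\vx}^{(k+1)} = \tfrac1n\one_n\one_n^T\big(\vx^{(k)} - \gamma\nabla F(\vx^{(k)};\xi^{(k)})\big)$, and that $\big(W - \tfrac1n\one_n\one_n^T\big)\bar{\vx}^{(k)} = 0$ since the rows of $\bar{\vx}^{(k)}$ are identical and $W\one_n = \one_n$. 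Subtracting gives the key identity
\[
\vx^{(k+1)} - \bar{\vx}^{(k+1)} = \Big(W - \tfrac1n\one_n\one_n^T\Big)\Big[\big(\vx^{(k)} - \bar{\vx}^{(k)}\big) - \gamma\,\nabla F(\vx^{(k)};\xi^{(k)})\Big].
\]

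Next I would split $\nabla F(\vx^{(k)};\xi^{(k)}) = \nabla f(\vx^{(k)}) + \big(\nabla F(\vx^{(k)};\xi^{(k)}) - \nabla f(\vx^{(k)})\big)$, where the second term is zero-mean conditioned on the past with squared Frobenius norm at most $n\sigma^2$; since $W - \tfrac1n\one_n\one_n^T$ is deterministic the cross term vanishes in conditional expectation, and combining $\|W - \tfrac1n\one_n\one_n^T\|_2 = \beta$ from \eqref{network-inequaliy} with submultiplicativity \eqref{submulti} yields $e_{k+1} \le \beta^2\,\mathbb{E}\|(\vx^{(k)} - \bar{\vx}^{(k)}) - \gamma\nabla f(\vx^{(k)})\|_F^2 + \beta^2\gamma^2 n\sigma^2$. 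Applying the weighted Young inequality $\|a - b\|_F^2 \le \tfrac1\beta\|a\|_F^2 + \tfrac1{1-\beta}\|b\|_F^2$ (the weight chosen so $\beta^2\cdot\tfrac1\beta = \beta$) and bounding $\|\nabla f(\vx^{(k)})\|_F^2 = \sum_i\|\nabla f_i(x_i^{(k)})\|^2$ by the three-way split into $3L^2\|\vx^{(k)} - \bar{\vx}^{(k)}\|_F^2$ (via $L$-smoothness \eqref{L-smooth}), $3nb^2$ (the heterogeneity hypothesis), and $3n\|\nabla f(\bar{x}^{(k)})\|^2$, I arrive at
\[
e_{k+1} \le \Big(\beta + \tfrac{3\beta^2\gamma^2L^2}{1-\beta}\Big)e_k + \tfrac{3n\beta^2\gamma^2}{1-\beta}\,\mathbb{E}\|\nabla f(\bar{x}^{(k)})\|^2 + \tfrac{3n\beta^2\gamma^2 b^2}{1-\beta} + n\beta^2\gamma^2\sigma^2.
\]
The hypothesis $\gamma \le \tfrac{1-\beta}{4\beta L}$ forces $\tfrac{3\beta^2\gamma^2L^2}{1-\beta} \le \tfrac{1-\beta}{4}$, so the coefficient of $e_k$ is some $\rho < 1$ with $1-\rho = \Theta(1-\beta)$.

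Finally, since the initialization is consensual ($x_i^{(0)} = 0$ for all $i$, so $e_0 = 0$), I would unroll the linear recursion, sum over $k = 0,\dots,K$, exchange the order of summation, and use $\sum_{m\ge 0}\rho^m = \tfrac1{1-\rho} = \Theta\big(\tfrac1{1-\beta}\big)$ to get
\[
\sum_{k=0}^K e_k \le \frac{1}{1-\rho}\sum_{k=0}^K\Big(\tfrac{3n\beta^2\gamma^2}{1-\beta}\,\mathbb{E}\|\nabla f(\bar{x}^{(k)})\|^2 + \tfrac{3n\beta^2\gamma^2 b^2}{1-\beta} + n\beta^2\gamma^2\sigma^2\Big);
\]
dividing by $K+1$ and bounding the absolute constants loosely gives \eqref{23nbd}.

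The hard part will be the constant bookkeeping in the third step: one must choose the Young-inequality weight and invoke the learning-rate bound so that the net coefficient of $e_k$ is strictly below $1$ \emph{and} $1-\rho$ stays of order $1-\beta$ rather than something smaller — it is precisely this balance that produces the $(1-\beta)^{-2}$ scaling in the final estimate (a looser argument would degrade it to $(1-\beta)^{-3}$ or worse). A secondary point that needs care is the stochastic noise: it must be handled through conditional independence and the tower property so that all cross terms between the deterministic descent direction $(\vx^{(k)} - \bar{\vx}^{(k)}) - \gamma\nabla f(\vx^{(k)})$ and the gradient noise vanish, leaving only the additive $\beta^2\gamma^2 n\sigma^2$ contribution.
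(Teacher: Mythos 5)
Your proposal is correct, and it is worth noting that the paper itself does not prove this lemma at all: it simply invokes Lemma~8 of \cite{chen2021accelerating} with the substitution $C_\beta = D_\beta = 1/(1-\beta)$. What you have written is essentially a self-contained reconstruction of that cited result, and your one-step recursion
\[
e_{k+1} \le \Big(\beta + \tfrac{3\beta^2\gamma^2L^2}{1-\beta}\Big)e_k + \tfrac{3n\beta^2\gamma^2}{1-\beta}\,\mathbb{E}\|\nabla f(\bar{x}^{(k)})\|^2 + \tfrac{3n\beta^2\gamma^2 b^2}{1-\beta} + n\beta^2\gamma^2\sigma^2
\]
is exactly the inequality the paper later quotes as \eqref{znb23098anz6} (from Eq.~(78) of \cite{chen2021accelerating}) in the proof of Lemma~\ref{lm-pl-consensus}, so your derivation is consistent with the machinery the paper relies on. Your constant bookkeeping also closes: $\gamma \le \frac{1-\beta}{4\beta L}$ gives $\frac{3\beta^2\gamma^2L^2}{1-\beta} \le \frac{3(1-\beta)}{16}$, hence $1-\rho \ge \frac{13(1-\beta)}{16}$, and the resulting prefactors $\frac{48}{13} \le 12$, $\frac{48}{13} \le 6$ (against the $3b^2$ term) and $\frac{16}{13} \le 2$ all fit inside the stated constants. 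What your approach buys over the paper's is transparency — a reader can verify the $(1-\beta)^{-2}$ scaling and the role of the Young weight $1/\beta$ directly, rather than trusting a parameter substitution in an external lemma; the cost is only length. The two points you flag as delicate (the choice of Young weight so that the contraction factor stays $\beta + O((1-\beta))$, and the conditional-expectation argument killing the noise cross term) are indeed the only places a careless version of this argument degrades, and you handle both correctly.
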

\begin{proof}
The Lemma 8 in \cite{chen2021accelerating} covers both DSGD and DSGD with periodic global average. To recover the result for DSGD, we let $C_\beta = 1/(1-\beta)$, $D_\beta = 1/(1-\beta)$ in \cite[Lemma~8]{chen2021accelerating} to achieve \eqref{23nbd}. 
\end{proof}

The following lemma establishes the convergence rate of DSGD under the smooth and non-convex setting. 
\begin{lemma}[\sc DSGD convergence]\label{thm:dsgd-noncovex}
If $\{f_i(x)\}_{i=1}^n \subseteq \cF_{L}$, $\{\tilde{g}_i\} \subseteq O_{\sigma^2}$, $W \in \cW_{n,\beta}$, $\frac{1}{n}\sum_{i=1}^n\|\nabla f_i(x) - \nabla f(x)\|^2 \le b^2$ for any $x\in \RR^d$, and learning rate is set as 
\begin{align}
\gamma = \min\{\frac{\sqrt{2 n \Delta}}{\sigma\sqrt{L(K+1)}}, \frac{1-\beta}{6\beta L}, \frac{1}{4L}\},
\end{align}
where $\Delta = \mathbb{E}[f(x^{(0)})] - f^\star$, it holds that 
\begin{align}\label{zb2309a9-2}
\frac{1}{K+1}\sum_{k=0}^K\mathbb{E}\|\nabla f(\bar{x}^{(k)})\|^2 = \cO\left( \frac{\sigma\sqrt{\Delta L}}{\sqrt{nK}} + \frac{\beta^2 L n \Delta}{(1-\beta) K} + \frac{\beta^2 L n b^2 \Delta }{(1-\beta)^2 K \sigma^2} + \frac{\beta L \Delta}{(1-\beta)K} + \frac{L\Delta}{K} \right) 
\end{align}
\end{lemma}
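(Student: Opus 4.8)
The plan is to combine the Descent Lemma (Lemma~\ref{lm-descent}) with the Consensus Lemma (Lemma~\ref{lm-consensus-I}) and then optimize over the constant step size $\gamma$. First I would sum inequality \eqref{23bsd999} over $k=0,\dots,K$ and telescope the $\mathbb{E}[f(\bar x^{(k)})]$ terms; using $\mathbb{E}[f(\bar x^{(K+1)})]\ge f^\star$ and $\bar x^{(0)}=x^{(0)}$ (all nodes start at $0$), this yields
\[
\frac{\gamma}{4}\sum_{k=0}^{K}\mathbb{E}\|\nabla f(\bar x^{(k)})\|^2 \le \Delta + \frac{3\gamma L^2}{4n}\sum_{k=0}^{K}\mathbb{E}\|\vx^{(k)}-\bar\vx^{(k)}\|_F^2 + \frac{(K+1)\gamma^2\sigma^2 L}{2n}.
\]
Since $\gamma\le\tfrac{1-\beta}{6\beta L}\le\tfrac{1-\beta}{4\beta L}$ and $\gamma\le\tfrac1{4L}$, both lemmas apply, so I would then bound $\sum_{k=0}^{K}\mathbb{E}\|\vx^{(k)}-\bar\vx^{(k)}\|_F^2$ by $(K+1)$ times the right-hand side of \eqref{23nbd}. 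After substitution the consensus contribution splits into a piece proportional to $\sum_k\mathbb{E}\|\nabla f(\bar x^{(k)})\|^2$ with coefficient $\Theta\bigl(\gamma^3 L^2\beta^2/(1-\beta)^2\bigr)$ and a constant piece $\Theta\bigl((K+1)\gamma^3 L^2\beta^2(\tfrac{b^2}{1-\beta}+\sigma^2)/(1-\beta)\bigr)$.

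The key step is to break the coupling between the consensus error and the gradient norm: the restriction $\gamma\le\tfrac{1-\beta}{6\beta L}$ is chosen precisely so that the gradient-norm coefficient produced by the consensus bound is strictly below $\tfrac{\gamma}{4}$, hence can be moved to the left-hand side, leaving a coefficient that is still $\Theta(\gamma)$. Dividing through by $\Theta(\gamma)(K+1)$ then gives a bound of the shape
\[
\frac{1}{K+1}\sum_{k=0}^{K}\mathbb{E}\|\nabla f(\bar x^{(k)})\|^2 = O\!\left(\frac{\Delta}{\gamma(K+1)} + \frac{\gamma\sigma^2 L}{n} + \frac{\gamma^2 L^2\beta^2}{1-\beta}\Bigl(\frac{b^2}{1-\beta}+\sigma^2\Bigr)\right).
\]

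Finally I would substitute $\gamma=\min\{\tfrac{\sqrt{2n\Delta}}{\sigma\sqrt{L(K+1)}},\ \tfrac{1-\beta}{6\beta L},\ \tfrac1{4L}\}$ and process the three terms by the standard ``minimum of candidate step sizes'' argument: in the $\tfrac{\gamma\sigma^2 L}{n}$ term use $\gamma\le\tfrac{\sqrt{2n\Delta}}{\sigma\sqrt{L(K+1)}}$ to extract $\tfrac{\sigma\sqrt{\Delta L}}{\sqrt{nK}}$; in the $\gamma^2$ term use $\gamma^2\le\tfrac{2n\Delta}{\sigma^2 L(K+1)}$ to extract $\tfrac{\beta^2 Ln\Delta}{(1-\beta)K}+\tfrac{\beta^2 Lnb^2\Delta}{(1-\beta)^2K\sigma^2}$; and in $\tfrac{\Delta}{\gamma(K+1)}$ bound $\tfrac1\gamma$ by the sum of the three reciprocals to extract $\tfrac{\sigma\sqrt{\Delta L}}{\sqrt{nK}}+\tfrac{\beta L\Delta}{(1-\beta)K}+\tfrac{L\Delta}{K}$. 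Collecting these five contributions yields \eqref{zb2309a9-2}.

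I expect the only delicate point to be the absorption step: one must verify that under the stated step-size cap the consensus-induced gradient term is strictly dominated by the $\tfrac{\gamma}{4}$ descent term, which is exactly where the numerical constants in Lemmas~\ref{lm-descent} and \ref{lm-consensus-I} are used. Everything else is routine telescoping together with the step-size balancing calculation, so no additional ideas beyond the two cited lemmas should be needed.
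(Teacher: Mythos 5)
Your proposal follows essentially the same route as the paper's proof: telescope the descent lemma, substitute the consensus lemma, absorb the consensus-induced gradient-norm term into the left-hand side via the step-size cap, and then balance the three candidate step sizes by the standard $\tfrac1\gamma\le\tfrac1{\gamma_1}+\tfrac1{\gamma_2}+\tfrac1{\gamma_3}$ argument. The one delicate point you flag is real — with the stated cap $\gamma\le\tfrac{1-\beta}{6\beta L}$ the absorbed coefficient is only guaranteed nonnegative rather than bounded below by a constant multiple of $\gamma$ (the paper's own proof actually invokes $\gamma\le\tfrac{1-\beta}{12\beta L}$ at that step), so the numerical constant in the cap needs a slight tightening, a cosmetic issue shared with the paper.
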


\begin{remark}
This result is slightly different from Theorem 2 in \cite{koloskova2020unified} where $\beta$ does not appear in the numerator of any terms in the upper bound therein. However, as we will show in the analysis of MG-DSGD, the $\beta$ appearing in the numerator of the third term in \eqref{zb2309a9-2} can significantly reduce the influence of data heterogeneity $b^2$ if $\beta \to 0$. 
\end{remark}

\begin{proof}
When $\gamma \le 1/(4L)$, we know from Lemma \ref{lm-descent} that 
\begin{align}
\mathbb{E}\|\nabla f(\bar{x}^{(k)})\|^2  \le \frac{4}{\gamma} \mathbb{E}[f(\bar{x}^{(k)})] - \frac{4}{\gamma} \mathbb{E}[f(\bar{x}^{(k+1)})]  + \frac{3 L^2}{n}\mathbb{E}\|\vx^{(k)}- \bar{\vx}^{(k)}\|_F^2 + \frac{2 \gamma \sigma^2 L}{n},
\end{align}
Taking running average over the above inequality, we obtain 
\begin{align}\label{zn28a56}
\frac{1}{K+1}\sum_{k=0}^K\mathbb{E}\|\nabla f(\bar{x}^{(k)})\|^2  &\le \frac{4}{\gamma(K+1)} \Big(\mathbb{E}[f(\bar{x}^{(0)})] - \mathbb{E}[f(\bar{x}^{(K+1)})]\Big)   + \frac{2 \gamma \sigma^2 L}{n}, \nonumber \\
&\quad\quad  + \frac{3 L^2}{n(K+1)}\sum_{k=0}^K \mathbb{E} \|\vx^{(k)}- \bar{\vx}^{(k)}\|_F^2
\end{align}
With Lemma \eqref{23nbd} and the fact that $\mathbb{E}[f(\bar{x}^{(0)})] - \mathbb{E}[f(\bar{x}^{(K+1)})] \le \mathbb{E}[f(\bar{x}^{(0)})] - f^\star$,  \eqref{zn28a56} becomes 
\begin{align}
&\ \Big( 1 - \frac{36\beta^2L^2\gamma^2}{(1-\beta)^2} \Big) \frac{1}{K+1}\sum_{k=0}^K\mathbb{E}\|\nabla f(\bar{x}^{(k)})\|^2 \nonumber \\
 \le&\ \frac{4\Big(\mathbb{E}[f(\bar{x}^{(0)})] - f^\star\Big)}{\gamma(K+1)}  + \frac{2 \gamma \sigma^2 L}{n} + \frac{6\gamma^2\beta^2L^2\sigma^2}{1-\beta} + \frac{18\gamma^2\beta^2L^2b^2}{(1-\beta)^2}
\end{align}
If learning rate $\gamma \le \frac{1-\beta}{12 \beta L}$, it holds that $1 - \frac{36\beta^2L^2\gamma^2}{1-\beta}  \ge 1/2$ and hence 
\begin{align}\label{zb2309a9}
\frac{1}{K+1}\sum_{k=0}^K\mathbb{E}\|\nabla f(\bar{x}^{(k)})\|^2 \le \frac{8\Delta}{\gamma(K+1)}  + \frac{4 \gamma \sigma^2 L}{n} + \frac{12\gamma^2\beta^2L^2\sigma^2}{1-\beta} + \frac{36\gamma^2\beta^2L^2b^2}{(1-\beta)^2}
\end{align}
where we introduce constant $\Delta = \mathbb{E}[f(\bar{x}^{(0)})] - f^\star$. Next we set 
\begin{align}\label{gammas}
\gamma_1 = \frac{\sqrt{2\Delta n}}{\sigma\sqrt{L(K+1)}}, \quad \gamma_2 = \frac{1-\beta}{6\beta L}, \quad \gamma_3 = \frac{1}{4L}, \quad \gamma = \min\{\gamma_1, \gamma_2, \gamma_3\}.
\end{align}
Apparently, it holds that 
\begin{align}\label{znbzxbzb2}
\frac{8\Delta}{\gamma(K+1)} &\le \frac{8\Delta}{\gamma_1 (K+1)} + \frac{8\Delta}{\gamma_2 (K+1)} + \frac{8\Delta}{\gamma_3 (K+1)} \nonumber \\
&= \cO\left(\frac{\sigma\sqrt{\Delta L}}{\sqrt{nK}} + \frac{\beta \Delta L}{(1-\beta)K} + \frac{\Delta L}{K}\right) 
\end{align}
and
\begin{align}\label{zbn0912655}
\frac{4 \gamma \sigma^2 L}{n} + \frac{12\gamma^2\beta^2L^2\sigma^2}{1-\beta} + \frac{36\gamma^2\beta^2L^2b^2}{(1-\beta)^2} \nonumber \le&\ \frac{4 \gamma_1 \sigma^2 L}{n} + \frac{12\gamma_1^2\beta^2L^2\sigma^2}{1-\beta} + \frac{36\gamma_1^2\beta^2L^2b^2}{(1-\beta)^2}  \nonumber \\
=&\ \cO\left(  \frac{\sigma\sqrt{\Delta L}}{\sqrt{nK}} + \frac{\beta^2 L \Delta n}{(1-\beta) K} + \frac{\beta^2 L \Delta n b^2}{(1-\beta)^2 K \sigma^2} \right)
\end{align}
Combining \eqref{zb2309a9}, \eqref{znbzxbzb2} and \eqref{zbn0912655}, we have \eqref{zb2309a9-2}. 

\end{proof}
\subsubsection{Convergence rate of \ours}\label{app:mg-dsgd-nonconvex}
Recall the recursions of DSGD and MG-DSGD as follows. 
\begin{align}
\mbox{(DSGD)} \quad \vx^{(k+1)}  &=  W \big(\vx^{(k)} - \gamma\nabla F(\vx^{(k)}; \xi^{(k)}) \big) \\
\mbox{(MG-DSGD)} \quad  \vx^{(k+1)}  &= 
\bar{M} \big(\vx^{(k)} - \gamma \vg^{(k)}) \big), \quad \mbox{where} \quad \vg^{(k)} = \frac{1}{R}\sum_{r=1}^R \nabla F(\vx^{(k)}; \xi^{(k,r)})
\end{align}
where $\bar{M}$ is doubly stochastic (see Prop. \ref{prop:fast-gossip}). MG-DSGD has two differences from the vanilla DSGD. First, the weight matrix $W$ is replaced with $\bar{M}$. Second, the stochastic gradient $\vg^{(k)}$ is achieved via gradient accumulation. This implies that the convergence analysis of \ours can follow that of vanilla DSGD. We only need to pay attentions to the
influence of $\bar{M}$ obtained by fast gossip averaging and the $\vg^{(k)}$ achieved by gradient accumulation. To proceed, we notice that 
\begin{equation*}
    \EE[\|g_i^{(k)}-\nabla f_i(x)\|^2]=\frac{1}{R}\EE[\|\nabla F(x_i^{(k)};\xi_i^{(k,r)})-\nabla f_i(x)\|^2]\leq \frac{\sigma^2}{R}
\end{equation*}
because $\{\xi^{(k,r)}_{i}\}_{r=1}^R$ are sampled independently. We introduce $\tilde{\sigma}^2\triangleq\sigma^2/R$ for notation simplicity. In addition, we know from Prop. \ref{prop:fast-gossip} that 
\begin{align*}
    \|\bar{M}-\frac{1}{n}\one_n\one_n^T\|_2\leq \tilde{\beta},\quad \mbox{where}\quad \tilde{\beta}:=\sqrt{2}\Big(1-\sqrt{1-\beta}\Big)^R.
\end{align*}
If we let 
\begin{equation}\label{eqn:gjowefmqwgweqx}
R=\left\lceil\frac{\max\{\ln(2),\frac{1}{2}\ln(n\max\{1,\frac{b^2}{\sigma^2\sqrt{1-\beta}}\})\}}{\sqrt{1-\beta}}\right\rceil=\tilde{O}\left(\frac{1}{\sqrt{1-\beta}}\right),
\end{equation}
then it holds that
\begin{equation}\label{eqn:hgiqrwqrwq}
    \tilde{\beta}\triangleq \sqrt{2}(1-\sqrt{1-\beta})^R\leq \sqrt{2}e^{-\sqrt{1-\beta}R}\leq \min\left\{\frac{1}{\sqrt{2}},\sqrt{2}\left(n\max\left\{1,\frac{b^2}{\sigma^2\sqrt{1-\beta}}\right\}\right)^{-\frac{1}{2}}\right\}.
\end{equation}
We thus immediately have $1-\tilde{\beta}=\Omega(1)$ and 
\begin{equation}\label{eqn:hgiqrwqrwq-daguda}
    \tilde{\beta}^2n\max\{1,{b^2R}/{\sigma^2}\}=\tilde{O}\left( \tilde{\beta}^2n\max\left\{1,\frac{b^2}{\sigma^2\sqrt{1-\beta}}\right\}\right)=\tilde{O}(1).
\end{equation}

\textbf{Theorem \ref{thm-MG-DSGD-rate-nc} (Formal version).} \textit{Given $L>0$, $n\geq 2$, $\beta \in[0,1)$, $\sigma>0$, and let $A$ denote Algorithm \ref{Algorithm: MG-DSGD}. Assuming that  $\frac{1}{n}\sum_{i=1}^n \|\nabla f_i(x) - \nabla f(x)\|^2 \le b^2$ for any $x\in \RR^{n}$ and the number of gossip rounds $R$ is set as in \eqref{eqn:gjowefmqwgweqx},
the convergence of $A$ can be bounded for any $\{f_i\}_{i=1}^n \subseteq \cF_{L}$ and any $W \in \cW_{n,\beta}$ that
\begin{equation}\label{MG-DSGD-rate-1-ap}
\setlength{\abovedisplayskip}{3pt}\setlength{\belowdisplayskip}{3pt}
\frac{1}{K}\sum_{k=1}^K \mathbb{E}\|\nabla f(\bar{x}^{(k)})\|^2 = \tilde{O}\left(\frac{{\sigma}\sqrt{\Delta L}}{\sqrt{nT}} + \frac{L\Delta}{\sqrt{1-\beta}T} \right) \quad \mbox{where} \quad \bar{x}^{(k)} = \frac{1}{n}\sum_{i=1}^n x_i^{(k)}
\end{equation}
}
and $\Delta = \mathbb{E}[f(\bar{x}^{(0)})] - f^\star$. 
\begin{remark}
Comparing with the established lower bound \eqref{eqn:fjowfqrgfsa}, we find the upper  bound \eqref{MG-DSGD-rate-1-ap} matches it tightly in all constants. 
\end{remark}
\begin{proof}
The convergence analysis of DSGD  applies to MG-DSGD by replacing $\sigma^2$ with $\tilde{\sigma}^2$, and $\beta$ with $\tilde{\beta}$. As a result, we know from Lemma \ref{thm:dsgd-noncovex} that if 
the learning rate is set as
\begin{align}
\gamma = \min\{\frac{\sqrt{\Delta n}}{\tilde{\sigma}\sqrt{L(K+1)}}, \frac{1-\tilde{\beta}}{6\tilde{\beta} L}, \frac{1}{4L}\}=\tilde{O}\left(\min\{\frac{\sqrt{2\Delta n}}{{\sigma}\sqrt{(1-\beta)LT}}, \frac{1}{L}\}\right)
\end{align}
where the last equality holds since $T = KR$ (where $T$ is the number of total gradient queries or communication rounds), it holds that 
\begin{align}
\frac{1}{K+1}\sum_{k=0}^K\mathbb{E}\|\nabla f(\bar{x}^{(k)})\|^2 &= \cO\left( \frac{\tilde{\sigma}\sqrt{\Delta L}}{\sqrt{nK}} + \frac{\tilde{\beta}^2 L \Delta n}{(1-\tilde{\beta}) K} + \frac{\tilde{\beta}^2 L \Delta n b^2}{(1-\tilde{\beta})^2 K \tilde{\sigma}^2} + \frac{\tilde{\beta} L \Delta}{(1-\tilde{\beta})K} + \frac{L\Delta}{K} \right) \nonumber\\
&\overset{\tilde{\beta}\leq \frac{1}{\sqrt{2}};~T=KR}{=} \cO\left( \frac{{\sigma}\sqrt{\Delta L}}{\sqrt{nT}} + \frac{\tilde{\beta}^2R L \Delta n}{ T} + \frac{\tilde{\beta}^2R^2 L \Delta n b^2}{ T{\sigma}^2}  + \frac{RL\Delta}{T} \right)\nonumber\\
&\overset{\eqref{eqn:gjowefmqwgweqx}}{=} \tilde{\cO}\left( \frac{{\sigma}\sqrt{\Delta L}}{\sqrt{nT}} + \frac{\tilde{\beta}^2R L \Delta n}{ T} + \frac{\tilde{\beta}^2R L \Delta n \frac{b^2}{\sigma^2\sqrt{1-\beta}}}{ T} +  \frac{RL\Delta}{T} \right)\nonumber\\
&\overset{\eqref{eqn:hgiqrwqrwq-daguda}}{=} \tilde{\cO}\left( \frac{{\sigma}\sqrt{\Delta L}}{\sqrt{nT}} + \frac{RL\Delta}{T} \right)\overset{\eqref{eqn:gjowefmqwgweqx}}{=} \tilde{\cO}\left( \frac{{\sigma}\sqrt{\Delta L}}{\sqrt{nT}} + \frac{L\Delta}{\sqrt{1-\beta}T} \right)\nonumber.
\end{align}
\end{proof}

\subsection{Smooth and non-convex setting under PL condition}
Similar to Appendix \ref{app-mg-dsgd-smooth}, we first establish the convergence of DSGD under the PL condition, and then derive the convergence of \ours. 

\subsubsection{Convergence rate of DSGD}
Substituting \eqref{PL-cond} to inequality \eqref{23bsd999}, we achieve the following descent lemma.
\begin{lemma}[\sc descent lemma]\label{lm-pl-descent}
If $\{f_i(x)\}_{i=1}^n \subseteq \cF_{L,\mu}$, $\{\tilde{g}_i\} \subseteq O_{\sigma^2}$ and learning rate $\gamma \le \frac{1}{4L}$, it holds for $k=0,1,2,\cdots$ that 
\begin{align}\label{23bsd999-1}
\mathbb{E}[f(\bar{x}^{(k+1)})] - f^\star \le (1 - \frac{\mu \gamma}{2})(\mathbb{E}[f(\bar{x}^{(k)})] - f^\star) + \frac{3\gamma L^2}{4n}\mathbb{E}\|\vx^{(k)}- \bar{\vx}^{(k)}\|_F^2 + \frac{\gamma^2 \sigma^2 L}{2 n}.
\end{align}
\end{lemma}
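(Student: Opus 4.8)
The plan is to obtain \eqref{23bsd999-1} as a one-line consequence of the smooth, non-convex Descent Lemma (Lemma~\ref{lm-descent}) together with the PL inequality \eqref{PL-cond}. Since $\cF_{L,\mu}\subseteq\cF_{L}$, the hypotheses $\{f_i\}_{i=1}^n\subseteq\cF_{L,\mu}$, $\{\tilde g_i\}\subseteq\cO_{\sigma^2}$, and $\gamma\le\tfrac1{4L}$ put us in the setting of Lemma~\ref{lm-descent}, which gives, for every $k$,
\begin{align*}
\mathbb{E}[f(\bar{x}^{(k+1)})] \le \mathbb{E}[f(\bar{x}^{(k)})] - \frac{\gamma}{4}\mathbb{E}\|\nabla f(\bar{x}^{(k)})\|^2 + \frac{3\gamma L^2}{4n}\mathbb{E}\|\vx^{(k)}- \bar{\vx}^{(k)}\|_F^2 + \frac{\gamma^2 \sigma^2 L}{2 n}.
\end{align*}

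Next I would apply the $\mu$-PL condition \eqref{PL-cond} at the point $x=\bar{x}^{(k)}\in\RR^d$, namely $\|\nabla f(\bar{x}^{(k)})\|^2\ge 2\mu\,(f(\bar{x}^{(k)})-f^\star)$; this is a deterministic pointwise inequality, so taking expectations and multiplying through by $-\gamma/4<0$ yields $-\tfrac{\gamma}{4}\mathbb{E}\|\nabla f(\bar{x}^{(k)})\|^2 \le -\tfrac{\mu\gamma}{2}\bigl(\mathbb{E}[f(\bar{x}^{(k)})]-f^\star\bigr)$. Substituting this bound into the displayed inequality, subtracting $f^\star$ from both sides, and collecting the two terms $\mathbb{E}[f(\bar{x}^{(k)})]-f^\star$ and $-\tfrac{\mu\gamma}{2}(\mathbb{E}[f(\bar{x}^{(k)})]-f^\star)$ into $\bigl(1-\tfrac{\mu\gamma}{2}\bigr)\bigl(\mathbb{E}[f(\bar{x}^{(k)})]-f^\star\bigr)$ produces exactly \eqref{23bsd999-1}. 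The consensus term $\mathbb{E}\|\vx^{(k)}-\bar{\vx}^{(k)}\|_F^2$ and the noise term $\tfrac{\gamma^2\sigma^2 L}{2n}$ are left untouched and carried over verbatim.

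There is essentially no obstacle here: the lemma is an immediate corollary of results already established. The only two points deserving a remark are (i) the inclusion $\cF_{L,\mu}\subseteq\cF_{L}$, which legitimizes invoking Lemma~\ref{lm-descent}; and (ii) the minor mismatch that Lemma~\ref{lm-descent} is stated with the strict bound $\gamma<\tfrac1{4L}$ whereas the present statement allows $\gamma\le\tfrac1{4L}$ — this is harmless, since the proof of Lemma~\ref{lm-descent} in fact only uses $\gamma\le\tfrac1{4L}$ (or, alternatively, the inequality at the endpoint $\gamma=\tfrac1{4L}$ follows by continuity). No PL-specific control of the consensus error is needed at this stage; that refinement is deferred to the subsequent convergence-rate lemma for DSGD under the PL condition.
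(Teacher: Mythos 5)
Your proposal is correct and is exactly the paper's argument: the paper derives this lemma in one line by substituting the PL condition \eqref{PL-cond} at $\bar{x}^{(k)}$ into inequality \eqref{23bsd999} of Lemma~\ref{lm-descent} and rearranging. Your two side remarks (the inclusion $\cF_{L,\mu}\subseteq\cF_{L}$ and the harmless $\gamma<\tfrac{1}{4L}$ versus $\gamma\le\tfrac{1}{4L}$ mismatch) are accurate and do not affect the conclusion.
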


The following lemma establishes the consensus lemma for smooth and non-convex loss functions satisfying the PL condition.
\begin{lemma}[\sc Consensus lemma]\label{lm-pl-consensus}
	If $\{f_i(x)\}_{i=1}^n \subseteq \cF_{L,\mu}$, $\{\tilde{g}_i\} \subseteq O_{\sigma^2}$, $W \in \cW_{n,\beta}$, $\frac{1}{n}\sum_{i=1}^n\|\nabla f_i(x) - \nabla f(x)\|^2 \le b^2$ for any $x\in \RR^d$, and learning rate $\gamma \le \frac{1-\beta}{3\beta L}$, it holds for $k=0, 1,2,\cdots$ that 
	\begin{align}\label{znb23098anz6-2}
	\mathbb{E}\|{\vx}^{(k+1)}  - \bar{\vx}^{(k+1)}\|^2 \le&\ \big( \frac{1+\beta}{2} \big) \mathbb{E}\|{\vx}^{(k)}  - \bar{\vx}^{(k)}\|^2 + \frac{6n\beta^2 \gamma^2L}{1-\beta}\big(\mathbb{E}f(\bar{x}^{(k)}) - f^\star\big) \nonumber \\
	&\ + n\gamma^2\beta^2\sigma^2 + \frac{3n\beta^2 \gamma^2 b^2}{1-\beta}.
	\end{align}
\end{lemma}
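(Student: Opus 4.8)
The plan is to reuse the DSGD consensus derivation behind Lemma~\ref{lm-consensus-I}, and then trade the gradient-norm term for an objective-gap term via smoothness. Set $P=\tfrac{1}{n}\mathds{1}_n\mathds{1}_n^T$. Recall from \eqref{eq:DSGD_global_average} that $\vx^{(k+1)}=W(\vx^{(k)}-\gamma\nabla F(\vx^{(k)};\xi^{(k)}))$, and since $W\mathds{1}_n=\mathds{1}_n$, $\mathds{1}_n^TW=\mathds{1}_n^T$, multiplication by $W$ commutes with $P$, so $\bar{\vx}^{(k+1)}=P(\vx^{(k)}-\gamma\nabla F(\vx^{(k)};\xi^{(k)}))$; using $(W-P)P=0$, $(W-P)\bar{\vx}^{(k)}=0$ and $(W-P)(I-P)=W-P$,
\begin{align*}
\vx^{(k+1)}-\bar{\vx}^{(k+1)}=(W-P)\big[(\vx^{(k)}-\bar{\vx}^{(k)})-\gamma(I-P)\nabla F(\vx^{(k)};\xi^{(k)})\big].
\end{align*}
First I would condition on $\vx^{(k)}$, split $\nabla F(\vx^{(k)};\xi^{(k)})=\nabla f(\vx^{(k)})+\vs^{(k)}$ with $\mathbb{E}[\vs^{(k)}\mid\vx^{(k)}]=0$ and $\mathbb{E}[\|\vs^{(k)}\|_F^2\mid\vx^{(k)}]\le n\sigma^2$. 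The cross term vanishes by unbiasedness; since $(W-P)(I-P)\vs^{(k)}=(W-P)\vs^{(k)}$ and the rows of $\vs^{(k)}$ are independent, a columnwise estimate gives $\mathbb{E}[\|(W-P)\vs^{(k)}\|_F^2\mid\vx^{(k)}]\le\beta^2 n\sigma^2$. Combined with $\|W-P\|_2=\beta$ and submultiplicativity \eqref{submulti},
\begin{align*}
\mathbb{E}\big[\|\vx^{(k+1)}-\bar{\vx}^{(k+1)}\|_F^2\,\big|\,\vx^{(k)}\big]\le\beta^2\big\|(\vx^{(k)}-\bar{\vx}^{(k)})-\gamma(I-P)\nabla f(\vx^{(k)})\big\|_F^2+\beta^2\gamma^2 n\sigma^2.
\end{align*}

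Next I would expand the first norm with Young's inequality $\|\vu-\gamma\vv\|_F^2\le(1+\rho)\|\vu\|_F^2+(1+\rho^{-1})\gamma^2\|\vv\|_F^2$, choosing $\rho=\tfrac{1-\beta}{2\beta}$ so that $\beta^2(1+\rho)=\tfrac{\beta(1+\beta)}{2}$ and $1+\rho^{-1}=\tfrac{1+\beta}{1-\beta}$, with $\vu=\vx^{(k)}-\bar{\vx}^{(k)}$ and $\vv=(I-P)\nabla f(\vx^{(k)})$. To make the objective gap appear I would bound $\|\vv\|_F^2$ by writing $\nabla f(\vx^{(k)})=(\nabla f(\vx^{(k)})-\nabla f(\bar{\vx}^{(k)}))+\nabla f(\bar{\vx}^{(k)})$, where $\nabla f(\bar{\vx}^{(k)})$ stacks $\nabla f_i(\bar{x}^{(k)})$: smoothness of each $f_i$ gives $\|(I-P)(\nabla f(\vx^{(k)})-\nabla f(\bar{\vx}^{(k)}))\|_F^2\le L^2\|\vx^{(k)}-\bar{\vx}^{(k)}\|_F^2$, while $\tfrac1n\sum_i\|\nabla f_i(x)-\nabla f(x)\|^2\le b^2$ and $\|\nabla f_i(\bar{x}^{(k)})\|^2\le2\|\nabla f_i(\bar{x}^{(k)})-\nabla f(\bar{x}^{(k)})\|^2+2\|\nabla f(\bar{x}^{(k)})\|^2$ give $\|(I-P)\nabla f(\bar{\vx}^{(k)})\|_F^2\le2nb^2+2n\|\nabla f(\bar{x}^{(k)})\|^2$, so $\|\vv\|_F^2\le2L^2\|\vx^{(k)}-\bar{\vx}^{(k)}\|_F^2+4n\|\nabla f(\bar{x}^{(k)})\|^2+4nb^2$. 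Finally, since $f$ is $L$-smooth with $f^\star=\min f$, $\|\nabla f(\bar{x}^{(k)})\|^2\le2L(f(\bar{x}^{(k)})-f^\star)$. Substituting everything, taking total expectation, and invoking $\gamma\le\tfrac{1-\beta}{3\beta L}$ to fold the $\gamma^2L^2\|\vx^{(k)}-\bar{\vx}^{(k)}\|_F^2$ piece into the contraction term drives the coefficient of $\mathbb{E}\|\vx^{(k)}-\bar{\vx}^{(k)}\|_F^2$ down to $\tfrac{1+\beta}{2}$, leaving the $f(\bar{x}^{(k)})-f^\star$, $b^2$ and $\sigma^2$ terms of \eqref{znb23098anz6-2}.

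The main obstacle is constant bookkeeping rather than any new idea: hitting exactly the contraction factor $\tfrac{1+\beta}{2}$ while keeping $\beta^2$ (not $\beta$) multiplying the $\sigma^2$, $b^2$ and $f-f^\star$ terms forces the choice of $\rho$ to be tied to the step-size bound $\gamma\le\tfrac{1-\beta}{3\beta L}$, and reproducing the precise constants $6$ and $3$ in \eqref{znb23098anz6-2} requires a sharper treatment of $\|\vv\|_F^2$ than the crude triangle-inequality estimate above — for instance a Young split with a tuned parameter, or absorbing the $\|\vx^{(k)}-\bar{\vx}^{(k)}\|_F^2$ portion of $\|\vv\|_F^2$ directly into the contraction term before using the step-size restriction. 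The conceptual components — the error recursion, the vanishing noise cross term, the $\beta^2$-contraction of $W-P$, and the surrogate $\|\nabla f(\bar{x}^{(k)})\|^2\le2L(f(\bar{x}^{(k)})-f^\star)$ — are standard.
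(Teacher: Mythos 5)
Your proposal is sound and proves the right kind of inequality, but it is more self-contained than the paper's own argument: the paper simply cites the one-step consensus recursion \cite[Eq.~(78)]{chen2021accelerating} (reproduced as \eqref{znb23098anz6}), checks that $\gamma\le\frac{1-\beta}{3\beta L}$ forces $\beta+\frac{3\beta^2\gamma^2L^2}{1-\beta}\le\frac{1+\beta}{2}$, and converts the gradient-norm term via $\|\nabla f(\bar{x}^{(k)})\|^2\le 2L(f(\bar{x}^{(k)})-f^\star)$. You re-derive that recursion from scratch, and every structural step you take — the identity $\vx^{(k+1)}-\bar{\vx}^{(k+1)}=(W-P)[(\vx^{(k)}-\bar{\vx}^{(k)})-\gamma(I-P)\nabla F(\vx^{(k)};\xi^{(k)})]$, the vanishing cross term, the $\beta^2 n\gamma^2\sigma^2$ noise bound, the Young split, and the final smoothness conversion — is correct. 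The only discrepancy is the one you already flag: with $\rho=\frac{1-\beta}{2\beta}$ and a two-way Jensen split your coefficients on the $(f(\bar{x}^{(k)})-f^\star)$ and $b^2$ terms come out larger than the stated $6$ and $3$. The fix you gesture at does work and is exactly what underlies \eqref{znb23098anz6}: take $\rho=\frac{1-\beta}{\beta}$, so that $\beta^2(1+\rho)=\beta$ and $\beta^2(1+\rho^{-1})=\frac{\beta^2}{1-\beta}$, and bound the gradient matrix by the three-way split $\|\nabla f(\vx^{(k)})\|_F^2\le 3L^2\|\vx^{(k)}-\bar{\vx}^{(k)}\|_F^2+3nb^2+3n\|\nabla f(\bar{x}^{(k)})\|^2$; this yields precisely the coefficients $\frac{3\beta^2\gamma^2L^2}{1-\beta}$, $\frac{3n\beta^2\gamma^2 b^2}{1-\beta}$ and $\frac{3n\beta^2\gamma^2}{1-\beta}$, the contraction check $\beta+\frac{1-\beta}{3}=\frac{1+2\beta}{3}\le\frac{1+\beta}{2}$ goes through under the stated step size, and $\|\nabla f\|^2\le 2L(f-f^\star)$ turns the last term into $\frac{6n\beta^2\gamma^2 L}{1-\beta}(\mathbb{E}f(\bar{x}^{(k)})-f^\star)$. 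So: same approach, with your version trading a citation for an explicit derivation and needing only the sharper parameter choice to match the constants.
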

\begin{proof}
When $\{f_i\}_{i=1}^L \subseteq \cF_L$, it is know from \cite[Eq.~(78)]{chen2021accelerating} that
\begin{align}\label{znb23098anz6}
\mathbb{E}\|{\vx}^{(k+1)}  - \bar{\vx}^{(k+1)}\|^2 \le&\ \big( \beta + \frac{3\beta^2\gamma^2 L^2}{1-\beta} \big) \mathbb{E}\|{\vx}^{(k)}  - \bar{\vx}^{(k)}\|^2 + \frac{3n\beta^2 \gamma^2 \mathbb{E} \|\nabla f(\bar{x}^{(k)})\|^2}{1-\beta} \nonumber \\
&\ + n\gamma^2\beta^2\sigma^2 + \frac{3n\beta^2 \gamma^2 b^2}{1-\beta}.
\end{align}
If $\gamma \le (1-\beta)/(3\beta L)$, it holds that 
\begin{align}\label{zb}
	\beta + \frac{3\beta^2\gamma^2 L^2}{1-\beta} \le \frac{1+\beta}{2}.
\end{align}
Moreover, since $\{f_i\}_{i=1}^L \subseteq \cF_L$ and hence $f(x) \in \cF_L$, we have 
\begin{align}\label{z623ba0}
\|\nabla f(\bar{x}^{(t)})\|^2 \le 2L\big(f(\bar{x}^{(t)}) - f^\star\big).
\end{align}
Substituting \eqref{zb} and \eqref{z623ba0} to \eqref{znb23098anz6}, we achieve \eqref{znb23098anz6-2}. 
\end{proof}

The following lemma establishes the convergence rate of DSGD under the PL condition. Using different proof techniques from those for the strongly-convex scenario in \cite{koloskova2020unified}, we can show how fast the last-iterate variable, i.e., $f(\bar{x}^{(K)}) - f^\star$, converges. In contrast, authors in \cite{koloskova2020unified} show the ergodic convergence, i.e., $\frac{1}{H_K}\sum_{i=1}^K h_k f(\bar{x}^{(k)}) - f^\star$ where $h_k >0$ is some positive weight and $H_K = \sum_{k=1}^K h_k$. Moreover, the $\beta$ appearing in the numerator of the third term in \eqref{pl-convg} can significantly reduce the influence of data heterogeneity $b^2$ when $\beta \to 0$. 

\begin{lemma}[\sc DSGD convergence]\label{thm:gjonfoqwfqw}
	If $\{f_i(x)\}_{i=1}^n \subseteq \cF_{L,\mu}$, $\{\tilde{g}_i\} \subseteq O_{\sigma^2}$, $W \in \cW_{n,\beta}$, $\frac{1}{n}\sum_{i=1}^n\|\nabla f_i(x) - \nabla f(x)\|^2 \le b^2$ for any $x\in \RR^d$, and learning rate $\gamma$ is set as 
	\begin{align}
	\gamma = \min\left\{\frac{4}{\mu K}\ln(\frac{\Delta \mu^2 n K}{\sigma^2 L}), \frac{1-\beta}{4L}, \frac{\mu(1-\beta)}{24n\beta L^2} \right\},
	\end{align}
	it holds that 
	\begin{align}
	&\mathbb{E}[f(\bar{x}^{(K)})] - f^\star  + L\mathbb{E}\|{\vx}^{(K)}  - \bar{\vx}^{(K)}\|^2 \label{pl-convg} \\
	=&\tilde{\cO}\Big(\frac{\sigma^2 L}{\mu^2 n K} + \frac{c_1 \beta^2 \sigma^2}{\mu^2 K^2 (1-\beta)} +  \frac{c_2  \beta^2 b^2}{\mu^2 K^2 (1-\beta)^2} + \Delta \exp(-\frac{\mu(1-\beta)K}{L}) + \Delta\exp(-\frac{\mu^2(1-\beta)K}{n \beta L^2}) \Big) \nonumber 
	\end{align}
\end{lemma}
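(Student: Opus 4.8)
The plan is to track jointly the optimality gap $e^{(k)}:=\mathbb{E}[f(\bar{\vx}^{(k)})]-f^\star$ and the consensus error $c^{(k)}:=\mathbb{E}\|\vx^{(k)}-\bar{\vx}^{(k)}\|_F^2$ via a Lyapunov argument, which automatically yields a last-iterate (not ergodic) bound. Concretely, I would add the PL descent inequality (Lemma \ref{lm-pl-descent}) to $L$ times the consensus inequality (Lemma \ref{lm-pl-consensus}) to form the potential $V^{(k)}:=e^{(k)}+L\,c^{(k)}$ that appears in \eqref{pl-convg}. In the resulting one-step bound the coefficient of $e^{(k)}$ is $1-\tfrac{\mu\gamma}{2}+\tfrac{6n\beta^2\gamma^2L^2}{1-\beta}$ and the coefficient of $c^{(k)}$ is $\tfrac{3\gamma L^2}{4n}+L\cdot\tfrac{1+\beta}{2}$. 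The two nontrivial caps in the prescribed step size do exactly the job of taming these: $\gamma\le\tfrac{\mu(1-\beta)}{24n\beta L^2}$ (using $\beta\le1$) forces $\tfrac{6n\beta^2\gamma^2L^2}{1-\beta}\le\tfrac{\mu\gamma}{4}$, so the first coefficient is at most $1-\tfrac{\mu\gamma}{4}$; and $\gamma\le\tfrac{1-\beta}{4L}$ (with $n\ge1$, $\mu\le L$) gives $\tfrac{3\gamma L^2}{4n}\le\tfrac{1-\beta}{4}L\le L\big(1-\tfrac{\mu\gamma}{4}-\tfrac{1+\beta}{2}\big)$, so the second coefficient is at most $L(1-\tfrac{\mu\gamma}{4})$; the cap $\gamma\le\tfrac{1-\beta}{4L}$ also secures the hypotheses $\gamma\le\tfrac1{4L}$ and $\gamma\le\tfrac{1-\beta}{3\beta L}$ required by the two lemmas. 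This produces the contraction $V^{(k+1)}\le(1-\tfrac{\mu\gamma}{4})V^{(k)}+N_\gamma$ with $N_\gamma=O\big(\tfrac{\gamma^2\sigma^2L}{n}+nL\gamma^2\beta^2\sigma^2+\tfrac{nL\gamma^2\beta^2b^2}{1-\beta}\big)$.

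Unrolling and using that all nodes start at $0$, so $c^{(0)}=0$ and $V^{(0)}=\Delta$, gives $V^{(K)}\le\Delta\exp(-\tfrac{\mu\gamma K}{4})+\tfrac{4N_\gamma}{\mu\gamma}$; but $N_\gamma/(\mu\gamma)=\Theta(\gamma)$ would only give a $\tilde O(1/K)$ floor for the $\sigma^2$- and $b^2$-driven consensus noise, whereas \eqref{pl-convg} claims $\tilde O(1/K^2)$. To recover the extra $\gamma$ I would exploit that the consensus error contracts at the $K$-independent rate $\tfrac{1+\beta}{2}$: unrolling Lemma \ref{lm-pl-consensus} with $c^{(0)}=0$ bounds $c^{(k)}$ by $\tfrac{6n\beta^2\gamma^2L}{1-\beta}\sum_{j<k}(\tfrac{1+\beta}{2})^{k-1-j}e^{(j)}+\tfrac{2}{1-\beta}\big(n\gamma^2\beta^2\sigma^2+\tfrac{3n\beta^2\gamma^2b^2}{1-\beta}\big)$, and substituting this into Lemma \ref{lm-pl-descent} yields a delayed recursion $e^{(k+1)}\le(1-\tfrac{\mu\gamma}{2})e^{(k)}+\kappa\sum_{j<k}(\tfrac{1+\beta}{2})^{k-1-j}e^{(j)}+D_\gamma$ with $\kappa=\tfrac{9\beta^2\gamma^3L^3}{2(1-\beta)}$ and $D_\gamma=O\big(\tfrac{\gamma^2\sigma^2L}{n}+\tfrac{\gamma^3L^2\beta^2\sigma^2}{1-\beta}+\tfrac{\gamma^3L^2\beta^2b^2}{(1-\beta)^2}\big)$ — now $\Theta(\gamma^3)$ in the noise. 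The cap $\gamma\le\tfrac{\mu(1-\beta)}{24n\beta L^2}$ makes $\kappa\le\tfrac14\cdot\tfrac{\mu\gamma}{2}\cdot\tfrac{1-\beta}{2}$, so the auxiliary potential $e^{(k)}+\tfrac{\mu\gamma}{4}\sum_{j<k}(\tfrac{1+\beta}{2})^{k-1-j}e^{(j)}$ again contracts at rate $1-\tfrac{\mu\gamma}{4}$ and gives $e^{(K)}\le\Delta\exp(-\tfrac{\mu\gamma K}{4})+\tfrac{4D_\gamma}{\mu\gamma}$ with $\tfrac{4D_\gamma}{\mu\gamma}=O\big(\tfrac{\gamma\sigma^2L}{\mu n}+\tfrac{\gamma^2L^2\beta^2\sigma^2}{\mu(1-\beta)}+\tfrac{\gamma^2L^2\beta^2b^2}{\mu(1-\beta)^2}\big)$. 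The companion term $L\,c^{(K)}$ in \eqref{pl-convg} is then controlled by inserting the now-uniform envelope for $e^{(k)}$ back into the unrolled consensus inequality; under $\gamma\le\tfrac{\mu(1-\beta)}{24n\beta L^2}$ the transient piece is $\le\Delta\exp(-\tfrac{\mu\gamma K}{4})$ and the noise piece is again $\tilde O(1/K^2)$, hence absorbed by the claimed right-hand side.

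Finally I would instantiate $\gamma=\min\{\gamma_1,\gamma_2,\gamma_3\}$ with $\gamma_1=\tfrac{4}{\mu K}\ln\!\big(\tfrac{\Delta\mu^2nK}{\sigma^2L}\big)$, $\gamma_2=\tfrac{1-\beta}{4L}$, $\gamma_3=\tfrac{\mu(1-\beta)}{24n\beta L^2}$. The noise floor $\tfrac{4D_\gamma}{\mu\gamma}$ is increasing in $\gamma$, so bounding $\gamma\le\gamma_1$ turns it into $\tilde O\big(\tfrac{\sigma^2L}{\mu^2nK}+\tfrac{\beta^2\sigma^2}{\mu^2K^2(1-\beta)}+\tfrac{\beta^2b^2}{\mu^2K^2(1-\beta)^2}\big)$ once the $L^2/\mu$ (and $n$) factors are folded into $c_1,c_2$; and $\exp(-\tfrac{\mu\gamma K}{4})=\max_i\exp(-\tfrac{\mu\gamma_iK}{4})$, where $\gamma_1$ gives $\exp(-\ln(\tfrac{\Delta\mu^2nK}{\sigma^2L}))=\tfrac{\sigma^2L}{\Delta\mu^2nK}$ (so $\Delta$ times it matches the first floor term), $\gamma_2$ gives $\exp(-\Theta(\tfrac{\mu(1-\beta)K}{L}))$, and $\gamma_3$ gives $\exp(-\Theta(\tfrac{\mu^2(1-\beta)K}{n\beta L^2}))$; collecting these reproduces \eqref{pl-convg}. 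The main obstacle I expect is precisely the two-timescale bookkeeping of the second step: one must couple the $O(\mu\gamma)$-contracting optimality gap and the $O(1-\beta)$-contracting consensus error in the correct direction, ensure the consensus noise enters the function-value recursion as $\Theta(\gamma^2)$ rather than $\Theta(\gamma)$ (otherwise a factor of $K$ is lost), and track the powers of $(1-\beta)$ and the condition number $L/\mu$ consistently through the three-way step-size minimum; the remaining manipulations are routine geometric-series algebra.
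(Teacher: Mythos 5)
Your proposal is correct: it uses the same two building blocks as the paper (the PL descent inequality of Lemma \ref{lm-pl-descent} and the consensus inequality of Lemma \ref{lm-pl-consensus}), the same three-way step-size minimum, and the same splitting of $\exp(-\mu\gamma K/4)$ across the three candidate step sizes, and your intermediate constants ($\kappa=\tfrac{9\beta^2\gamma^3L^3}{2(1-\beta)}$, the noise floor $\tfrac{2\gamma\sigma^2L}{\mu n}+O\big(\tfrac{\gamma^2L^2\beta^2\sigma^2}{\mu(1-\beta)}+\tfrac{\gamma^2L^2\beta^2b^2}{\mu(1-\beta)^2}\big)$, and the $nL\gamma^2\beta^2$ contributions to $L\,\mathbb{E}\|\vx^{(K)}-\bar{\vx}^{(K)}\|^2$) coincide with the paper's up to absolute constants, so the argument closes. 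Where you genuinely differ is in how the coupled recursion is solved: the paper stacks the two inequalities into the vector recursion $z^{(k+1)}\le A z^{(k)}+s$ of \eqref{98z}, bounds $\rho(A)\le\|A\|_1\le 1-\tfrac{\mu\gamma}{4}$, and reads the steady-state noise directly off the explicit inverse $(I-A)^{-1}s$ in \eqref{znznzb23487} — it is this matrix inverse that automatically produces the crucial $\Theta(\gamma^2)$ (rather than $\Theta(\gamma)$) scaling of the $\beta^2$-driven terms, which is exactly the subtlety you diagnosed when discarding the naive scalar Lyapunov $e^{(k)}+Lc^{(k)}$. Your alternative — unroll the consensus recursion (using $c^{(0)}=0$), substitute it into the descent inequality to obtain a delayed recursion with $\Theta(\gamma^3)$ noise, tame the convolution with the auxiliary potential $e^{(k)}+\tfrac{\mu\gamma}{4}\sum_{j<k}(\tfrac{1+\beta}{2})^{k-1-j}e^{(j)}$ (your cap on $\kappa$ indeed suffices, since $\kappa\le\tfrac{\mu^2(1-\beta)\gamma}{128 n^2 L}\le\tfrac{\mu\gamma(1-\beta)}{16}$), and then feed the resulting envelope back into the consensus recursion for the $L c^{(K)}$ piece — reaches the same bound by a scalar, two-timescale route. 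The paper's matrix argument is more compact and handles both coordinates of $z^{(K)}$ in one stroke via $\|(I-A)^{-1}s\|_1$, whereas your route is longer but makes transparent \emph{why} the consensus-driven noise enters the function-value gap at order $\gamma^2$: the consensus error mixes at the $\gamma$-independent rate $\tfrac{1+\beta}{2}$, so its stationary level is $\Theta(\gamma^2)$ before being divided by $\mu\gamma$.
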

where $\Delta=\mathbb{E}[f(x^{(0)})]-f^\star$, $c_1$ and $c_2$ are constants defined as follows
\begin{align}
  c_1=  \frac{6L}{\mu} + \frac{24 L^3}{\mu^2} + 4nL, \quad c_2=  \frac{18L^2}{\mu} + 12nL. 
\end{align}
\begin{proof}
With Lemmas \ref{lm-pl-descent} and \ref{lm-pl-consensus}, if $\gamma$ satisfies 
\begin{align}\label{gamma-cond1}
\gamma \le \frac{1}{4L}, \quad \mbox{and} \quad \gamma \le \frac{1-\beta}{3\beta L},
\end{align}
then we have 
\begin{align}\label{98z}
\underbrace{\left[
\begin{array}{l}
\mathbb{E}[f(\bar{x}^{(k+1)})] - f^\star   \vspace{1mm} \\
L\mathbb{E}\|{\vx}^{(k+1)}  - \bar{\vx}^{(k+1)}\|^2
\end{array}
\right]}_{:= z^{(k+1)}} &\ \le 
\underbrace{\left[
\begin{array}{cc}
1 - \frac{\mu\gamma}{2} & \frac{3\gamma L}{4n}  \vspace{1mm}\\ 
\frac{6n\beta^2\gamma^2L^2}{1-\beta} & \frac{1+\beta}{2} 
\end{array}
\right]}_{:=A}
\underbrace{\left[
\begin{array}{l}
\mathbb{E}[f(\bar{x}^{(k)})] - f^\star   \vspace{1mm} \\
L\mathbb{E}\|{\vx}^{(k)}  - \bar{\vx}^{(k)}\|^2
\end{array}
\right]}_{:= z^{(k)}}  \nonumber \\
&\quad + 
\underbrace{\left[
	\begin{array}{c}
	\frac{\gamma^2 \sigma^2 L}{2 n}  \vspace{1mm} \\
	nL\gamma^2\beta^2\sigma^2 + \frac{3nL\beta^2 \gamma^2 b^2}{1-\beta}
	\end{array}
	\right]}_{:= s} 
\end{align}
By recursing the above inequality, we have 
\begin{align}\label{z8zbzna021}
z^{(k)} \le A^k z^{(0)} + \sum_{\ell=0}^{k-1} A^{\ell} s \le A^k z^{(0)} + (I-A)^{-1}s
\end{align}
where the last inequality holds because $A$ is nonnegative and 
\begin{align}\label{z72bnabna}
\rho(A) \le \|A\|_1 \le \max\{1 - \frac{\mu\gamma}{2}  + \frac{6n\beta^2\gamma^2L^2}{1-\beta}, \frac{3\gamma L}{4n}  + \frac{1+\beta}{2} \} \le 1 - \frac{\mu\gamma}{4}. 
\end{align}
The last inequality in \eqref{z72bnabna} holds when  $\gamma$ satisfies
\begin{align}
	 \gamma \le \frac{(1-\beta)n}{3L}  &\quad \Longrightarrow \quad  \frac{3\gamma L}{4n}  + \frac{1+\beta}{2}  \le \frac{3+\beta}{4} \label{gamma-cond2}\\
	\gamma \le \frac{1-\beta}{\mu} &\quad \Longrightarrow \quad \frac{3+\beta}{4} \le 1 - \frac{\mu\gamma}{4}   \label{gamma-cond3}\\
	\gamma \le \frac{\mu(1-\beta)}{24n\beta^2 L^2} & \quad \Longrightarrow \quad  1 - \frac{\mu\gamma}{2}  + \frac{6n\beta^2\gamma^2L^2}{1-\beta} \le 1 - \frac{\mu\gamma}{4}  \label{gamma-cond4}
\end{align}
Furthermore, we can derive from \eqref{z8zbzna021} that 
\begin{align}\label{z8zbzna021-2}
\|z^{(k)}\|_1 &\le \|A^k\|_1 \|z^{(0)}\|_1 + \|(I-A)^{-1}s\|_1 \nonumber \\
&\le \|A\|_1^k \|z^{(0)}\|_1  + \|(I-A)^{-1}s\|_1 \nonumber \\
&\overset{\eqref{z72bnabna}}{\le} (1 - \frac{\mu\gamma}{4})^k \|z^{(0)}\|_1 + \|(I-A)^{-1}s\|_1. 
\end{align}
It is easy to verify that 
\begin{align}\label{znznzb23487}
	(I-A)^{-1} s &= \frac{1}{\frac{\mu\gamma(1-\beta)}{4} - \frac{9\beta^2\gamma^3L^3}{2(1-\beta)}}
	\left[
	\begin{array}{cc}
	\frac{1-\beta}{2} & \frac{3\gamma L}{4n} \vspace{1mm}\\
	\frac{6n\beta^2\gamma^2L^2}{1-\beta} & \frac{\mu \gamma}{2}
	\end{array}
	\right] \left[
	\begin{array}{c}
	\frac{\gamma^2 \sigma^2 L}{2 n}  \vspace{1mm} \\
	nL\gamma^2\beta^2\sigma^2 + \frac{3nL\beta^2 \gamma^2 b^2}{1-\beta}
	\end{array}
	\right] \nonumber \\
	&\overset{(a)}{\le} \frac{8}{\mu\gamma (1-\beta)} 	\left[
	\begin{array}{cc}
	\frac{1-\beta}{2} & \frac{3\gamma L}{4n} \vspace{1mm}\\
	\frac{6n\beta^2\gamma^2L^2}{1-\beta} & \frac{\mu \gamma}{2}
	\end{array}
	\right] \left[
	\begin{array}{c}
	\frac{\gamma^2 \sigma^2 L}{2 n}  \vspace{1mm} \\
	nL\gamma^2\beta^2\sigma^2 + \frac{3nL\beta^2 \gamma^2 b^2}{1-\beta}
	\end{array}
	\right] \nonumber \\
	&= 
	\left[
	\begin{array}{l}
	\frac{2 \gamma \sigma^2 L}{\mu n} + \frac{6 L^2 \gamma^2 \beta^2 \sigma^2}{\mu(1-\beta)} + \frac{18 L^2\beta^2 \gamma^2 b^2}{\mu(1-\beta)^2} \vspace{1mm}\\
	\frac{24\beta^2\gamma^3\sigma^2L^3}{\mu(1-\beta)^2} + \frac{4nL\gamma^2\beta^2\sigma^2}{1-\beta} + \frac{12nL\beta^2\gamma^2b^2}{(1-\beta)^2}
	\end{array}
	\right],
\end{align}
where (a) holds when 
\begin{align}
\gamma \le \frac{1-\beta}{6\beta L}\sqrt{\frac{\mu}{L}}. \label{gamma-cond5}
\end{align}
Inequality \eqref{znznzb23487} implies that 
\begin{align}\label{zzbb90267az}
\|(I-A)^{-1} s\|_{1} &\le \frac{2 \gamma \sigma^2 L}{\mu n}  + \big( \frac{6L^2}{\mu} + \frac{24\gamma L^3}{\mu(1-\beta)} + 4nL \big) \frac{\gamma^2 \beta^2 \sigma^2}{1-\beta} + \big( \frac{18L^2}{\mu} + 12nL\big)\frac{\gamma^2 \beta^2 b^2}{(1-\beta)^2} \nonumber \\
&\overset{(a)}{\le} \frac{2\gamma \sigma^2 L}{\mu n}  + \big( \frac{6L}{\mu} + \frac{24 L^3}{\mu^2} + 4nL \big) \frac{\gamma^2 \beta^2 \sigma^2}{1-\beta} + \big( \frac{18L^2}{\mu} + 12nL\big)\frac{\gamma^2 \beta^2 b^2}{(1-\beta)^2} \nonumber \\
&\overset{(b)}{=} \frac{2 \gamma \sigma^2 L}{\mu n}  +  \frac{c_1 \gamma^2 \beta^2 \sigma^2}{1-\beta} +  \frac{c_2 \gamma^2 \beta^2 b^2}{(1-\beta)^2} 
\end{align}
where (a) holds because $\gamma$ satisfies \eqref{gamma-cond3}, and (b) holds by introducing 
\begin{align}
c_1:=  \frac{6L}{\mu} + \frac{24 L^3}{\mu^2} + 4nL, \quad c_2:=  \frac{18L^2}{\mu} + 12nL. 
\end{align}
Substituting \eqref{zzbb90267az} to \eqref{z8zbzna021-2}, and recalling the definition of $z^{(k)}$ in \eqref{98z}, we have (at iteration $K$)
\begin{align}\label{z091}
&\ \mathbb{E}[f(\bar{x}^{(K)})] - f^\star  + L\mathbb{E}\|{\vx}^{(K)}  - \bar{\vx}^{(K)}\|^2 \nonumber \\
\le&\  \Delta (1 - \frac{\mu\gamma}{4})^K + \frac{2\gamma \sigma^2 L}{\mu n}  +  \frac{c_1 \gamma^2 \beta^2 \sigma^2}{1-\beta} +  \frac{c_2 \gamma^2 \beta^2 b^2}{(1-\beta)^2} \nonumber \\
\le&\ \Delta \exp(-\frac{\mu\gamma K}{4}) + \frac{2 \gamma \sigma^2 L}{\mu n}  +  \frac{c_1 \gamma^2 \beta^2 \sigma^2}{1-\beta} +  \frac{c_2 \gamma^2 \beta^2 b^2}{(1-\beta)^2}
\end{align}
where $\Delta =\mathbb{E}[f(\bar{x}^{(0)})] - f^\star  + L\mathbb{E}\|{\vx}^{(0)}  - \bar{\vx}^{(0)}\|^2 = \mathbb{E}[f({x}^{(0)})] - f^\star$ if $x_i^{(0)} = 0$ for any $i \in [n]$. Next we let 
\begin{align}
	\gamma_1 = \frac{4}{\mu K}\ln(\frac{\Delta \mu^2 n K}{\sigma^2 L}), \quad \gamma_2 = \frac{1-\beta}{4L}, \quad \gamma_3 = \frac{\mu(1-\beta)}{24n\beta L^2}, \quad \gamma = \min\{\gamma_1, \gamma_2, \gamma_3\}.  \label{gamma-conds}
\end{align}
It is easy to verify that $\gamma$ satisfies all conditions in \eqref{gamma-cond1}, \eqref{gamma-cond2}, \eqref{gamma-cond3}, \eqref{gamma-cond4} and \eqref{gamma-cond5}. With \eqref{gamma-conds}, we have 
\begin{align}\label{27aba091-1}
\Delta \exp(-\frac{\mu\gamma K}{4}) &\le \Delta \big[  \exp(-\frac{\mu\gamma_1 K}{4}) + \exp(-\frac{\mu\gamma_2 K}{4})+ \exp(-\frac{\mu\gamma_3 K}{4})\big] \nonumber \\
&= \cO\Big(\frac{\sigma^2 L}{\mu^2 n K} +\Delta \exp(-\frac{\mu(1-\beta)K}{L}) + \Delta\exp(-\frac{\mu^2(1-\beta)K}{n \beta L^2}) \Big)
\end{align}
and 
\begin{align}\label{27aba091-2}
 \frac{\gamma \sigma^2 L}{\mu n}  +  \frac{c_1 \gamma^2 \beta^2 \sigma^2}{1-\beta} +  \frac{c_2 \gamma^2 \beta^2 b^2}{(1-\beta)^2} \le &\  \frac{\gamma_1 \sigma^2 L}{\mu n}  +  \frac{c_1 \gamma_1^2 \beta^2 \sigma^2}{1-\beta} +  \frac{c_2 \gamma_1^2 \beta^2 b^2}{(1-\beta)^2}  \nonumber \\
= &\ \tilde{\cO}\Big(\frac{\sigma^2 L}{\mu^2 n K} + \frac{c_1 \beta^2 \sigma^2}{\mu^2 K^2 (1-\beta)} +  \frac{c_2  \beta^2 b^2}{\mu^2 K^2 (1-\beta)^2} \Big)
\end{align}
Combining \eqref{z091}, \eqref{27aba091-1} and \eqref{27aba091-2}, we achieve \eqref{pl-convg}.  

\end{proof}

\subsubsection{Convergence rate of \ours}
As discussed in Appendix \ref{app:mg-dsgd-nonconvex}, it holds that $\tilde{\sigma}^2\triangleq\sigma^2/R$ and $\tilde{\beta}\triangleq \sqrt{2}(1-\sqrt{1-\beta})^R$. If we let 
\begin{equation}\label{eqn:gjowefmqwgweqx-pl}
R=\left\lceil\frac{\max\{\ln(2),\ln(\frac{nL}{\mu}),\frac{1}{2}\ln(\frac{n}{L}\max\{c_1,\frac{c_2b^2}{\sigma^2\sqrt{1-\beta}}\})\}}{\sqrt{1-\beta}}\right\rceil=\tilde{O}\left(\frac{1}{\sqrt{1-\beta}}\right),
\end{equation}
we then have
\begin{equation}\label{eqn:hgiqrwqrwq-hfisadas-pl}
    \tilde{\beta}\triangleq \sqrt{2}(1-\sqrt{1-\beta})^R\leq \sqrt{2}e^{-\sqrt{1-\beta}R}\leq \min\left\{\frac{1}{\sqrt{2}},\sqrt{2}\frac{\mu}{nL}, \left(2\frac{L}{n}\min\left\{\frac{1}{c_1},\frac{\sigma^2\sqrt{1-\beta}}{b^2c_2}\right\}\right)^{\frac{1}{2}}\right\}.
\end{equation}
which immediately leads to $1-\tilde{\beta}=\Omega(1)$ and 
\begin{equation}\label{eqn:hgiqrwqrwq-daguda-pl}
    \tilde{\beta}^2\max\{c_1,c_2R\frac{{b^2}}{\sigma^2}\}=\tilde{O}\left( \tilde{\beta}^2\max\left\{c_1,c_2\frac{{b^2}}{\sqrt{1-\beta}\sigma^2}\right\}\right)=\tilde{O}(\frac{L}{n}).
\end{equation}

\textbf{Theorem \ref{thm-MG-DSGD-rate-pl} (Formal version).} \textit{Under the same assumptions as in Theorem \ref{thm-MG-DSGD-rate-nc} and setting $R$ as in \eqref{eqn:gjowefmqwgweqx-pl}, the convergence of $A$ can be bounded for any loss functions $\{f_i\}_{i=1}^n \subseteq \cF_{L,\mu}$, and any $W \in \cW_{n,\beta}$ by
	\begin{align}\label{MG-DSGD-rate-2-app}
	\setlength{\abovedisplayskip}{3pt}\setlength{\belowdisplayskip}{3pt}
	 \mathbb{E}[f(\bar{x}^{(K)}) - f^\star] = \tilde{\cO}\left(  \frac{{\sigma}^2 L}{\mu^2 n T} + \Delta \exp(-\frac{\mu \sqrt{1-\beta} T }{L}) \right).
	\end{align}
where $\Delta = \mathbb{E}[f({x}^{(0)})] - f^\star$.  
}
\begin{remark}
Comparing with the lower bound established in \eqref{eqn:fjowfqrgfsa-fasfasfa}, we find the upper bound \eqref{MG-DSGD-rate-2-app} matches it tightly in terms of $T$, $n$, $\sigma^2$, and $\beta$, but is a little bit worse in dependence on $L$ and $\mu$. We will leave it as a future work to develop new algorithms that can match \eqref{eqn:fjowfqrgfsa-fasfasfa} tightly in all constants. 
\end{remark}

\begin{proof}
The convergence analysis of DSGD  applies to MG-DSGD by replacing $\sigma^2$ with $\tilde{\sigma}^2$, and $\beta$ with $\tilde{\beta}$. As a result, we know from Lemma \ref{thm:gjonfoqwfqw} that if 
the learning rate is set as
\begin{align}
\gamma = \min\left\{\frac{4}{\mu K}\ln(\frac{\Delta \mu^2 n K}{\tilde{\sigma}^2 L}), \frac{1-\tilde{\beta}}{4L}, \frac{\mu(1-\tilde{\beta})}{24n\tilde{\beta} L^2} \right\}=\tilde{O}\left(\min\{\frac{1}{\mu\sqrt{1-\beta}T}, \frac{1}{L}\}\right)
\end{align}
it holds that 
\begin{align}
&\frac{1}{K+1}\sum_{k=0}^K\mathbb{E}\|\nabla f(\bar{x}^{(k)})\|^2 \nonumber\\
=& \tilde{\cO}\Big(\frac{\tilde{\sigma}^2 L}{\mu^2 n K} + \frac{c_1 \tilde{\beta}^2 \tilde{\sigma}^2}{\mu^2 K^2 (1-\tilde{\beta})} +  \frac{c_2  \tilde{\beta}^2 b^2}{\mu^2 K^2 (1-\tilde{\beta})^2} + \Delta \exp(-\frac{\mu(1-\tilde{\beta})K}{L}) + \Delta\exp(-\frac{\mu^2(1-\tilde{\beta})K}{n \tilde{\beta} L^2}) \Big) \nonumber\\
&\overset{\tilde{\beta}\leq \frac{1}{\sqrt{2}};~T=KR\geq R}{=} \cO\left( \frac{{\sigma}^2 L}{\mu^2 n T}+ \frac{c_1 \tilde{\beta}^2 {\sigma}^2}{\mu^2 T} +  \frac{c_2  \tilde{\beta}^2R b^2}{\mu^2 T } + \Delta \exp(-\frac{\mu T}{RL}) + \Delta\exp(-\frac{\mu^2T}{n \tilde{\beta}R L^2}) \right)\nonumber\\
&\overset{\eqref{eqn:hgiqrwqrwq-daguda-pl},\eqref{eqn:hgiqrwqrwq-hfisadas-pl}}{=} \tilde{\cO}\left(  \frac{{\sigma}^2 L}{\mu^2 n T} + \Delta \exp(-\frac{\mu T}{RL}) \right)\overset{\eqref{eqn:gjowefmqwgweqx-pl}}{=} \tilde{\cO}\left(  \frac{{\sigma}^2 L}{\mu^2 n T} + \Delta \exp(-\sqrt{1-\beta}\frac{\mu T}{L}) \right)\nonumber.
\end{align}
\end{proof}

\section{Experiments}
\label{app-exp}

\subsection{Performance in terms of epochs}
Fig.~\ref{fig:cifar10_alpha1} depicts the performance of several algorithms in terms of the communicated messages. In Fig.~\ref{fig:cifar10_alpha1_appendix} we illustrate their performances in terms of epochs. It is observed that \ours achieves slightly better validation accuracy than other baselines in both CIFAR-10 and ImageNet dataset.

\begin{figure}[!h]
    \centering
    \includegraphics[width=0.4\textwidth]{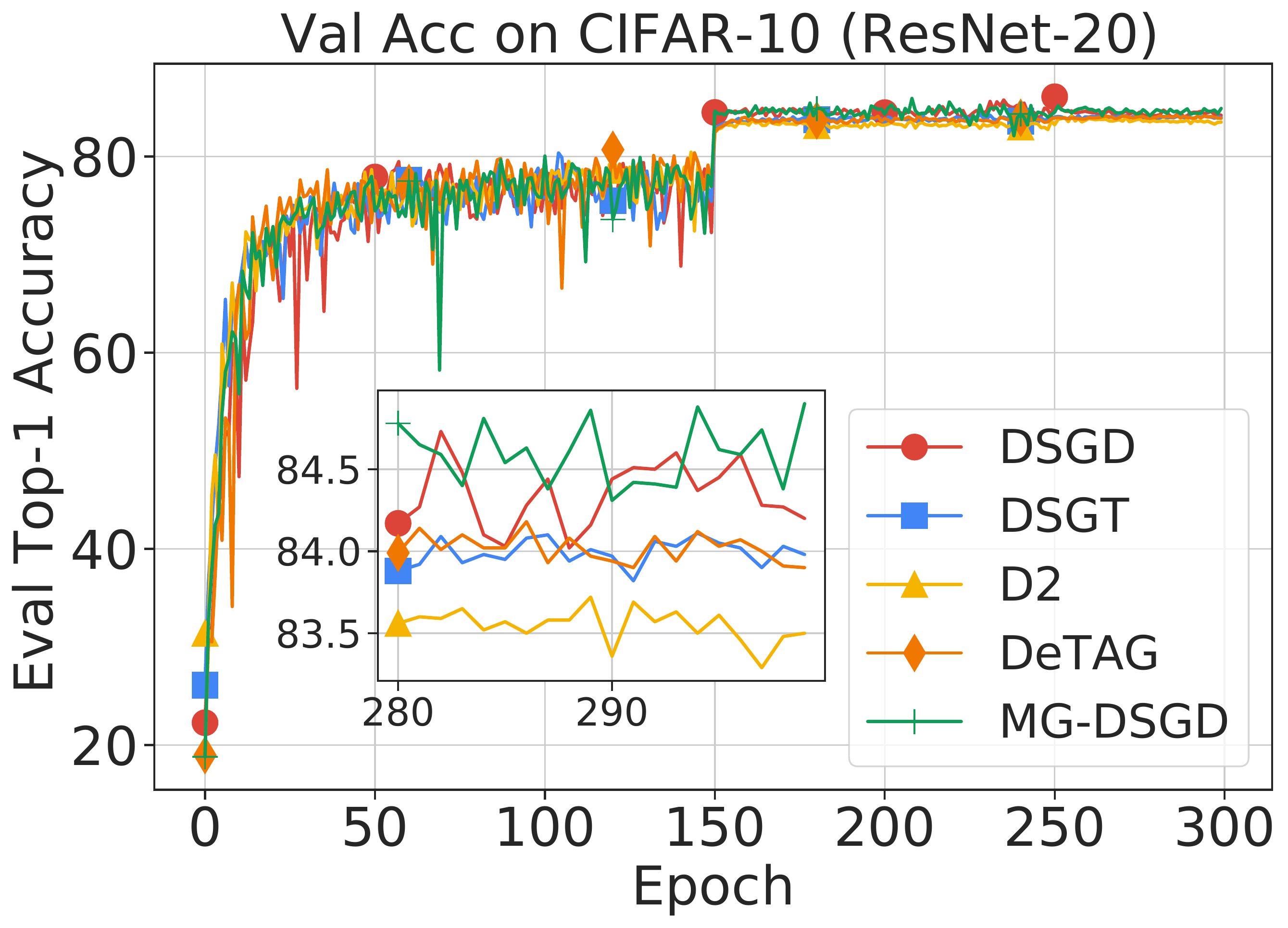}
    \includegraphics[width=0.4\textwidth]{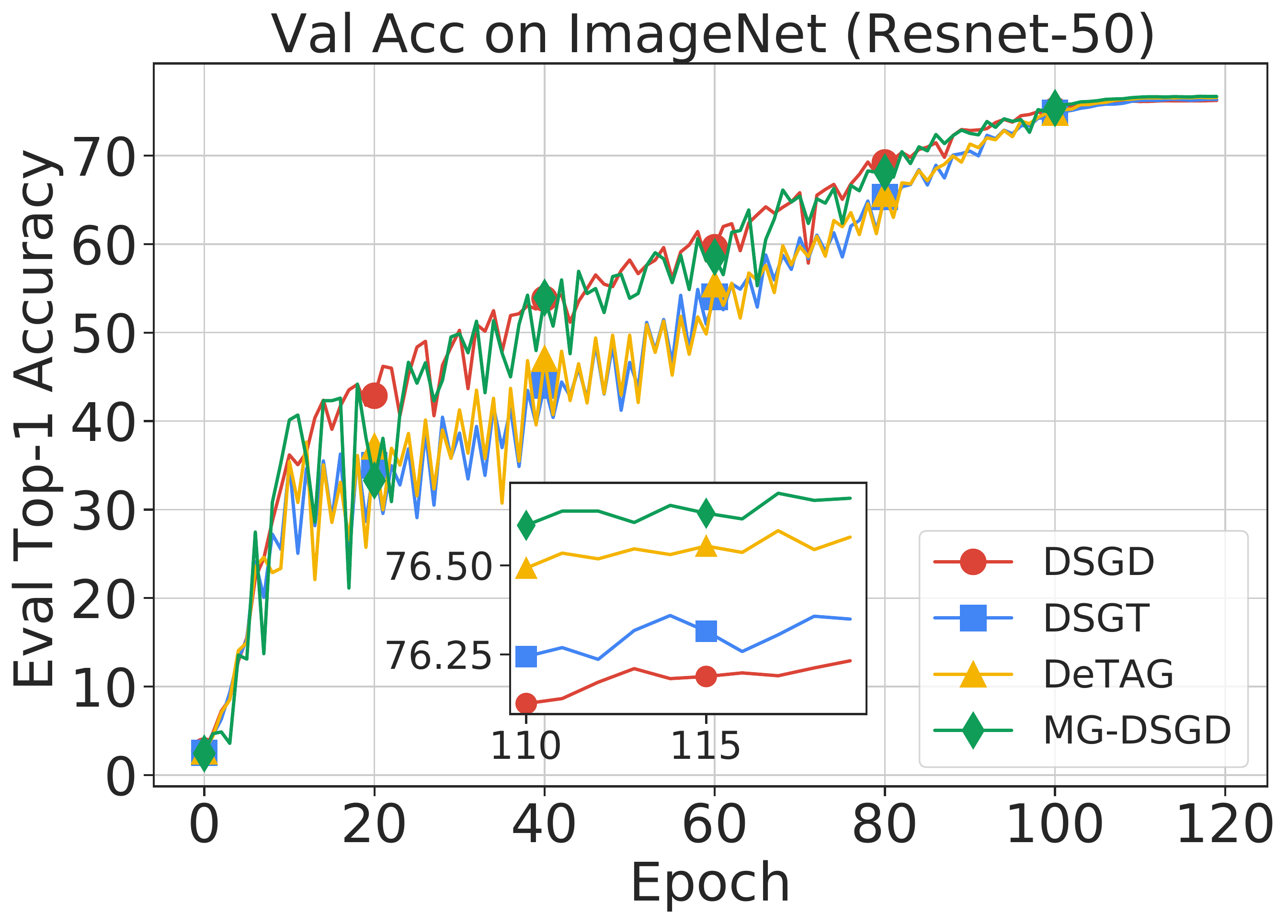}
    \vskip -3mm
    \caption{Convergence results in terms of validation accuracy. Left: CIFAR-10 with heterogeneous data ($\alpha=1$); Right: ImageNet with homogeneous data.}
    \label{fig:cifar10_alpha1_appendix}
\end{figure}

\subsection{Effects of accumulation rounds}
We further investigate the performance of the proposed \ours and \detag over different accumulation rounds. The ``M-'' prefix indicates methods with momentum acceleration. As shown in Table~\ref{table-accu-rounds}, \ours reaches almost the same performance as \detag with less communications. Both methods have performance degradation as accumulation round scales up. We conjecture that while gradient accumulation, which amounts to using large-batch samples in gradient evaluation, can help in the optimization and training stage, it may hurt the generalization performance because  gradient with less variance can lead the algorithm to a sharp local minimum. As a result, we recommend using \ours in applications that are friendly to large-batch training. 

\begin{table}[h!]
\centering

\caption{Effects of round numbers for \cifar dataset}
\begin{small}
\begin{sc}
\setlength{\tabcolsep}{0.95mm}{
\begin{tabular}{ccccccccc}

\toprule
Models &  \multicolumn{4}{c}{ResNet18} &  \multicolumn{4}{c}{ ResNet20 } \\

Rounds  & 2 & 3 & 4 & 5  & 2 & 3 & 4 & 5   \\
\midrule
m-DeTAG & 94.40±.13 & 93.50±.75 & 92.88±.50 & 92.57±.66 & 91.77±.25 & 91.50±.14 & 91.19±.26 & 90.62±.31 \\
m-Ours & 94.57±.05 & 93.95±.11 & 93.15±.25 & 92.01±.91 & 91.77±.09  & 91.17±.07 & 91.19±.13 & 90.44±.36 \\ 
DeTAG & 93.17±.18 & 92.59±.03 & 92.26±.23 & 91.48±.27 & 88.97±.11 & 89.02±.08 & 88.77±.09 & 88.37±.24 \\ 
Ours & 93.75±.12  & 92.72±.20 & 92.43±.32 & 91.78±.28 & 89.18±.08 & 88.92±.10 & 88.80±.15 & 88.52±.20 \\ 

\bottomrule
\end{tabular}}
\end{sc}
\end{small}
\label{table-accu-rounds}
\end{table}

\end{document}